\newcommand{\captionfonts}{\normalsize}
\long\def\@makecaption#1#2{%
  \vskip\abovecaptionskip
  \sbox\@tempboxa{{\captionfonts #1: #2}}%
  \ifdim \wd\@tempboxa >\hsize
    {\captionfonts #1: #2\par}
  \else
    \hbox to\hsize{\hfil\box\@tempboxa\hfil}%
  \fi
  \vskip\belowcaptionskip}
\renewcommand{\fnum@algorithm}{\fname@algorithm}
\newcommand*{\rom}[1]{\expandafter\@slowromancap\romannumeral #1@}
\newtheorem{thm}{Theorem}
\newtheorem{lem}{Lemma}
\newtheorem{dfn}{Definition}
\newtheorem{prp}{Proposition}
\newtheorem*{rmk}{Remark}
\newtheorem{rmk-2}{Remark}
\newtheorem{rmk-3}{Remark}
\newtheorem{rmk-4}{Remark}
\newtheorem{rmk-5}{Remark}
\newtheorem{rmk-6}{Remark}
\newtheorem{rmk-7}{Remark}
\newtheorem{rmk-8}{Remark}
\newtheorem{cl}{Corollary}
\begin{document}
\hspace{13.9cm}

\ \vspace{20mm}\\

{\LARGE \flushleft Theoretical Exploration of Solutions of Feedforward ReLU Networks}

\ \\
{\bf \large Changcun Huang}\\
{cchuang@mail.ustc.edu.cn}\\
%


\thispagestyle{empty}
\markboth{}{NC instructions}
\ \vspace{-0mm}\\
%
\begin{center} {\bf Abstract} \end{center}
This paper aims to interpret the mechanism of feedforward ReLU networks by exploring their solutions for piecewise linear functions, through the deduction from basic rules. The constructed solution should be universal enough to explain some network architectures of engineering; in order for that, several ways are provided to enhance the solution universality. Some of the consequences of our theories include: Under affine-geometry background, the solutions of both three-layer networks and deep-layer networks are given, particularly for those architectures applied in practice, such as multilayer feedforward neural networks and decoders; We give clear and intuitive interpretations of each component of network architectures; The parameter-sharing mechanism for multi-outputs is investigated; We provide an explanation of overparameterization solutions in terms of affine transforms; Under our framework, an advantage of deep layers compared to shallower ones is natural to be obtained. Some intermediate results are the basic knowledge for the modeling or understanding of neural networks, such as the classification of data embedded in a higher-dimensional space, the generalization of affine transforms, the probabilistic model of matrix ranks, and the concept of distinguishable data sets.

\ \\[-2mm]
{\bf Keywords:} ReLU, feedforward neural network, piecewise linear function, affine geometric, overparameterization solution

\section{Introduction}
The main desire for theories of engineering is why a neural network used in practice works so well, particularly for those called \textsl{deep learning} with excellent performances in recent years \citep*{LeCun2015}. The unravelling of this ``black box'' is vital to both the instruction of parameter settings and the further development with proper guidance. Furthermore, the successful applications of deep learning in biology \citep*{Jumper2021} and mathematics \citep*{Davies2021} indicate the prospect of neural networks in scientific areas, strengthening the importance of the explainable issues.

However, there's still a great gap between the theory and the application, such that many experimental results have not yet been well understood. The main purpose of this paper is to develop some basic principles of feedforward ReLU networks, through studies analogous to theoretical physics, paving the way for the interpretation of neural networks of engineering.

\subsection{Methodology of Theoretical Physics}
We want to introduce the methodology of theoretical physics to develop the theory of neural networks in a series of papers, with the expectation that the knowledge of this area could be formulated in a more precise and systematic way, and that there may exist a unified theoretical framework underlying widespread phenomena as the realm of physics. Thus, it's necessary to first clarify what this methodology actually means and how it could be employed in neural networks.

A metaphysical spirit of theoretical physics is the deduction like Euclidean geometry of the \textsl{Elements}, which uses simple or succinct axioms to explain more complicated facts. The simplicity of axioms are manifested in two ways: first, they should be as less as possible, which is sometimes referred to as ``Occam's razor''; and second, each of them should be simple enough to embody phenomena as widely as possible.

The selection of axioms by the above two principles is due to the reason of both elegance and practical considerations, especially for the latter. The simple or radical facts tend to give general explanations, since they occur more frequently and are more likely to be the bases of other ones. For instance, Euclidean geometry has only five axioms, and there are only three laws of classical mechanics developed by Newton,
which are all intuitive and easily understood.

By this methodology, we could examine the theories of ReLU \citep*{Nair2010,Glorot2011} networks that arose after the popularity of deep learning. The key point is the solution to the approximation or interpolation, and there are mainly four categories of original ideas: hinging hyperplanes \citep*{Arora2018,Wang2005},  polynomial intermediate methods \citep*{Yarotsky2017, Liang2017, Telgarsky2015}, wavelets \citep*{Daubechies2019,Shaham2018, Huang2020}, and piecewise linear or constant constructions \citep*{Shen2021,Huang2020}.

Despite elaborately designed, the above solutions are probably not the ones that appeared in engineering. Most of them are based on the architectures that had not been applied, since there exist regular subnetworks whose parameters or architectures are fixed as the basic components of the whole network, which are rarely seen in practice however. Thus, it is difficult for them to explain the neural networks of engineering, as theoretical physics does in understanding widespread natural phenomena.

As an initial step, we will not construct an apparently rigorous deductive system including formally defined axioms. However, the methodology of theoretical physics is reflected in four ways in our paper. First, there are three basic facts or principles underlying our deduction: theorem 1 of section 3.1 is related to the final output of the last layer; theorem 4 of section 6.1 is a principle for deep layers; lemma 4 of \citet{Huang2020} is about the affine transform. All of them are simple and easier to satisfy, with no strict restrictions on the architecture or parameter setting as the above cases.

Second, the deduction starts from obvious facts that are trivial to prove, and gradually achieves the final conclusions. Each result can be traced back to the origin clearly along this deduction route.

Third, the universality of the constructed solutions is also demonstrated by the solution generalizations of sections 7 and 8, as well as the probabilistic model of the appendix, all of which are based on the fundamental properties of neural networks. This ensures that our system grasps the basic phenomena of the solution space and could explain widespread solutions.

Fourth, throughout the paper, the theories are described under geometric backgrounds, providing a unified platform of the system. We know that conics and Riemannian geometry play an important role in Newton's classical mechanics and Einstein's general relativity, respectively, where the geometry acts as both demonstration methods and intuitively understanding ways. The affine-geometry background of this paper has the same effect, and could also enhance the integrity of the deduction as well as improve the readability of proofs.

One of the main goals of this paper is to find the solutions that might be or lead us to the ones that training method reaches. We think that if the solutions found are based on very simple rules or are widespread enough, they could be probably encountered by the training process, and thus resulting in the understanding of neural networks of engineering. So unlike some research concentrating on the topic of approximation rates (such as \citet{Ali2021} and \citet*{Lu2021}), the solution construction is our main concern and will be discussed in details.

\subsection{Interpolation Framework}
There are three reasons that the interpolation framework will be used. The first is that in practice, all the applications of neural networks are in terms of discrete-point manipulations, and hence the interpolation for discrete points is more directly related to the experimental background and is beneficial to the explanation of experimental results.

The second is that the interpolation has a close relationship with the approximation, and they can be converted into each other to some extent. Under the quadratic loss function, the global optimal solution of approximations with zero error (if any) corresponds to the solution of interpolations; and the former could be a sub-optimal version of the latter, when the training is not adequate enough or the chosen interpolant doesn't fit the task. In general, to discrete points, the interpolation framework is more representative as the optimal solution of the training, and is easier to manipulate.

The third is about the approximation to continuous functions. Due to the property of region dividing, it's trivial for most of the results of this paper to be generalized to the continuous-function approximation, and the error analysis is similar to that of \citet*{Huang2020}.

Our interpolation methods, particularly for those of deep-layer networks, are not as the usual case that each single point is the basic element to be interpolated, while a batch of points are regarded as a whole instead, with different batches interpolated independently. The technical method for that is the region dividing as in \citet*{Huang2020}, which is the reason that our results can be generalized to the continuous-function approximation with no additional effort.

\subsection{Evaluation of the Theories}
A criterion for evaluating a new theory is whether or not nontrivial or useful consequences could be obtained. We will give solutions of a three-layer network that is usually the subnetwork (last three layers) of convolutional neural networks \citep*{LeCun1989, LeCun1998, Krizhevsky2012}, multilayer feedforward neural networks \citep*{Roberts2021,LeCun2015,Lye2020,lee2018}, and the decoder of autoencoders \citep*{Hinton2006}, which are all widely used.

During the solution construction, we will explain some fundamental problems of ReLU networks, such as the parameter-sharing for multi-outputs, the interpretation of each component of network architectures, the mechanism of overparameterization solutions, the advantage of deep layers.

As part of a series of researches, the results of this paper are the foundation of our future work, and their importance will be further demonstrated by more applications or consequences.

\subsection{Arrangements and Contributions}
In general, all the explanations or thoughts are embedded in the proofs of the conclusions. Some results have remarks, where the theme, application or comment of the result that we want to emphasize is usually given.

The paper is organized as follows. Section 2 is the preliminary to the whole paper, in which we'll introduce the notation of network architectures, several basic concepts, and some notes that will be used throughout this paper.

Sections 3 presents a framework of three-layer networks, including the model description and some elementary results. Theorem 1 is one of the basic principles that the deduction will be based on. The interference among hyperplanes (definition 7) is the main difficulty to be solved by this paper.

Section 4 investigates the mechanism of implementing a piecewise linear function via three-layer networks, with both the principle and construction method given. The concept of distinguishable data sets (definition 11) is fundamental to the interpretation of the solution of three-layer networks.

Section 5 is about the multi-outputs of three-layer networks. A key point is the parameter-sharing mechanism of the hidden layer (the proof of theorem 3). And we'll give a solution of multi-category classification via three-layer networks in corollary 4, which is related to the subnetwork of the last three layers of convolutional neural networks.

Section 6 is for deep-layer networks. \textsl{Interference-avoiding principle} (theorem 4) could yield independent subnetworks and is a solution for eliminating the interference among hyperplanes via deep layers. We will construct a piecewise linear function through a decoder-like network architecture (theorem 5 and lemma 6). In the remark of proposition 4, an advantage of deep layers is discussed.

Section 7 generalizes the results of section 6 to more universal network architectures. The main tool is the affine transform realized by overparameterization networks (theorems 7 and 8). We shall provide some basic knowledge, such as the geometric interpretation of some type of overparameterization (theorem 6), the processing of data embedded in a higher-dimensional space (corollary 7), the construction of an affine transform under overparameterization (proposition 6).

Section 8 deals with the mechanism of the final output of deep-layer networks (theorem 9) when the affine-transform generalization of section 7 is used, and applies the theories of sections 7 and 8 to some network architecture of engineering (proposition 8). The interpretation of overparameterization solutions associated with affine transforms is discussed in proposition 8.

Section 9 provides the main general results of this paper. The proof of lemma 11 investigates the parameter-sharing mechanism of deep-layer networks for multi-outputs. The solutions to some typical network architectures of engineering are presented in theorem 10 and corollary 11. The explanation of autoencoders is given in section 9.2. Section 10 summaries this paper by a discussion.

The appendix introduces a probabilistic model to measure the possibility of the rank of matrices, and the probability of affine transforms in terms of matrices could be derived from this model.

\section{Preliminaries}
This section introduces some notations, definitions, and notes, all of which are the basic knowledge or assumptions for further discussions.
\subsection{Notation of Architectures}
This paper will frequently refer to different types of network architectures; in order to describe them simply, some notations are introduced.

\begin{dfn}
Let $n^{(d)}$ be a depth-$d$ neural network satisfying the conditions that each layer has $n$ units, and the adjacent layers are fully connected with no skipping-layer connections. When $d = 1$, $n^{(1)}$ is a one-layer network with $n$ units.
\end{dfn}

\begin{dfn}
The product $n_1^{(d_1)}n_2^{(d_2)}$ of two neural networks $n_1^{(d_1)}$ and $n_2^{(d_2)}$ is a new one derived from fully connecting the last layer of $n_1^{(d_1)}$ with the first layer of $n_2^{(d_2)}$.
\end{dfn}

For example, a three-layer network can be expressed as $n^{(1)}m^{(1)}k^{(1)}$, which has $n$-dimensional input, $m$ units in the hidden layer, and $k$ units in the output layer. The notation $n^{(1)}m^{(d)}k^{(1)}$ represents a deep-layer network with $d$ hidden layers, each of which is composed of $m$ units.

\subsection{Affine Data Structure}
Because the main results of this paper are based on discrete data points, we present the following definition to describe the effect of affine transforms on discrete points.

\begin{dfn}
The data structure of data set $D$ of $n$-dimensional space is the abbreviated representation of its affine-geometry properties, which are preserved by affine transforms, including collinearity, parallelism, dimensionality, and so on.
\end{dfn}

Note that if a data set $D$ is contained in a region $\Omega$ of an arrangement of hyperplanes \citep*{Stanley2012}, and an affine transform of $\Omega$ maps $D$ to $D'$, then the data structure of $D$ is equivalent to that of $D'$.

\subsection{Activation of Units}
\begin{dfn}
If the output of a unit $u$ with respect to point $\boldsymbol{x}$ is nonzero, we say that $u$ is activated by $\boldsymbol{x}$. The point $\boldsymbol{x}$ may be either directly from the previous layer or indirectly from a layer that skips some intermediate ones. When the point $\boldsymbol{x}$ is replaced by a data set $D$, it means that each element of $D$ activates $u$.
\end{dfn}

The next definition describes some basic phenomena of the activation of a layer and will be frequently mentioned throughout this paper.
\begin{dfn}
To network $n^{(1)}m^{(1)}$, if we say that data set $D$ of the $n$-dimensional input space simultaneously activates the $m$ units of the next layer, it means that any $\boldsymbol{x} \in D$ activates each of the $m$ units. And if to each of the $m$ units, there exists $\boldsymbol{x} \in D$ that activates it, but not necessarily the case of $\boldsymbol{x}$ activating all of the $m$ units, we say that $D$ partially activates the $m$ units of the next layer.
\end{dfn}

\subsection{Several Notes}
\begin{itemize}
\item[1.]  To a weight matrix $\boldsymbol{W}$ of size $n \times m$ or $m \times n$ for $m \ge n$, we assume that its rank is $n$. The reasonableness of this assumption lies in two aspects. One is that this condition can be easily satisfied by construction methods, as will be shown in this paper. The other is that under a probabilistic model of the appendix, the probability of $\text{rank}(\boldsymbol{W}) = n$ is 1 (theorem 12).

\item[2.] Suppose that hyperplane $l$ of $n$-dimensional space is derived from a ReLU, in the sense that its equation $\boldsymbol{w}^T\boldsymbol{x} + b = 0$ is associated with the input sum $\boldsymbol{w}^T\boldsymbol{x} + b$ to a ReLU. Let $l^+$ and $l^0$ be the two parts of $n$-dimensional space separated by $l$, corresponding to the nonzero outputs and zero outputs of the ReLU, respectively. The intersection of $l_1^+$ and $l_2^+$ is denoted by $l_1^+l_2^+$, and the union is $l_1^+ + l_2^+$.

\item[3.] In multilayer networks, the index of the input layer is assumed to be 0; and the indices of hidden layers start from $1$. For simplicity, all the figures of neural networks ignore the biases, which do exist however.


\item[4.] A piecewise constant function is considered as a special case of a piecewise linear function and its realization is trivial if we can manage the piecewise linear case.

\item[5.] In a notation of network architectures, if the output layer is labeled by symbol $'$, it means that its each unit is a linear one (marked by ``$\Sigma$'' in figures) with no bias, which only combines the outputs of the previous layer linearly. For example, $n^{(1)}m^{(1)}1'^{(1)}$ and $n^{(1)}m^{(1)}k'^{(1)}$.

\item[6.] To data set $D$ of $n$-dimensional space, we always assume that its cardinality $|D|$ is finite, unless it is a region of an arrangement of hyperplanes.

\item[7.] Let $\sigma(x) = \max(0, x)$ be the activation function of a ReLU. Since the output of a unit with respect to input $\boldsymbol{x}$ is $\sigma(\boldsymbol{w}^T\boldsymbol{x} + b)$, where $\boldsymbol{w}^T\boldsymbol{x} + b$ can be considered as coming from the equation $\boldsymbol{w}^T\boldsymbol{x} + b = 0$ of a hyperplane $l$, we sometimes say $\sigma(\boldsymbol{w}^T\boldsymbol{x} + b)$ is the output of hyperplane $l$; and the term \textsl{activated hyperplane} of a point means that its output with respect to this point is nonzero.
\end{itemize}

\section{Three-Layer Network}
We present a general framework of three-layer networks, including an elementary investigation and a mathematical description of the problems to be solved. Part of the following contents of section 3.1 had been mentioned by \citet*{DeVore2021}; however, our description emphasizes the theme of this paper, such as the concept of activated hyperplanes.

\subsection{Number of Piecewise Linear Components}

\begin{figure}[!t]
\captionsetup{justification=centering}
\centering
\subfloat[A three-layer network.]{\includegraphics[width=2.1in, trim = {4cm 4cm 5cm 2cm}, clip]{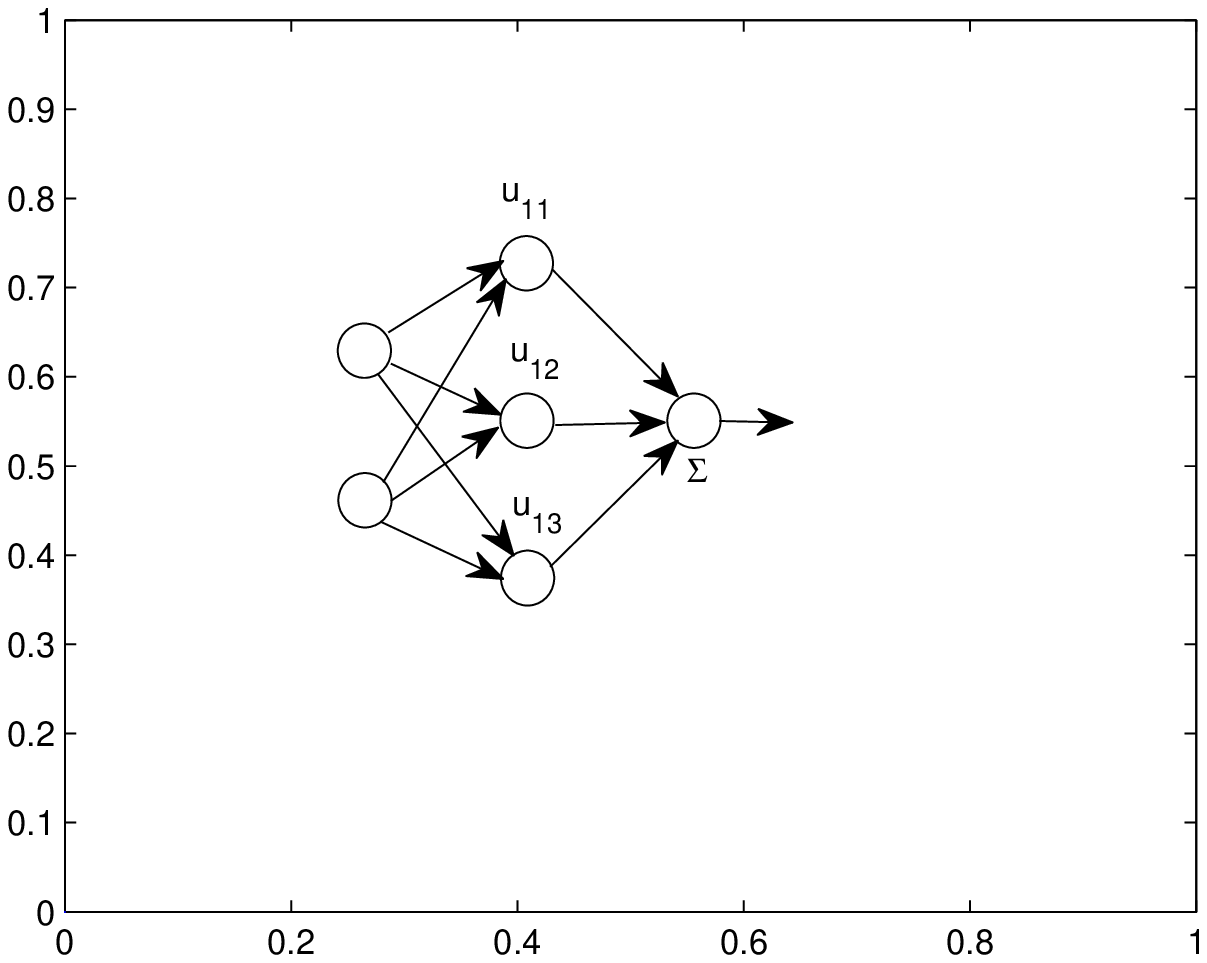}} \quad \quad \quad
\subfloat[Subdomains of a piecewise linear function.]{\includegraphics[width=2.1in, trim = {5.6cm 3.5cm 3.5cm 2cm}, clip]{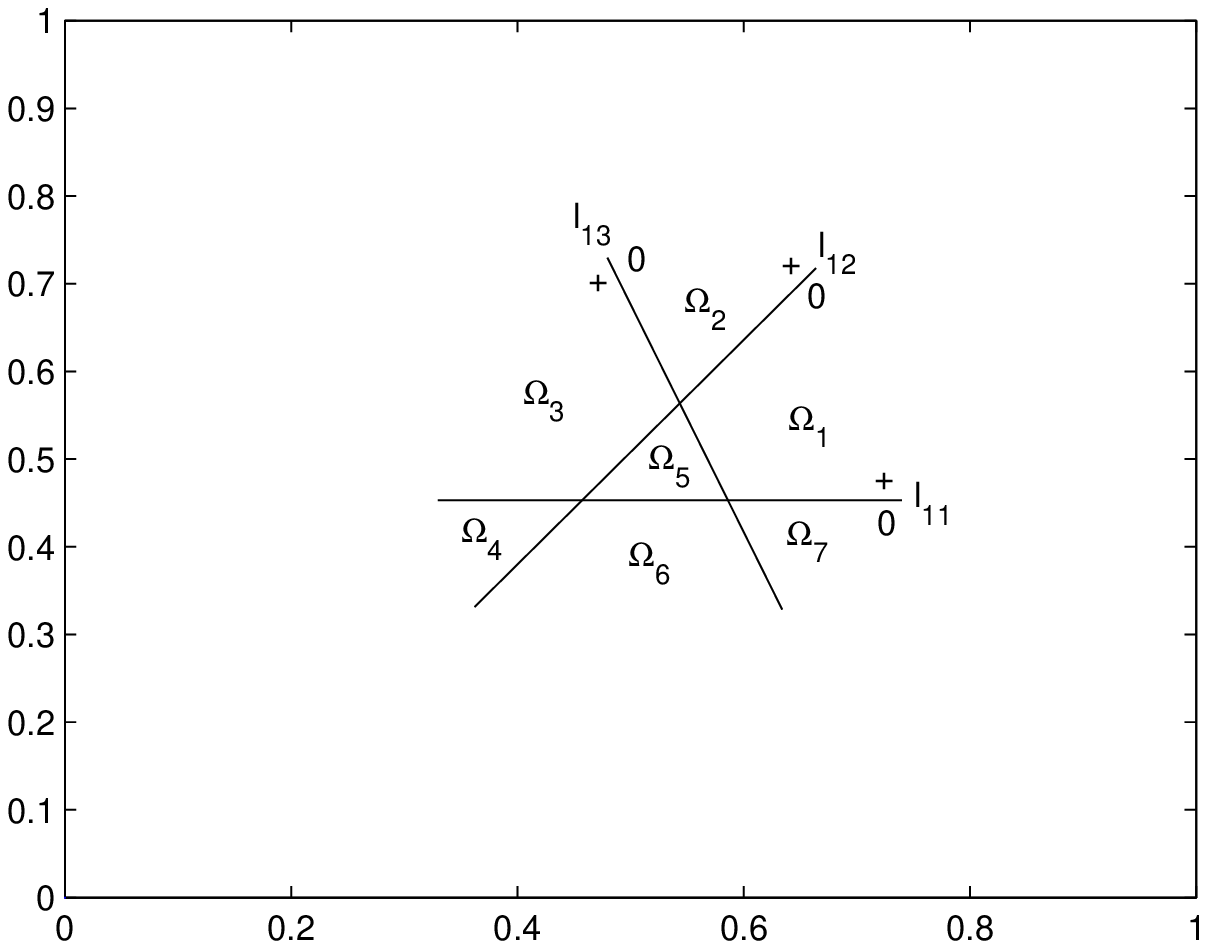}}
\caption{Number of piecewise linear components.}
\label{Fig.1}
\end{figure}

Figure \ref{Fig.1}a is a three-layer network and each line $l_{1i}$ for $i = 1, 2, 3$ of Figure \ref{Fig.1}b corresponds to the unit $u_{1i}$ of Figure \ref{Fig.1}a. Regions $\Omega_1, \Omega_2, \cdots, \Omega_7$ of Figure \ref{Fig.1}b are subdomains that comprise the domain of a piecewise linear function, on each of which a linear function is defined.

Let $\boldsymbol{w}_i^T\boldsymbol{x} + b_i = 0$ be the equation of line $l_{1i}$. Note that subdomain $\Omega_1$ is in fact the region $l_{11}^+l_{12}^0l_{13}^0$. The nonzero output of the network with respect to $\boldsymbol{x} \in \Omega_1$ is $y = \alpha_1(\boldsymbol{w}_1^T\boldsymbol{x}$ + b), where $\alpha_1$ is the output weight of $u_{11}$. This means that only line $l_{11}$ is relevant to the nonzero output with respect to $\Omega_1$, or we say that $\Omega_1$ is only influenced by $l_{11}$. Similarly, subdomain $\Omega_2 = l_{11}^+l_{12}^+l_{13}^0$ is influenced by both $l_{11}$ and $l_{12}$, and $\Omega_3$ by all the three lines. We can see that other subdomains $\Omega_4, \cdots, \Omega_7$ also have their own distinct influencing lines.

Because the parameters of the output layer are the same for all the subdomains, different linear functions on each subdomain can only be obtained by the control of the activation of the units of the hidden layer, which is related to the influencing lines of a subdomain as discussed above. To this example, since each subdomain $\Omega_j$ for $j = 1, 2, \cdots, 7$ has its own distinct influencing lines, the linear functions on them could be different from each other.

We introduce some terminologies to generalize the above example. In combinatorial geometry \citep*{Stanley2012}, a finite set $\mathcal{H}$ of hyperplanes of $n$-dimensional space $\mathbb{R}^n$ located in certain positions is called an \textsl{arrangement}; and a \textsl{region} of arrangement $\mathcal{H}$ is a connect component of $\mathbb{R}^n-\mathcal{H}$. The number of regions formed by an arrangement had been intensively studied \citep*{Dimca2017,Stanley2012,Zaslavsky1975}. Figure \ref{Fig.1}b is an example of an arrangement of three lines whose region number is seven.

\begin{dfn}
To network $n^{(1)}m^{(1)}$, let $D$ be a data set of the $n$-dimensional input space. Under an arrangement of $m$ hyperplanes formed by the first layer, the activated hyperplanes (units) of $D$ are those producing nonzero output with respect to $D$. The set of activated hyperplanes of $D$ is denoted by $l^+(D)$.
\end{dfn}

\begin{thm}
The number of linear components of a piecewise linear function $f: \mathbb{R}^n \to \mathbb{R}$ produced by network $n^{(1)}m^{(1)}1'^{(1)}$ equals the number of regions of the arrangement of $m$ hyperplanes formed by the $m$ units of the hidden layer.
\end{thm}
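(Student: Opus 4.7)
The plan is to identify each region of the arrangement $\mathcal{H}=\{l_1,\ldots,l_m\}$ formed by the $m$ hidden units with a single linear piece of $f$, and then argue these pieces are pairwise distinct. Two inequalities are needed: number of linear components $\le$ number of regions, and the reverse.

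For the first direction, I would fix an arbitrary region $\Omega$ of $\mathcal{H}$. Since $\Omega$ is a connected component of $\mathbb{R}^n\setminus\mathcal{H}$, the sign of each $\boldsymbol{w}_i^T\boldsymbol{x}+b_i$ is constant on $\Omega$; consequently the activation set $S(\Omega)=\{i: l_i\in l^+(\Omega)\}$ does not depend on $\boldsymbol{x}\in\Omega$. Because the output layer is bias-free and linearly combines the hidden outputs with weights $\alpha_i$ (per Note~5), the network output on $\Omega$ is
\[
f|_\Omega(\boldsymbol{x}) \;=\; \sum_{i\in S(\Omega)} \alpha_i\bigl(\boldsymbol{w}_i^T\boldsymbol{x}+b_i\bigr),
\]
a single affine function of $\boldsymbol{x}$. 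This immediately gives the inequality "number of linear components $\le$ number of regions of $\mathcal{H}$."

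For the reverse inequality, I would show that $f|_\Omega\neq f|_{\Omega'}$ whenever $\Omega$ and $\Omega'$ are distinct regions. Any two regions of an arrangement are connected by a walk through adjacent regions, so it suffices (for the adjacent case) to observe that if $\Omega,\Omega'$ are separated by exactly one hyperplane $l_j$ then
\[
f|_\Omega - f|_{\Omega'} \;=\; \pm\,\alpha_j\bigl(\boldsymbol{w}_j^T\boldsymbol{x}+b_j\bigr),
\]
which is a nonzero affine function because $\alpha_j\neq 0$ and $\boldsymbol{w}_j\neq \boldsymbol{0}$ by the standing nondegeneracy assumption of Note~1. For general (non-adjacent) $\Omega,\Omega'$, the difference takes the form $\sum_{i\in S(\Omega)\triangle S(\Omega')}\pm\alpha_i(\boldsymbol{w}_i^T\boldsymbol{x}+b_i)$, which must be shown not to vanish identically.

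The main obstacle is exactly this last non-cancellation statement, since for $m>n+1$ the collection of affine functions $\{\boldsymbol{w}_i^T\boldsymbol{x}+b_i\}_{i=1}^{m}$ is linearly dependent in general, so cancellation across a symmetric difference cannot be ruled out on purely combinatorial grounds. I would discharge it via genericity: under the rank-$n$ hypothesis of Note~1 together with the probabilistic model of the appendix, the event that some signed subset sum of the affine functions with coefficients $\pm\alpha_i$ vanishes identically is a proper algebraic condition on the parameters $(\boldsymbol{w}_i,b_i,\alpha_i)$ and therefore has probability zero. Combining the two inequalities yields the claimed equality between the number of linear components of $f$ and the number of regions of the arrangement formed by the hidden layer.
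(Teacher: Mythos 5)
Your first inequality is exactly the paper's argument made precise: the paper's entire proof is the one-sentence observation that each region of the arrangement carries a distinct set of activated hyperplanes, hence its own local affine function, and your computation of $f|_\Omega(\boldsymbol{x})=\sum_{i\in S(\Omega)}\alpha_i(\boldsymbol{w}_i^T\boldsymbol{x}+b_i)$ is the careful version of that. Where you genuinely depart from the paper is the reverse inequality: the paper simply asserts distinctness of the linear pieces from distinctness of the activation sets and stops, whereas you correctly observe that distinct activation sets do not by themselves force distinct affine functions (cancellation across the symmetric difference, or $\alpha_j=0$, would merge pieces). That observation is a real gap in the theorem as stated --- e.g.\ taking all $\alpha_i=0$ gives one linear component no matter how many regions there are --- so the equality can only hold generically, and your instinct to discharge it by a genericity argument is the right repair.

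Be aware, though, that the genericity you invoke is not actually available in the paper: Note~1 and Theorem~12 of the appendix concern only the rank of the \emph{hidden-layer} weight matrix under a uniform-on-the-sphere model, and say nothing about the output weights $\alpha_i$ or about signed subset sums $\sum_{i\in S(\Omega)\triangle S(\Omega')}\pm\alpha_i(\boldsymbol{w}_i^T\boldsymbol{x}+b_i)$ vanishing identically. Your claim that this vanishing is a proper algebraic condition of measure zero is true under any continuous joint distribution on $(\boldsymbol{w}_i,b_i,\alpha_i)$, but you would need to state that as an additional hypothesis rather than attribute it to Note~1. With that caveat made explicit, your proof is strictly more complete than the paper's: unconditionally you get ``number of linear components $\le$ number of regions,'' and for generic parameters you get equality, which is the honest reading of the theorem.
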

\begin{proof}
As the example of Figure \ref{Fig.1}, to an arrangement of $m$ hyperplanes, each region is associated with a distinct set of activated hyperplanes that produces the local linear function, which follows the conclusion.
\end{proof}

\begin{cl}
To network $2^{(1)}m^{(1)}1'^{(1)}$ with two-dimensional input, the number of linear components of output piecewise linear functions is
\begin{equation}
\mathcal{N}_2 = 1 + m + \binom{m}{2} - \sum_{i = 3}^{m}k_i\binom{i - 1}{2} - \sum_{j = 1}^{t}\binom{p_j}{2},
\end{equation}
where $k_i$ is the number of distinct intersection points of $i$ lines for $i \ge 3$, and there are $t$ classes of parallel lines with different directions, each having $p_j$ lines.
\end{cl}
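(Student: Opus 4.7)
The plan is to apply Theorem 1 to reduce the corollary to a classical counting problem in combinatorial geometry: enumerate the regions of an arrangement of $m$ lines in $\mathbb{R}^2$ when some of them may be parallel or concurrent. Since Theorem 1 identifies $\mathcal{N}_2$ with the region count of the arrangement of $m$ lines formed by the hidden units, everything from here is planar line-arrangement combinatorics.

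First I would establish the incremental identity: if the lines are inserted one at a time in any order, then inserting the $k$-th line $l_k$ increases the region count by exactly $q_k+1$, where $q_k$ is the number of distinct points at which $l_k$ meets the previously inserted lines. The reason is that these $q_k$ points partition $l_k$ into $q_k+1$ pieces (segments and rays), each of which lies in a single existing region and splits it in two. Starting from the single region of the empty arrangement and summing over $k$ yields
\begin{equation*}
\mathcal{N}_2 \;=\; 1 + \sum_{k=1}^{m}(q_k+1) \;=\; 1 + m + \sum_{k=1}^{m} q_k.
\end{equation*}

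Next I would re-index $\sum_k q_k$ as a sum over intersection points. Any intersection point $P$ at which $m_P\ge 2$ lines meet contributes $1$ to $q_k$ precisely when the second, third, \ldots, $m_P$-th line through $P$ is added (it is already present on $l_k$ the first time a second line through $P$ appears, and thereafter $l_k$ passes through the same point, never creating an additional intersection). Hence $\sum_k q_k = \sum_P (m_P - 1)$. Splitting intersection points by multiplicity and writing $k_2$ for the number of simple intersections, this becomes $k_2 + \sum_{i\ge 3} k_i (i-1)$.

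Finally I would eliminate the auxiliary quantity $k_2$ by double-counting pairs of lines: every unordered pair of lines is either parallel or meets at a unique point, giving
\begin{equation*}
\binom{m}{2} \;=\; \sum_{j=1}^{t} \binom{p_j}{2} + k_2 + \sum_{i\ge 3} k_i \binom{i}{2}.
\end{equation*}
Solving for $k_2$, substituting, and using the identity $\binom{i}{2}-(i-1)=\binom{i-1}{2}$ collapses the sums to the claimed expression for $\mathcal{N}_2$.

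The only real obstacle is the bookkeeping in the last two steps; the geometric content is entirely contained in the incremental identity, while the remainder is a short algebraic manipulation built on the pair-counting identity above. No case analysis on the order of insertion is needed because $\sum_P(m_P-1)$ is manifestly order-independent.
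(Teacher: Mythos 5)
Your proposal is correct, and it reaches the formula by a genuinely different route from the paper. After the common first step of invoking Theorem~1 to identify $\mathcal{N}_2$ with the number of regions of the line arrangement, the paper simply quotes the region-count formula for planar arrangements from the cited literature (Dimca, 2017) and checks it on the example of Figure~1b; there is no derivation. You instead prove the formula from scratch: the incremental identity $\mathcal{N}_2 = 1 + m + \sum_k q_k$ (valid because regions of a line arrangement are convex, so each of the $q_k+1$ pieces of the inserted line splits exactly one region), the re-indexing $\sum_k q_k = \sum_P (m_P-1) = k_2 + \sum_{i\ge 3}k_i(i-1)$, the pair-counting identity $\binom{m}{2} = \sum_j\binom{p_j}{2} + k_2 + \sum_{i\ge 3}k_i\binom{i}{2}$ to eliminate $k_2$, and the simplification $\binom{i}{2}-(i-1)=\binom{i-1}{2}$; all of these steps check out and yield exactly equation~3.1. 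The paper's approach buys brevity and defers the combinatorics to a standard reference; yours buys self-containedness and makes visible why the correction terms for concurrent and parallel lines take the particular forms $k_i\binom{i-1}{2}$ and $\binom{p_j}{2}$, at the cost of a page of bookkeeping that the paper deliberately avoids.
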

\begin{proof}
Because $\mathcal{N}_2$ of equation 3.1 is the number of regions of an arrangement of $m$ lines on a plane \citep*{Dimca2017}, the conclusion holds by theorem 1. As an example, in Figure \ref{Fig.1}b, since $n = 2$, $m = 3$, $i = 2$ with $k_2 = 3$, and $t = 0$, by equation 3.1, $\mathcal{N}_2$ = 7.
\end{proof}

In $n$-dimensional space, an arrangement of $m$ hyperplanes $l_i$'s for $i = 1, 2, \cdots, m$ is called in general position, provided that \citep*{Stanley2012}
\begin{equation}
\begin{aligned}
m \le n &\Rightarrow \textnormal{dim}(l_1 \cap l_2 \cap \cdots \cap l_m) = n - m \\
m > n &\Rightarrow l_1 \cap l_2 \cap \cdots \cap l_m = \emptyset.
\end{aligned}
\end{equation}
\begin{cl}
To network $n^{(1)}M^{(1)}1'^{(1)}$ with $n$-dimensional input and $M$ hyperplanes (ReLUs) in the hidden layer, the number of linear components of its output piecewise linear function satisfies
\begin{equation}
\mathcal{N} \le \sum_{i=0}^{n}\binom{M}{i},
\end{equation}
where the equality holds when the $M$ hyperplanes are in general position. In terms of big-$O$ notation, $\mathcal{N}$ is $O(M^n)$, which is an exponential order with respect to the dimensionality $n$ of the input space.
\end{cl}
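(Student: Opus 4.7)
The plan is to reduce the claim to a purely combinatorial statement about hyperplane arrangements and then invoke (or re-derive) the classical Schl\"afli--Zaslavsky bound. By Theorem 1, $\mathcal{N}$ coincides with the number of regions $r(\mathcal{H})$ of the arrangement $\mathcal{H}$ of $M$ hyperplanes in $\mathbb{R}^n$ formed by the hidden-layer units. So the task reduces to showing $r(\mathcal{H}) \le \sum_{i=0}^{n} \binom{M}{i}$, with equality precisely when $\mathcal{H}$ is in general position in the sense of (3.2).

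For the bound itself, I would argue by induction on $M$ (with $n$ also varying in the induction hypothesis). Let $r_n(M)$ denote the maximal region count over arrangements of $M$ hyperplanes in $\mathbb{R}^n$, and take the base cases $r_0(M)=1$ and $r_n(0)=1$. Given an arrangement of $M-1$ hyperplanes and adding one more hyperplane $l_M$, the new hyperplane is cut by its intersections with the previous $M-1$ hyperplanes into an $(n-1)$-dimensional sub-arrangement on $l_M$ having at most $r_{n-1}(M-1)$ pieces, and each such piece slices exactly one pre-existing region of $\mathbb{R}^n$ into two. This yields the recurrence $r_n(M) \le r_n(M-1) + r_{n-1}(M-1)$; a straightforward induction using Pascal's identity
\[
\binom{M}{i} = \binom{M-1}{i} + \binom{M-1}{i-1}
\]
then gives $r_n(M) \le \sum_{i=0}^{n} \binom{M}{i}$. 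For the equality statement, I would observe that if $\mathcal{H}$ is in general position as in (3.2), then the induced sub-arrangement on $l_M$ is again in general position in $\mathbb{R}^{n-1}$, so by the induction hypothesis both inequalities above are equalities, and the region count attains $\sum_{i=0}^{n} \binom{M}{i}$.

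Finally, the big-$O$ statement follows immediately: for fixed $n$, the sum $\sum_{i=0}^{n} \binom{M}{i}$ is a polynomial in $M$ of degree $n$ whose leading term is $\binom{M}{n} \sim M^n/n!$, so $\mathcal{N} = O(M^n)$ as $M \to \infty$, and this order is exponential in the ambient dimension $n$ (the phrase in the statement).

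I do not expect a serious obstacle here, since the region-counting formula is standard in the combinatorial-geometry literature already cited in the paper (Stanley 2012, Zaslavsky 1975, Dimca 2017), and one may alternatively simply invoke it. The only delicate point in writing out the inductive step is justifying why each new face created on $l_M$ bisects a unique pre-existing region; this is the standard ``sweep'' argument and can be dispatched by noting that the two sides of $l_M$ lie in the same region of the $(M-1)$-arrangement near any such face. If the proof is kept short, citing Stanley 2012 for the formula and then applying Theorem 1 is the most economical route.
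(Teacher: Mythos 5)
Your proposal is correct and follows essentially the same route as the paper: reduce via Theorem 1 to counting regions of the hyperplane arrangement, invoke the classical bound $\sum_{i=0}^{n}\binom{M}{i}$ with equality in general position, and read off the degree-$n$ polynomial growth in $M$. The only difference is that you additionally sketch an inductive derivation (via the recurrence $r_n(M)\le r_n(M-1)+r_{n-1}(M-1)$ and Pascal's identity) of the region-count formula, whereas the paper simply cites it from the combinatorics literature, exactly as you note in your final remark.
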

\begin{proof}
In $n$-dimensional space, the number of regions of an arrangement of $M$ hyperplanes is less than or equal to $\sum_{i=0}^{n}\binom{M}{i}$ \citep*{Dimca2017}, where the equality occurs when the $M$ hyperplanes are in general position by equation 3.2. Thus, inequality 3.3 holds by theorem 1.

To the term $\sum_{i=0}^{n}\binom{M}{i}$ of inequality 3.3, if we treat $M$ as the variable and $n$ as a constant, it is a polynomial of degree $n$. So $\mathcal{N}$ is $O(M^n)$. To $O(M^n)$, if $n$ is considered as the variable with $M$ fixed, it is an exponential order.
\end{proof}

\begin{rmk-4}
Corollary 2 tells us that as the dimensionality of the input space increases, the number of linear components of a piecewise linear function output by $n^{(1)}M^{(1)}1'^{(1)}$ will have an exponential growth in terms of the big-$O$ notation. On one hand, this may enhance the expressive capability of neural networks; on the other hand, it may become more difficult to reach a feasible solution for the training process.
\end{rmk-4}

\begin{rmk-4}
The solution to a desired piecewise linear function by a three-layer network will be constructed from the $\mathcal{N}$ different linear components of inequality 3.3.
\end{rmk-4}

\subsection{Geometric Description of Interpolation}
Given a set of data points $\{\boldsymbol{x}_i, y_i\}$'s for $i = 1, 2, \cdots, k$ coming from a piecewise linear function $f: \mathbb{R}^n \to \mathbb{R}$, namely $f(\boldsymbol{x}_i) = y_i$, we want to find a three-layer network $n^{(1)}m^{(1)}1'^{(1)}$ to interpolate them.

Denote the output of a ReLU by $\sigma(x) = max (0, x)$. Let $\boldsymbol{w}_j$ and $b_j$ for $j = 1, 2, \cdots, m$ be the weight vector and bias of the $j$th unit of the hidden layer, respectively; and the parameters of the output layer are denoted by $\alpha_j$'s. Let $l_j$ be the hyperplane corresponding to the $j$th unit of the hidden layer. Then the interpolation process can be expressed as
\begin{equation}
\boldsymbol{\Gamma}^{*}= \mathop{\arg}_{\boldsymbol{\Gamma}} \forall\boldsymbol{x}_i \sum_{j=1}^{m}\alpha_j\sigma(\boldsymbol{w}_j^T\boldsymbol{x}_i + b_j) = y_i,
\end{equation}
where $\boldsymbol{\Gamma}$ represents the set of parameters $\boldsymbol{w}_j$'s, $b_j$'s and $\alpha_j$'s of the network, and $\boldsymbol{\Gamma}^*$ is the solution.

Equation 3.4 has the disadvantage that it cannot indicate the geometric meaning relevant to theorem 1. We reformulate equation 3.4 as
\begin{equation}
\boldsymbol{\Gamma}^{*}= \mathop{\arg}_{\boldsymbol{\Gamma}} \forall\boldsymbol{x}_i\exists\Omega_j(\mathcal{D}_j(\boldsymbol{x}_i) \land \mathcal{I}_j(\boldsymbol{x}_i)),
\end{equation}
where the two predicates
\begin{equation}
\mathcal{D}_j(\boldsymbol{x}_i) := \boldsymbol{x}_i \in \Omega_j
\end{equation}
with $l^+(\boldsymbol{x}_i) = l^+(\Omega_j)$, and
\begin{equation}
\mathcal{I}_j(\boldsymbol{x}_i) := \sum_{l_k \in l^+(\Omega_j)}\alpha_k(\boldsymbol{w}_k^T\boldsymbol{x}_i + b_k) = y_i,
\end{equation}
where $l_k$ is an activated hyperplane of region $\Omega_j$ or point $\boldsymbol{x}_i$.

Equation 3.5 means that, to any $\boldsymbol{x}_i$, there exists a region $\Omega_j$ containing $\boldsymbol{x}_i$ such that $\{\boldsymbol{x}_i, y_i\}$ can be interpolated by the network. As shown in equation 3.6, predicate $\mathcal{D}_j(\boldsymbol{x}_i)$ represents that $\boldsymbol{x}_i$ is in subdomain $\Omega_j$. The interpolation process is denoted by predicate $\mathcal{I}_j(\boldsymbol{x}_i)$, where the sum is over all the activated hyperplanes of $\Omega_j$.

Under the geometric viewpoint of equation 3.5, it's easier for us to imagine how a feasible solution could be found. The following discussion will be on the basis of equation 3.5.

\subsection{Difficulty of Three-Layer Networks}
By equation 3.7, we see that each activated hyperplane $l_k$ of point $\boldsymbol{x}_i$ can influence the final output of the network. From the viewpoint of hyperplane $l_k$, half of the input space could activate it, or equivalently, $l_k$ has impact on half of the input space where the output of $l_k$ is nonzero.

To the implementation of a piecewise linear function, this means that when adjusting the parameters of $l_k$ to produce a linear function on subdomain $\Omega_j$, it could affect other linear functions that are defined on $l_k^+$. And if we try to eliminate the unwanted influence by modifying the parameters of the associative hyperplanes, the accomplished linear function on $\Omega_j$ may be disturbed again, when $\Omega_j$ happens to be in the influenced region of the modified hyperplanes. This procedure may occur recursively, resulting in the difficulty of a three-layer network in generating the piecewise linear function. We summarize the above discussion as:

\begin{dfn}
To the output of a three-layer network $n^{(1)}m^{(1)}1'^{(1)}$ with respect to a point of the $n$-dimensional input space, any new added ReLU of the hidden layer could influence half of the input space in terms of its nonzero output. We call this influence the half-space interference of hyperplanes or interference among hyperplanes.
\end{dfn}

Half-space interference is the main difficulty of constructing a piecewise linear function via three-layer networks. We will give solutions to this problem by the concept of distinguishable data sets in section 4, and by deep layers in section 6.

\section{Output Piecewise Linear Functions}
We proceed to realize a piecewise linear function via three-layer networks. First, introduce the parameter-setting method of the output layer in lemmas 1 and 2. Second, give a sufficient condition associated with the mechanism of the hidden layer in proposition 3. Finally, we prove a general result in theorem 2, demonstrating the interpolation capability of three-layer networks.

The previous works \citep*{DeVore2021,zhang2017} had presented a concise method to interpolate data with three-layer networks. Compared to that, first, due to geometric backgrounds, our result can be generalized to explain the parameter-sharing mechanism of multi-output networks (theorem 3 of section 5). Second, our theoretical framework can include their case, since theorem 1 is necessary for the interpolation via ReLU networks.

\subsection{Mechanism of Output Layer}
\begin{dfn}
To three-layer network $n^{(1)}m^{(1)}1'^{(1)}$, under an arrangement of $m$ hyperplanes derived from the hidden layer, the set of the activated hyperplanes of data set $D$ of the $n$-dimensional input space are denoted by $l^+(D) = \{l_{i_1}, l_{i_2}, \cdots, l_{i_k}\}$ with $k \le m$. The linear-output matrix of $D$ with respect to $l^+(D)$ is defined as
\begin{equation}
\boldsymbol{\mathcal{W}} = \begin{bmatrix}
\boldsymbol{w}_{i_1} & \boldsymbol{w}_{i_2} & \cdots & \boldsymbol{w}_{i_k} \\
b_{i_1} & b_{i_2} & \cdots & b_{i_k}
\end{bmatrix}
\end{equation}
whose size is $(n+1) \times k$, where each column is composed of the weight vector and bias of an activated hyperplane of $l^+(D)$.
\end{dfn}

\begin{lem}
To network $n^{(1)}m^{(1)}1'^{(1)}$ for $m > n$ with $n$-dimensional input, if a region $\Omega$ of an arrangement of $m$ hyperplanes formed by the hidden layer has at least $n + 1$ activated hyperplanes, and if the rank of the linear-output matrix $\boldsymbol{\mathcal{W}}$ of $\Omega$ is $n + 1$, then any linear function on $\Omega$ could be realized in the output layer.
\end{lem}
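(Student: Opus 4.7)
\medskip

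My plan is to reduce the claim to a linear-algebra surjectivity statement about the linear-output matrix $\boldsymbol{\mathcal{W}}$. First I would unwind the definitions. On the region $\Omega$, by definition of $l^+(\Omega)$, the non-activated hidden units contribute zero, and each activated unit $l_{i_j}$ contributes $\alpha_{i_j}(\boldsymbol{w}_{i_j}^T\boldsymbol{x}+b_{i_j})$ with the ReLU acting as the identity (no clipping). Hence the network output restricted to $\Omega$ is
\begin{equation*}
f(\boldsymbol{x}) \;=\; \sum_{j=1}^{k}\alpha_{i_j}\bigl(\boldsymbol{w}_{i_j}^T\boldsymbol{x}+b_{i_j}\bigr) \;=\; \Bigl(\sum_{j=1}^{k}\alpha_{i_j}\boldsymbol{w}_{i_j}\Bigr)^{\!T}\!\boldsymbol{x} \;+\; \sum_{j=1}^{k}\alpha_{i_j}b_{i_j},
\end{equation*}
which is itself affine in $\boldsymbol{x}$, as expected from theorem 1.

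Next I would fix an arbitrary target linear function $g(\boldsymbol{x})=\boldsymbol{a}^T\boldsymbol{x}+c$ on $\Omega$ and ask when $f\equiv g$ on $\Omega$. Since $\Omega$ has nonempty interior (it is a region of an arrangement, hence open in $\mathbb{R}^n$), two affine functions coinciding on $\Omega$ must have identical linear parts and identical constant terms. Matching those parts yields the single linear system
\begin{equation*}
\boldsymbol{\mathcal{W}}\,\boldsymbol{\alpha} \;=\; \begin{bmatrix} \boldsymbol{a} \\ c \end{bmatrix},
\qquad \boldsymbol{\alpha} = (\alpha_{i_1},\alpha_{i_2},\ldots,\alpha_{i_k})^T,
\end{equation*}
where $\boldsymbol{\mathcal{W}}$ is the $(n+1)\times k$ matrix of definition 9. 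The remaining non-activated output weights may be set to any value (I would simply put them to $0$) since they contribute nothing on $\Omega$.

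The heart of the argument is then surjectivity of $\boldsymbol{\alpha}\mapsto\boldsymbol{\mathcal{W}}\boldsymbol{\alpha}$. By hypothesis $k\ge n+1$ and $\mathrm{rank}(\boldsymbol{\mathcal{W}})=n+1$, so the column space of $\boldsymbol{\mathcal{W}}$ is all of $\mathbb{R}^{n+1}$. Therefore the above system is consistent for every right-hand side $(\boldsymbol{a},c)$, and a solution $\boldsymbol{\alpha}$ gives the desired output-layer weights. I do not foresee a genuine obstacle here: the only delicate point is justifying that equality of $f$ and $g$ on the open region $\Omega$ forces equality of their affine coefficients, which is immediate because a nonzero affine function cannot vanish on an open subset of $\mathbb{R}^n$. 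Hence any linear function on $\Omega$ is realizable, which is the claim.
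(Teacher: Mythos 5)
Your proposal is correct and follows essentially the same route as the paper's proof: write the network output on $\Omega$ as an affine function of $\boldsymbol{x}$, match the linear and constant coefficients against the target to obtain the system $\boldsymbol{\mathcal{W}}\boldsymbol{\alpha} = [\boldsymbol{a}^T, c]^T$, and invoke $\mathrm{rank}(\boldsymbol{\mathcal{W}}) = n+1$ with $k \ge n+1$ unknowns to guarantee a solution. Your added justification that coincidence of two affine functions on the open region forces equality of their coefficients is a small but welcome refinement the paper leaves implicit.
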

\begin{proof}
Let $y = \boldsymbol{w}^T\boldsymbol{x} + b$ be the linear function on $\Omega$ to be realized by $n^{(1)}m^{(1)}1'^{(1)}$. Suppose that $\Omega$ has $k$ activated hyperplanes (i.e., the cardinality $|l^+(\Omega)| = k$) with $k \ge n+1$. By equation 3.5, to $\boldsymbol{x} \in \Omega$, the output of the network is $\sum_{l_i \in l^+(\Omega)}\alpha_i(\boldsymbol{w}_i^T\boldsymbol{x} + b)$, where $\alpha_i$ is the output weight of the $i$th unit of the hidden layer. Our final goal is
\begin{equation}
\sum_{l_i \in l^+(\Omega)}\alpha_i(\boldsymbol{w}_i^T\boldsymbol{x} + b_i) = \boldsymbol{w}^T\boldsymbol{x} + b.
\end{equation}

In equation 4.2, parameters $\boldsymbol{w}_i$'s and $b_i$'s of the activated hyperplanes are fixed to be constants, and weights $\alpha_i$'s are the unknowns to be solved. The coefficient of each entry of $\boldsymbol{x}$ of the left side must be equal to the coefficient of the corresponding one of the right side, contributing to $n$ linear equations; and the biases of both the two sides are also equal, resulting in another linear equation. So we have
\begin{equation}
\boldsymbol{\mathcal{W}}\boldsymbol{\alpha} = \boldsymbol{b},
\end{equation}
where $\boldsymbol{\mathcal{W}}$ is the linear-output matrix of region $\Omega$ with respect to $l^+(\Omega)$ whose size is $(n+1) \times k$, $\boldsymbol{\alpha}$ is a $k \times 1$ vector whose entries are $\alpha_i$'s,  and $\boldsymbol{b} = [\boldsymbol{w}^ T, b]^T$.

Note that in equation 4.3, the number $k$ of the unknowns satisfies $k \ge n + 1$. Thus if $\text{rank}(\boldsymbol{\mathcal{W}}) = n + 1$, we can always find a solution of $\boldsymbol{\alpha}$ to realize equation 4.2.
\end{proof}

\begin{rmk-2}
The purpose of using a linear unit in the output layer is that it can produce negative values. In practice, if the output of a unit is constrained to be positive, ReLU can still be used. To the latter case, the left side of equation 4.2 becomes $\sum_{l_i \in l^+(\Omega)}\alpha_i(\boldsymbol{w}_i^T\boldsymbol{x} + b_i) + \beta$, where $\beta$ is the bias input of the ReLU, which also has solutions by the similar method to this lemma, and so is the case of lemma 2.
\end{rmk-2}

\begin{rmk-2}
In this lemma, if region $\Omega$ is replaced by a data set $D \subset \Omega$, the conclusion still holds, and similarly for lemma 2 below.
\end{rmk-2}

\begin{lem}
Use the notations of lemma 1 and let $n+1 < k \le m$. We select $k'$ hyeperplanes from $l^+(\Omega)$ with $n+1 \le k' < k$ to form a set $H_1 \subset l^+(\Omega)$, and let $H_2 = l^+(\Omega) - H_1$. Suppose that the output weights of hyperplanes of $H_2$ are fixed. Then we can use $H_1$ to produce a desired linear function on $\Omega$ regardless of the influence of $H_2$, provided that the rank of the linear-output matrix $\boldsymbol{\mathcal{W}}'$ of $\Omega$ with respect to $H_1$ is $n+1$.
\end{lem}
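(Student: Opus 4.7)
The plan is to reduce Lemma 2 to the already-proved Lemma 1 by absorbing the contribution of $H_2$ into a shifted target linear function. The key observation is that every hyperplane in $H_2 \subset l^+(\Omega)$ is, by definition, activated on all of $\Omega$, so each such unit contributes the raw affine sum $\boldsymbol{w}_i^T\boldsymbol{x} + b_i$ (not zero) at every point of $\Omega$. Hence, with the output weights $\alpha_i$ for $l_i \in H_2$ fixed, the total contribution of $H_2$ to the network output on $\Omega$ is a single fixed linear function
\begin{equation*}
g(\boldsymbol{x}) \;=\; \sum_{l_i \in H_2}\alpha_i\bigl(\boldsymbol{w}_i^T\boldsymbol{x} + b_i\bigr) \;=\; \boldsymbol{u}^T\boldsymbol{x} + c,
\end{equation*}
where $\boldsymbol{u}$ and $c$ are determined entirely by the fixed parameters. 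This is the crucial step, because it converts the ``interference'' from $H_2$ into a deterministic additive term rather than a structural obstacle.

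With this in hand, I would restate the goal in the same form as equation 4.2 of Lemma 1 but with an adjusted right-hand side. Namely, the desired overall linear function on $\Omega$ is $y = \boldsymbol{w}^T\boldsymbol{x} + b$, and since the $H_2$ portion already contributes $g(\boldsymbol{x})$, the remaining burden on $H_1$ is to realize $y - g(\boldsymbol{x}) = (\boldsymbol{w} - \boldsymbol{u})^T\boldsymbol{x} + (b - c)$. This is itself a linear function, so I can write the requirement
\begin{equation*}
\sum_{l_i \in H_1} \alpha_i\bigl(\boldsymbol{w}_i^T\boldsymbol{x} + b_i\bigr) \;=\; (\boldsymbol{w} - \boldsymbol{u})^T\boldsymbol{x} + (b - c),
\end{equation*}
which is now exactly the setup of Lemma 1 applied only to the subset $H_1$ of activated hyperplanes.

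Matching coefficients of each coordinate of $\boldsymbol{x}$ and of the constant term then yields a linear system $\boldsymbol{\mathcal{W}}'\boldsymbol{\alpha}' = \boldsymbol{b}'$, where $\boldsymbol{\mathcal{W}}'$ is the $(n+1) \times k'$ linear-output matrix of $\Omega$ with respect to $H_1$, the unknown $\boldsymbol{\alpha}'$ is the $k' \times 1$ vector of the $\alpha_i$'s for $l_i \in H_1$, and $\boldsymbol{b}' = [(\boldsymbol{w}-\boldsymbol{u})^T,\; b-c]^T$. Since $k' \geq n+1$ and $\text{rank}(\boldsymbol{\mathcal{W}}') = n+1$ by hypothesis, the system is consistent (the column space of $\boldsymbol{\mathcal{W}}'$ is all of $\mathbb{R}^{n+1}$), so a solution $\boldsymbol{\alpha}'$ exists, producing the desired linear function on $\Omega$.

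The only subtle point, and the one to state explicitly, is the justification that $H_2$ genuinely contributes a \emph{single} linear function throughout $\Omega$ rather than a piecewise one: this is immediate because $H_2 \subset l^+(\Omega)$, so by Definition 7 every unit in $H_2$ is activated on all of $\Omega$, and the ReLU nonlinearity is inert there. Once this is noted, the rest is a direct reduction to Lemma 1 on the subsystem indexed by $H_1$, with no new difficulty beyond ensuring the rank condition guarantees surjectivity of $\boldsymbol{\alpha}' \mapsto \boldsymbol{\mathcal{W}}'\boldsymbol{\alpha}'$.
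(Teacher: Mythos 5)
Your proposal is correct and follows essentially the same route as the paper: the paper likewise absorbs the fixed contribution of $H_2$ into the constant right-hand side, obtaining $\boldsymbol{b}' = [\boldsymbol{w}^T - \sum_{l_{\nu}\in H_2}\alpha_{\nu}\boldsymbol{w}_{\nu}^T,\; b - \sum_{l_{\nu}\in H_2}\alpha_{\nu}b_{\nu}]^T$, which is exactly your $[(\boldsymbol{w}-\boldsymbol{u})^T,\; b-c]^T$, and then invokes the rank condition via Lemma 1. Your explicit remark that $H_2$ contributes a single linear function on all of $\Omega$ because its hyperplanes are activated everywhere on $\Omega$ is a welcome clarification of a point the paper leaves implicit.
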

\begin{proof}
We check that under the existence of $H_2$, if only $H_1$ is used, what the change of equation 4.3 is, and whether this change could influence the generation of the linear function $y = \boldsymbol{w}^T\boldsymbol{x} + b$. Equation 4.3 then becomes
\begin{equation}
\boldsymbol{\mathcal{W}}'\boldsymbol{\alpha}' = \boldsymbol{b}',
\end{equation}
where $\boldsymbol{\mathcal{W}}'$ is an $(n+1) \times k'$ matrix whose each column is the set of parameters of a hyperplane of $H_1$, and the entries of vector $\boldsymbol{\alpha}'$ with size $k' \times 1$ are the output weights of the hyperplanes of $H_1$. The information of $H_2$ is contained in the right side of equation 4.4, in the form of constant terms, viz.,
\begin{equation}
\boldsymbol{b}' = \begin{bmatrix}
\boldsymbol{w}^T - \sum_{l_{\nu} \in H_2}\alpha_{\nu}\boldsymbol{w}_{\nu}^T & b - \sum_{l_{\nu} \in H_2} \alpha_{\nu}b_{\nu}
\end{bmatrix}^T,
\end{equation}
where $\boldsymbol{w}_{\nu}$ and $b_{\nu}$ are parameters of a hyperplane of $H_2$, and $\alpha_{\nu}$ is the corresponding output weight, all of which are considered as constant values. Therefore, despite the additional $H_2$, the right side of equation 4.4 is still a constant-entry vector, having no influences on the existence of solutions of $\boldsymbol{\alpha}'$.

Because $k' \ge n+1$, by lemma 1, if the rank of $\boldsymbol{\mathcal{W}}'$ of equation 4.4 is $n+1$, we can find a solution of $\boldsymbol{\alpha}'$ to realize the desired linear function.
\end{proof}

\begin{dfn}
A discrete piecewise linear function is defined as
\begin{equation}
f: D \to \mathbb{R}
\end{equation}
where domain $D = \bigcup_{i=1}^{k}D_i \subset \mathbb{R}^n$ with $D_i \cap D_j = \emptyset$ for $i \ne j$ and $j=1, 2, \cdots, k$. Each subdomain $D_i$ is a data set composed of discrete points with a certain linear function defined on it, and the cardinality $|D_i|$ is finite.
\end{dfn}

\begin{dfn}
To data set $D$ of $n$-dimensional space, if we say that two hyperplanes $l_1$ and $l_2$, which are derived from two units of a layer of neural networks, have the same classification effect (or result) on $D$, or that $l_1$ classifies $D$ as $l_2$, it means that $D_1 \subset l_1^+l_2^+$ and $D_2 \subset l_1^0l_2^0$, where $D_1 \cup D_2 = D$ and $D_1 \cap D_2 = \emptyset$.
\end{dfn}

\begin{figure}[!t]
\captionsetup{justification=centering}
\centering
\includegraphics[width=2.3in, trim = {5.0cm 3.3cm 4cm 2.5cm}, clip]{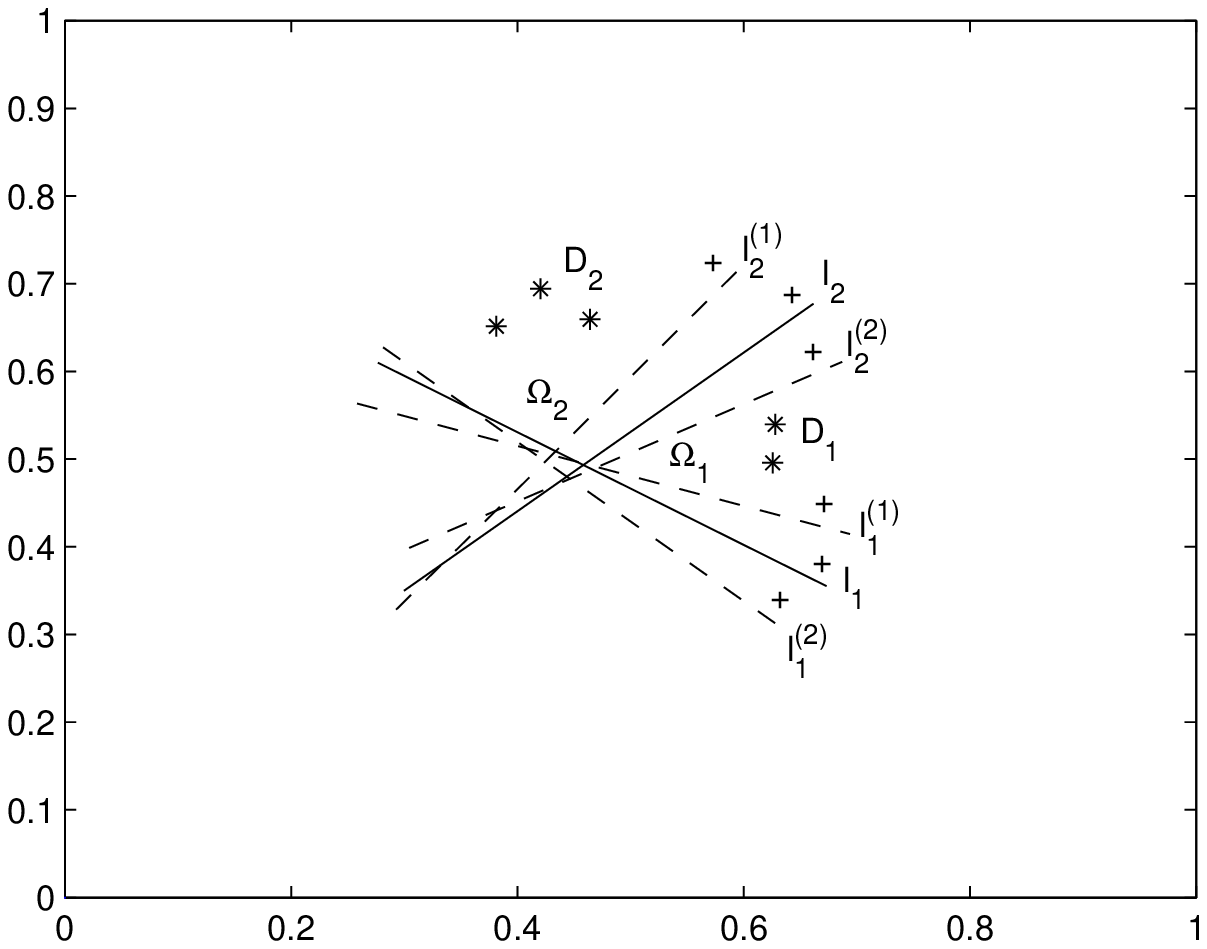}
\caption{Realization of a piecewise linear function.}
\label{Fig.2}
\end{figure}

The following proposition is an application of lemmas 1 and 2, as well as the basis of more complicated solutions given later.
\begin{prp}
To a discrete piecewise linear function of equation 4.6, suppose that its domain $D = D_1 \cup D_2$, and that $D_1 \subset l_1^+l_2^0$ and $D_2 \subset l_1^+l_2^+$, where $l_1$ and $l_2$ are two $n-1$-dimensional hyperplanes. Then a three-layer network $n^{(1)}m^{(1)}1'^{(1)}$ can realize it, provided that the number of the units of the hidden layer satisfies $m \ge2(n + 1)$.
\end{prp}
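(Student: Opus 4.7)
The plan is to split the $m \ge 2(n+1)$ hidden units into two disjoint groups $H_A$ and $H_B$ of size $n+1$ each (any remaining units are assigned hyperplanes whose positive half-spaces miss $D$ entirely, so they contribute nothing). I will arrange $H_A$ so that every hyperplane in $H_A$ is activated by all of $D = D_1 \cup D_2$, and $H_B$ so that every hyperplane in $H_B$ is activated by every point of $D_2$ but by no point of $D_1$. The idea is then to use $H_A$ to generate the target affine function $f_1$ on $D_1$ (which, by the linearity of the combined sum, automatically produces the same affine function on $D_2$), and to use $H_B$ to correct the output on $D_2$ from $f_1$ to $f_2$ without disturbing $D_1$ at all.

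For $H_A$, since $D$ is finite, I pick $n+1$ hyperplanes in general position whose positive half-spaces all contain $D$, e.g.\ by small perturbations of a single separating hyperplane. For $H_B$, I exploit the assumption that $D_1 \subset l_2^0$ while $D_2 \subset l_2^+$: small perturbations of $l_2$ produce $n+1$ hyperplanes that still classify the finite set $D$ the same way $l_2$ does, in the sense of definition 8. General-position choices in both cases ensure that the $(n+1)\times(n+1)$ linear-output matrices $\boldsymbol{\mathcal{W}}_A$ and $\boldsymbol{\mathcal{W}}_B$ have full rank $n+1$, which is justified by Note 1 of the preliminaries together with the probabilistic rank model of the appendix.

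With these groups fixed, I apply lemma 1 to $H_A$ on the data set $D_1$ (using the remark that $\Omega$ may be replaced by a contained data set): the rank condition on $\boldsymbol{\mathcal{W}}_A$ yields output weights on $H_A$ whose combined contribution realizes $f_1$ as a single affine function valid throughout the region where $H_A$ is simultaneously activated; since this region contains all of $D$, the $H_A$-contribution on $D_2$ is also $f_1$. On $D_2$ the activated hyperplanes form $H_A \cup H_B$, and applying lemma 2 with $H_1 := H_B$, $H_2 := H_A$ (whose weights are now frozen) and target linear function $f_2$, the rank condition on $\boldsymbol{\mathcal{W}}_B$ produces output weights on $H_B$ making the total output equal $f_2$ on $D_2$, with the fixed $H_A$-contribution absorbed into the constant vector $\boldsymbol{b}'$ of equation 4.5. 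On $D_1$, every hyperplane of $H_B$ is inactive, so the output there remains $f_1$, completing the interpolation.

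The main obstacle is achieving the activation patterns and the rank conditions simultaneously, but both follow from the finiteness of $D$ combined with general-position perturbations, so no delicate geometric construction is needed. This two-block strategy also explains the bound $m \ge 2(n+1)$: each of the two target linear pieces requires an $(n+1)$-block of hyperplanes to be realizable by lemma 1 or lemma 2.
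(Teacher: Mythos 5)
Your proof follows essentially the same route as the paper's: two blocks of $n+1$ hyperplanes obtained by small perturbations of $l_1$ and of $l_2$, lemma 1 applied to the first block to realize the linear function on $D_1$ (whose contribution extends affinely over $D_2$), and lemma 2 applied to the second block with the first block frozen to correct the output on $D_2$, the second block being inactive on $D_1$. The only cosmetic differences are that the paper secures the rank-$(n+1)$ condition deterministically via the explicit Vandermonde-type matrix of equation 4.8 rather than by a genericity appeal to Note 1 and the appendix (which strictly concern the weight matrix without biases), and that it handles $m > 2(n+1)$ by adding further activated perturbations of $l_1$ instead of hyperplanes whose positive half-spaces miss $D$.
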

\begin{proof}
\textbf{Case $m = 2(n+1)$}: The proof is constructive and begins with an example. In Figure \ref{Fig.2}, $D_1$ and $D_2$ are two linearly separable data sets with $D_1 \subset l_1^+l_2^0$ and $D_2 \subset l_1^+l_2^+$; for simplicity, only the nonzero-output mark ``+'' is labeled on each line.

To the linear function on $D_1$, according to line $l_1$, construct other two lines $l_1^{(1)}$ and $l_1^{(2)}$ to form a region $\Omega_1 = l_1^+l_1^{(1)+}l_1^{(2)+}$ such that $D_1 \subset \Omega_1$. So $l^+(D_1) = \{l_1, l_1^{(1)}, l_1^{(2)}\}$ and $|l^+(D_1)| = 3 = n+1 $ when $n = 2$. By lemma 1, if we can adjust the parameters of $l^+(D_1)$ such that the rank of the linear output matrix of $D_1$ with respect to $l^+(D_1)$ is $n+1 = 3$, then any linear function on $D_1$ could be realized by network $2^{(1)}3^{(1)}1'^{(1)}$.

We address the parameter setting of $l^+(D_1)$ in a general form. In $n$-dimensional space, let $\boldsymbol{w}_1^T\boldsymbol{x} + b_1 = 0$ be the equation of $l_1$, where
\begin{equation}
\boldsymbol{w}_1 = \begin{bmatrix}
w_{11}, w_{12}, \cdots, w_{1n}
\end{bmatrix}^T
\end{equation}
with $w_{11} \ne 0$ (this condition can be easily satisfied by construction). To data set $D \subset l_1^+$, we should construct $n$ hyperplanes having the same classification effect as $l_1$, as well as making the rank of the linear output matrix $\boldsymbol{\mathcal{W}}_{n+1}$ of $D$ with respect to $l^+(D)$ to be $n+1$.

An $(n+1) \times (n+1)$ linear output matrix $\boldsymbol{\mathcal{W}}_{n+1}$ is constructed as
\begin{equation}
\boldsymbol{\mathcal{W}}_{n+1} = \begin{bmatrix}
w_{11} & w_{11} & w_{11} & \cdots & w_{11} \\
w_{12} & w_{12} + \varepsilon_1  & w_{12} + \varepsilon_1^2 & \cdots & w_{12} + \varepsilon_1^n \\
\vdots & \vdots & \vdots &\ddots &  \vdots\\
w_{1n} & w_{1n} + \varepsilon_{n-1}  & w_{1n} + \varepsilon_{n-1}^2 & \cdots & w_{1n} + \varepsilon_{n-1}^n \\
b_{1} & b_{1} + \varepsilon_n  & b_{1} + \varepsilon_n^2 & \cdots & b_{1} + \varepsilon_{n}^n
\end{bmatrix},
\end{equation}
where $0 < \varepsilon_i < 1$ and $\varepsilon_i \ne \varepsilon_j$ if $i \ne j$ for $i,j = 1, 2, \cdots, n$. In equation 4.8, the $n$ columns of $\boldsymbol{\mathcal{W}}_{n+1}$ except for the first one represent the $n$ constructed hyperplanes according to $l_1$, denoted by $l_{\nu}$'s for $\nu = 2, 3, \cdots, n+1$.

By theorem 4 of \citet{Huang2020}, $\det \boldsymbol{\mathcal{W}}_{n+1} \ne 0$ if $w_{11} \ne 0$. Thus, the column vectors of $\boldsymbol{\mathcal{W}}_{n+1}$ are linearly independent and $\text{rank}(\boldsymbol{\mathcal{W}}_{n+1}) = n + 1$.  Because $\varepsilon_i$'s can be arbitrarily small, we can always find $n$ hyperplanes via matrix $\boldsymbol{\mathcal{W}}_{n+1}$ such that $D \subset \prod_{\nu=2}^{n+1}l_{\nu}^+$ \citep{Huang2020}. 

Now return to the example of Figure \ref{Fig.2}. By equation 4.8, lines $l_1^{(1)}$ and $l_1^{(2)}$ that satisfy the condition of lemma 1 could be constructed. Thus, we can realize any linear function on $D_1$.

To the case of $D_2$, as shown in Figure \ref{Fig.2}, besides line $l_2$, two more lines $l_2^{(1)}$ and $l_2^{(2)}$ are constructed by the method of equation 4.8, which are activated by $D_2$ but not by $D_1$. After that, the network $2^{(1)}3^{(1)}1'^{(1)}$ becomes $2^{(1)}6^{(1)}1'^{(1)}$. Note that $ D_2 \subset l_1^+l_2^+\prod_{i,j}l_i^{(j)+}$ for $i, j =1, 2$ has six activated lines, that is, $|l^+(D_2)| = 6$. Among the six lines, the parameters of $l^+(D_1)$ of the original network $2^{(1)}3^{(1)}1'^{(1)}$ as well as their output wights should be preserved for the linear function on $D_1$. By lemma 2, we can only use the subset $H = \{l_2, l_2^{(1)}, l_2^{(2)}\}$ of $l^+(D_2)$ to output the desired linear function on $D_2$, without considering the influence of $l^+(D_1)$.

When dealing with $D_2$, since $D_1 \subset l_2^0l_2^{(1)0}l_2^{(2)0}$, to any $\boldsymbol{x} \in D_1$, the outputs of the hyperplanes of $H$ are all zero. So the constructed linear function on $D_2$ has no influence on $D_1$.

Finally, we realize the desired piecewise linear function on $D = D_1 \cup D_2$ by network $2^{(1)}6^{(1)}1'^{(1)}$, whose hidden layer has $m = 2(n+1) = 6$ units. The general case of $n$-dimensional space is similar.

\textbf{Case $m > 2(n+1)$}: Note that in Figure \ref{Fig.2}, for example, to $l_1$, when there exist more than two lines classifying the data points as $l_1$ besides $l_1^{(1)}$ and $l_1^{(2)}$, if we ensure that two of them are constructed by equation 4.8, then the rank of the linear-output weight matrix is still $n + 1 = 3$, and the redundant lines would not influence the production of the linear function on $D_1$ according to lemma 1 or 2.

In general, to data set $D$, when $|l^+(D)| > n+1$, if there exists an $(n+1) \times (n+1)$ nonsingular submatrix in the linear-output weight matrix $\boldsymbol{\mathcal{W}}$ of $D$ with respect to $l^+(D)$, then $\text{rank}(\boldsymbol{\mathcal{W}}) = n+1$; and by lemma 1 or 2, the redundant activated hyperplanes cannot influence the implementation of the linear function needed.

Let $l_1$ be an activated hyperplane of $D$, whose equation is $\boldsymbol{w}_1^T\boldsymbol{x} + b_1 = 0$, where $\boldsymbol{w}_1$ is defined as equation 4.7. Analogous to equation 4.8, we construct an $(n+1) \times m$ for $m > n+1$ linear-output weight matrix of $D$ as
\begin{equation}
\boldsymbol{\boldsymbol{\mathcal{W}}} = \begin{bmatrix}
\boldsymbol{\mathcal{W}}_{n+1}, \boldsymbol{\mathcal{W}}_c
\end{bmatrix},
\end{equation}
where $\boldsymbol{\mathcal{W}}_{n+1}$ is of equation 4.8 and
\begin{equation}
\boldsymbol{\mathcal{W}}_c = \begin{bmatrix}
w_{11} & w_{11} & \cdots & w_{11} \\
w_{12} + \varepsilon_1^{n+1} & w_{12} + \varepsilon_1^{n+2} & \cdots & w_{12} + \varepsilon_1^{m-1} \\
\vdots & \vdots & \ddots & \vdots \\
w_{1n} + \varepsilon_{n-1}^{n+1} & w_{1n} + \varepsilon_{n-1}^{n+2} & \cdots & w_{1n} + \varepsilon_{n-1}^{m-1} \\
b_{1} + \varepsilon_n^{n+1} & b_{1} + \varepsilon_n^{n+2} & \cdots & b_{1} + \varepsilon_n^{m-1}
\end{bmatrix},
\end{equation}
where $0 < \varepsilon_i < 1$ and $\varepsilon_i \ne \varepsilon_j$ for $i, j = 1, 2, \cdots, n$. Then $\text{rank}(\boldsymbol{\mathcal{W}}) = n+1$ and when $\varepsilon_i$'s are small enough, the $m-1$ constructed hyperplanes have the same classification effect as $l_1$. This is the case of $m > 2(n+1)$.
\end{proof}

\subsection{Distinguishable Data Sets}
We generalize proposition 1 with only two subdomains to the case of any finite number of subdomains.

\begin{dfn}
Let $D_i$'s for $i =1, 2, \cdots,k$ be $k$ data sets of $n$-dimensional space. We say that $D_i$'s are distinguishable, provided that we can find $k$ hyperplanes $l_i$'s satisfying the following conditions: $D_{j_1} \subseteq l_1^+$, $D_{j_{\nu}} \subseteq l_{\nu}^+ \land \bigcup_{\mu = 1}^{\nu-1}{D}_{j_{\mu}} \subseteq l_{\nu}^0$ for $\nu \ge 2$, where $j_{\nu} \in \mathbb{N}$ with $1 \le j_{\nu} \le k$ for $\nu =1, 2, \cdots,k$. The hyperplane $l_{\nu}$ with respect to $D_{j_{\nu}}$ is called a distinguishable hyperplane of $D_{j_{\nu}}$, and the order of sequence $D_{j_{1}}, D_{j_{2}}, \cdots, D_{j_k}$ is called the distinguishable order of $D_i$'s.
\end{dfn}

\begin{figure}[!t]
\captionsetup{justification=centering}
\centering
\includegraphics[width=2.4in, trim = {4.3cm 2.5cm 4cm 3.0cm}, clip]{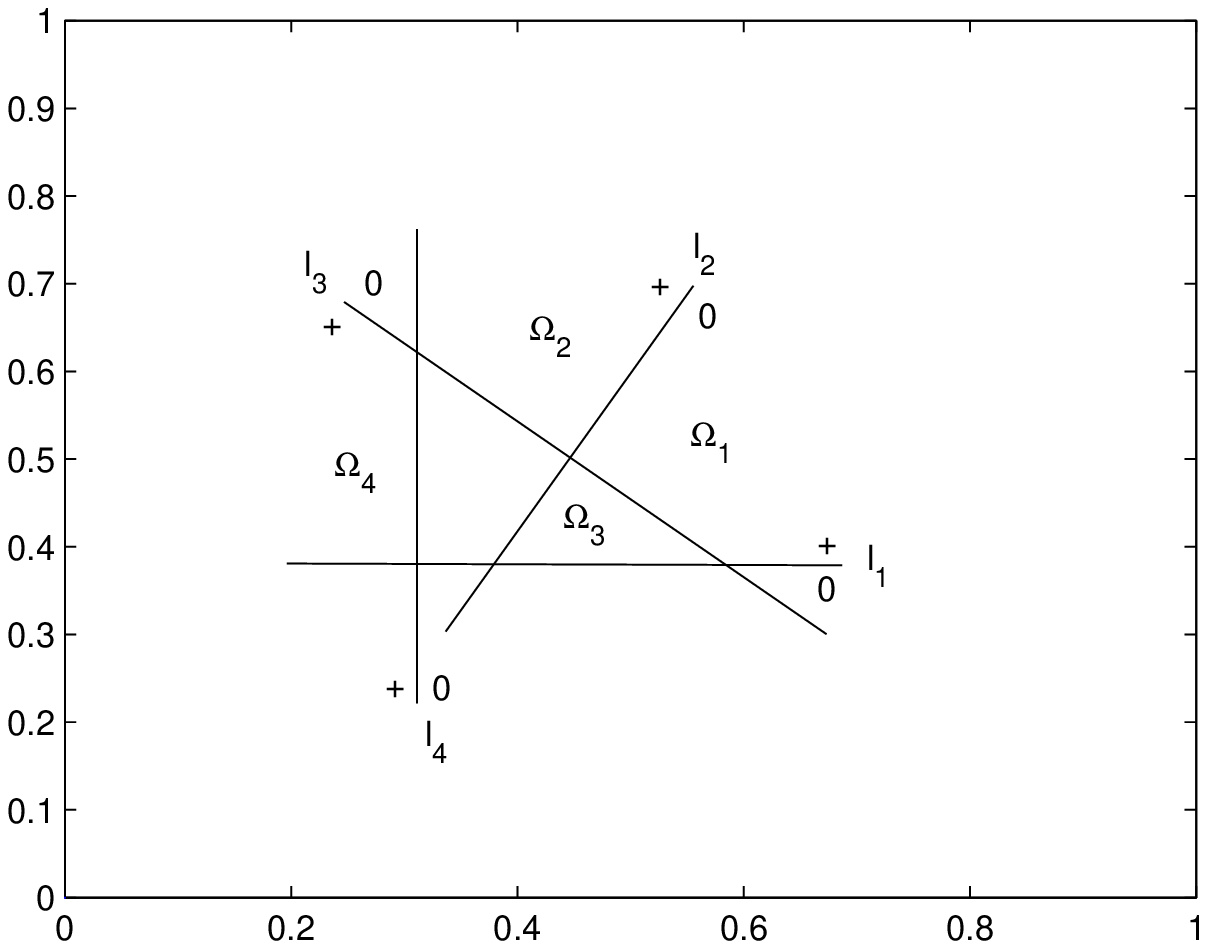}
\caption{Distinguishable regions formed by lines.}
\label{Fig.3}
\end{figure}

The concept of distinguishable data sets will be used for the solution to piecewise linear functions via three-layer networks . What follows is an example of this concept.
\begin{prp}
In $n$-dimensional space, $m$ hyperplanes can be adjusted to form $m$ distinguishable regions $\Omega_i$'s for $i =1, 2, \cdots,m$. If there exist $m$ data sets $D_i$'s with $D_i \subset \Omega_i$, they are also distinguishable.
\end{prp}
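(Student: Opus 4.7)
The plan is to produce an explicit construction of $m$ hyperplanes together with $m$ distinguishable regions that verify Definition 11, and then to derive the second assertion about data sets from monotonicity of set inclusion under the operators $l^+(\cdot)$ and $l^0(\cdot)$.

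For the first part I would take $m$ parallel hyperplanes $l_i:\ x_1 - c_i = 0$ in $\mathbb{R}^n$ with constants $c_1 < c_2 < \cdots < c_m$, oriented so that $l_i^+ = \{x_1 > c_i\}$ and $l_i^0 = \{x_1 \le c_i\}$, and define the regions
\[
\Omega_i = \{c_i < x_1 < c_{i+1}\} \quad (1 \le i < m), \qquad \Omega_m = \{x_1 > c_m\}.
\]
With the natural order $\Omega_1, \Omega_2, \ldots, \Omega_m$, Definition 11 is immediate: for each $\nu$ one has $\Omega_\nu \subseteq l_\nu^+$ because $x_1 > c_\nu$ throughout $\Omega_\nu$; and for any $\mu < \nu$ one has $\Omega_\mu \subseteq \{x_1 < c_\nu\} \subseteq l_\nu^0$, which combines to $\bigcup_{\mu < \nu} \Omega_\mu \subseteq l_\nu^0$. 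Thus both requirements of Definition 11 hold.

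For the second statement I would keep the same $m$ hyperplanes and use the order induced on the $D_i$'s by the order already chosen on the $\Omega_i$'s. Since $D_i \subset \Omega_i$ for every $i$, the inclusions just verified pass to the data sets with no extra work: $D_\nu \subseteq \Omega_\nu \subseteq l_\nu^+$ and $\bigcup_{\mu < \nu} D_\mu \subseteq \bigcup_{\mu < \nu} \Omega_\mu \subseteq l_\nu^0$, which is exactly what Definition 11 asks of the $D_i$'s.

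There is essentially no deep obstacle in the argument. The only minor point of care is the boundary convention: taking the $\Omega_i$ to be open strips ensures that no $\Omega_\mu$ meets any $l_\nu$, so every inclusion into $l_\nu^0$ is clean. A non-parallel configuration (for instance a fan of hyperplanes meeting at a common point placed outside the region of interest) would work equally well, but would offer nothing beyond the elementary monotonicity of $l^+$ and $l^0$ under subset inclusion that already drives the parallel-hyperplane construction above.
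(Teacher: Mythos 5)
Your proof is correct. Both parts of the statement are established: the parallel arrangement $l_i : x_1 - c_i = 0$ with the strip regions $\Omega_i$ (which are indeed $m$ of the $m+1$ regions of that arrangement) satisfies Definition 11 under the natural order, and the passage to data sets $D_i \subset \Omega_i$ is pure monotonicity of $l^+$ and $l^0$ under inclusion, exactly as you say.

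Your route differs from the paper's. The paper proves the first assertion by induction on the number of hyperplanes: starting from a small arrangement, it adds one hyperplane $l_k$ at a time, translates it until $l_k^0$ meets every previously constructed region, selects an arbitrary region inside $l_k^+$ as the new $\Omega_k$, and shrinks the old regions to $\Omega_i' = l_k^0 \cap \Omega_i$. That procedure is not tied to any particular geometry of the hyperplanes and illustrates how distinguishable regions can be carved out of a generic arrangement being grown incrementally, which matches the constructive spirit the paper uses again in Lemma 4. Your explicit parallel-strip configuration is shorter and verifies the definition in one step with no induction and no region-shrinking bookkeeping; the trade-off is that it exhibits only one very special arrangement, whereas the paper's argument shows the phenomenon persists under a much looser ``translate far enough and intersect'' discipline. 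For the second assertion the two proofs coincide: the paper likewise observes that $D_i \subset \Omega_i$ inherits the distinguishability of the regions. Since the proposition is an existence claim, your proof fully suffices.
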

\begin{proof}
As shown in Figure \ref{Fig.3}, if only take lines $l_1$, $l_2$ and $l_3$ into consideration, the arrangement can produce regions $\Omega_1 = l_3^0l_2^0l_1^+$, $\Omega_2 = l_3^0l_2^+l_1^+$ and $\Omega_3 = l_3^+l_2^0l_1^+$, satisfying the distinguishable condition of definition 11: $\Omega_1 \subset l_1^+$, $(\Omega_2 \subset l_2^+) \land (\Omega_1 \subset l_2^0)$ and $(\Omega_3 \subset l_3^+) \land (\Omega_2 \cup \Omega_1 \subset l_3^0)$. Thus they are distinguishable.

When an arbitrary fourth line $l_4$ is added, we can translate it as far as possible to make sure that $\Omega_i \cap l_4^0 \ne \emptyset$ for $i=1, 2, 3$. Any region in $l_4^+$ can be chosen as the fourth one, such as $\Omega_4 = l_4^+l_3^+l_2^+l_1^+$; other three regions are changed into \begin{equation}
\Omega_i' = l_4^0 \cap \Omega_i
\end{equation}
for all $i$. It's easy to verify that regions $\Omega_4$ and $\Omega_i'$'s are distinguishable. For simplicity of notations, we again use the notations $\Omega_i$ to represents $\Omega_i'$ for $i = 1, 2, 3$, but the corresponding region may be changed due to equation 4.11. So $\Omega_i$'s for $i = 1, \cdots, 4$ are distinguishable.

The above procedure can be done inductively from $k = 2$. Suppose that $k-1$ distinguishable regions have been constructed by $k-1$ lines, denoted by $\Omega_i$'s for $i=1, 2, \cdots, k-1$ for $k \ge 2$. When adding the $k$th line $l_k$, we translate it to a place where $l_k^0 \cap \Omega_i \ne \emptyset$ for all $i$. Any region in $l_k^+$ could be selected as the $k$th region $\Omega_k$. Other $k-1$ regions are the intersections of $l_k^0$ and the original $\Omega_i$'s, that is, $\Omega_i' = l_k^0 \cap \Omega_i$, where $\Omega_i'$ is the new $i$th region after $l_k$ being added. Denote all of the $k$ new regions by $\Omega_i$'s for $i=1, 2, \cdots, k$ and they are distinguishable regions. Repeat it until $k = m$.

To $m$ data sets $D_i$'s for $i = 1, 2, \cdots, m$ with $D_i \subset \Omega_i$, the distinguishable condition still holds by the construction process of $\Omega_i$'s. So $D_i$'s are distinguishable data sets. The case of arbitrary $n$-dimensional space is similar. This completes the proof.

\end{proof}

\begin{prp}
To any discrete piecewise linear function of equation 4.6, if its subdomains $D_i$'s for $i = 1, 2, \cdots, k$ are distinguishable, a three-layer network $n^{(1)}m^{(1)}1'^{(1)}$ with $m \ge k(n+1)$ can realize it.
\end{prp}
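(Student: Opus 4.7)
The plan is to process the subdomains one at a time in the distinguishable order $D_{j_1},D_{j_2},\ldots,D_{j_k}$, attaching a dedicated group of $n+1$ hidden units to each, and then setting output weights group by group. For a given $\nu$, I would take the distinguishable hyperplane $l_\nu$ provided by Definition 11 as a seed and construct $n$ further hyperplanes by the perturbation recipe of equation 4.8; choosing the parameters $\varepsilon_i$ small enough (possible because $\bigcup_{\mu=1}^{k}D_{j_\mu}$ is finite, and, after an arbitrarily small translation of $l_\nu$ away from any point that might lie exactly on it, each $D_{j_\mu}$ sits strictly on one side of $l_\nu$), every new hyperplane inherits the classification effect of $l_\nu$: it activates $D_{j_\nu}$ and sends every $D_{j_\mu}$ with $\mu<\nu$ to zero output. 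Theorem 4 of \citet{Huang2020} then guarantees that the linear-output matrix of this $l_\nu$-group on $D_{j_\nu}$ has rank $n+1$.

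Next, I would fix output weights group by group in the same distinguishable order. When processing $D_{j_\nu}$, all $l_\mu$-groups with $\mu<\nu$ have already been set; by the shared-classification property their combined contribution to the output on $D_{j_\nu}$ reduces to a fixed affine expression, which can be folded into the constant-entry vector $\boldsymbol{b}'$ of equation 4.5. Lemma 2, applied with $H_1$ equal to the $l_\nu$-group and $H_2$ equal to the collection of previously fixed hyperplanes that remain activated on $D_{j_\nu}$, then delivers output weights for the $l_\nu$-group that realize any prescribed linear function on $D_{j_\nu}$, because the $l_\nu$-group has rank-$(n+1)$ linear-output matrix. Crucially, no step disturbs its predecessors: for every $\rho>\nu$, Definition 11 forces $D_{j_\nu}\subset l_\rho^0$, and this zero-output property is inherited by every hyperplane of the $l_\rho$-group by construction, so the still-to-be-chosen weights of that group contribute nothing to the output on $D_{j_\nu}$. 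Induction on $\nu$ therefore interpolates all $k$ subdomains simultaneously.

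This accounts for $k(n+1)$ hidden units, settling the case $m=k(n+1)$; the case $m>k(n+1)$ follows by padding each group with redundant hyperplanes of the form described in equations 4.9--4.10, exactly as at the end of the proof of Proposition 1, which leaves the existing rank-$(n+1)$ submatrices intact. The main obstacle I foresee is the bookkeeping of the construction step: one must select a single uniform bound on the $\varepsilon_i$'s so that all $k$ groups simultaneously preserve the classification pattern of Definition 11 on every finite $D_{j_\mu}$, and one must verify that the affine offset generated by earlier groups on the current subdomain really is common across the entire subdomain (so that Lemma 2 applies without modification). Both requirements follow from the finiteness of each $D_{j_\mu}$ together with the shared-classification effect built into equation 4.8, but they are the only points where quantitative choices must be made carefully.
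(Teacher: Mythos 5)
Your proposal is correct and follows essentially the same route as the paper: process the subdomains in the distinguishable order, attach to each a group of $n+1$ hyperplanes built from its distinguishable hyperplane via the perturbation of equation 4.8, realize the linear function on each group with Lemma 1 or Lemma 2 (folding earlier groups into the constant vector), rely on $\bigcup_{\mu<\nu}D_{j_\mu}\subset l_{j_\nu}^0$ to protect predecessors, and handle $m>k(n+1)$ by the redundancy argument of Proposition 1. The quantitative points you flag (uniformly small $\varepsilon_i$ over the finite domain, and the earlier groups contributing a fixed linear expression on the current subdomain) are handled the same way in the paper's proof.
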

\begin{proof}
As definition 11, if $D_i$'s are distinguishable, they can be arranged in a new order of $D_{j_\nu}$'s where $j_{\nu} \in \mathbb{N}$ and $1 \le j_{\nu} \le k$ for $\nu =1, 2, \cdots,k$, each of which corresponds to a distinguishable hyperplane $l_{\nu}$ such that $D_{j_{\nu}} \subset l_{\nu}^+ \land \bigcup_{\mu = 1}^{\nu-1}{D}_{j_{\mu}} \subset l_{\nu}^0$.

\textbf{Case $m = k(n+1)$}: The construction of linear functions on subdomains must be in accordance with the distinguishable order $D_{j_{\nu}}$'s for $\nu = 1, 2, \cdots, k$. The case of subdomain $D_{j_1}$ is trivial. According to its distinguishable hyperplane $l_{j_1}$, we construct $n$ hyperplanes having the same classification effect on domain $D = \bigcup_{i}D_i$ as $l_{j_1}$, by the method of equation 4.8. Then use lemma 1 to realize the linear function on $D_{j_1}$, after which all the relevant parameters of the network for $D_{j_1}$ are fixed. To subdomain $D_{j_2}$, also construct $n$ hyperplanes according to $l_{j_2}$. If $D_{j_2} \subset l_{j_2}^+l_{j_1}^0$, use lemma 1 to implement its linear function; if $D_{j_2} \subset l_{j_2}^+l_{j_1}^+$, lemma 2 is the choice. Since $D_{j_1} \subset l_{j_2}^0$ by the property of distinguishable data sets, the linear function on $D_{j_2}$ could not influence the one on $D_{j_1}$.

The above process can be done inductively. Suppose that the linear functions on $D_{j_{\nu}}$'s for $\nu \le \mu-1$ with $\mu \ge 3$ have been realized, and the next is for $D_{j_{\mu}}$. The method is the same as that of $D_{j_2}$. Since $\bigcup_{\nu = 1}^{\mu-1}{D}_{j_{\nu}} \subset l_{j_u}^0$, the new added linear function on $D_{j_{\mu}}$ cannot disturb those on $D_{j_{\nu}}$'s constructed before. Repeat the induction until $\nu = k$.

Because each subdomain need $n+1$ activated hyperplanes to produce the linear function on it, the number of units of the hidden layer is $k(n+1)$.

\textbf{Case $m > k(n+1)$}: The reason is the same as case $m > 2(n+1)$ of proposition 1 of section 4.1.
\end{proof}

\subsection{Representative Hyperplane}

\begin{figure}[!t]
\captionsetup{justification=centering}
\centering
\includegraphics[width=2.3in, trim = {4.2cm 4.3cm 4.2cm 1.9cm}, clip]{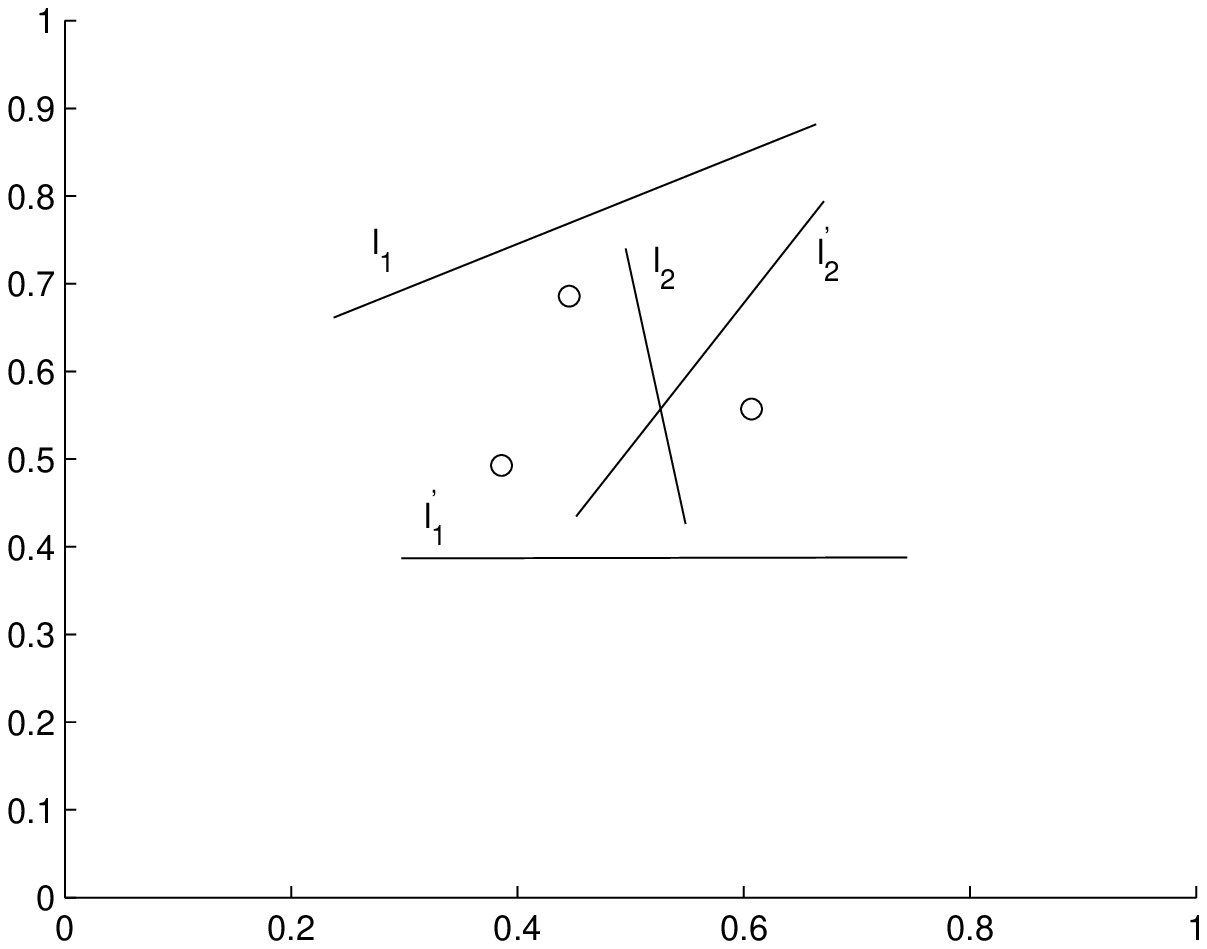}
\caption{Representative hyperplanes.}
\label{Fig.4}
\end{figure}

\begin{dfn}
Let $\mathcal{H}$ be the set of hyperplanes of $n$-dimensional space and $D$ be a set of data points with finite cardinality. A relation $R_D \subseteq \mathcal{H} \times \mathcal{H}$ is defined as: $(l_1, l_2) \in R_D$ means that $l_1$ divides $D$ into the same two subsets as $l_2$, where $l_1, l_2 \in \mathcal{H}$.
\end{dfn}
For example, in Figure \ref{Fig.4}, $(l_1, l_1') \in R_D$, since both $l_1$ and $l_1'$ divide $D$ into the subsets $D$ and $\phi$, and similarly for $(l_2, l_2') \in R_D$. It's easy to verify that $R_D$ is an equivalence relation; so $(l_m, l_n) \in R_D$ can use the notation $l_m \sim l_n$, which is read as ``$l_m$ is equivalent to $l_n$''.

\begin{dfn}
Under the notations of definition 12, denote the quotient set of $\mathcal{H}$ with respect to the relation $R_D$ by
\begin{equation}
\mathcal{H} / R_D = \{\lbrack a \rbrack_{R_D}, a \in \mathcal{H} \},
\end{equation}
where $\lbrack a \rbrack_{R_D} = \{ l \  | \  l \in \mathcal{H} \land a \sim l \}$ is an equivalence class of $\mathcal{H} / R_D$. To each element $\lbrack a \rbrack_{R_D}$ of $\mathcal{H} / R_D$, all the hyperplanes belonging to $\lbrack a \rbrack_{R_D}$ do the same classification to data set $D$; and a representative hyperplane of $\lbrack a \rbrack_{R_D}$ is defined to be any $l \in \lbrack a \rbrack_{R_D}$ that is chosen as the representation of $\lbrack a \rbrack_{R_D}$. Each equivalence class accounts for one representative hyperplane with its distinct classification result.
\end{dfn}

Each representative hyperplane of $D$ corresponds to one way of classifying $D$ into a distinct partition with two subsets. Because the cardinality $|D|$ is finite, the number of distinct two-subset partitions derived from linear classification to $D$ is also finite, and so is the number of representative hyperplanes. This property will be used in the following section.

\subsection{General Result}
Lemma 4 of this section will provide a method of constructing distinguishable data sets. The interpolation capability of three-layer networks for a single output will be given in theorem 2.

\begin{lem}
Let $D$ be a two-category data set of $n$-dimensional space whose each element is marked by either $*$ or $\Delta$; suppose that there's only one $*$-sample in $D$, denoted by $p_*$. Then among all the representative hyperplanes, we can find the one $l_{max}$ classifying $D$ into two subsets $D_*$ and $D_{\Delta}$, such that $p_* \in D_*$ and $D_{\Delta}$ has the maximum number of $\Delta$-samples. And when we translate $l_{max}$ towards $p_*$, it will not meet any $\Delta$-sample before passing through $p_*$.
\end{lem}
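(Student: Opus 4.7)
The plan is to handle the existence of $l_{max}$ and the translation property separately: the former via a finite maximization using Section~4.3, the latter via an exchange argument that leverages the maximality.

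First I would invoke the finiteness of representative hyperplanes established at the end of Section~4.3: because $|D|$ is finite, so is the collection of distinct two-subset partitions of $D$ obtainable by linear classification, hence so is the number of representative hyperplanes. Restrict attention to the subset $\mathcal{R}_*$ of representative hyperplanes whose induced classification places $p_*$ in $D_*$. To see $\mathcal{R}_*$ is nonempty, pick a generic unit vector $\vec{v}$ satisfying $\vec{v}\cdot p_* \ne \vec{v}\cdot q$ for every $q \in D \setminus \{p_*\}$ and place a hyperplane orthogonal to $\vec{v}$ in the gap between $\vec{v}\cdot p_*$ and the nearest neighbouring value; such a hyperplane isolates $p_*$ from $D \setminus \{p_*\}$, so its $R_D$-class has a representative lying in $\mathcal{R}_*$. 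Since $\mathcal{R}_*$ is finite and nonempty, $|D_\Delta|$ attains a maximum over $\mathcal{R}_*$; let $l_{max}$ be any achiever. This establishes the first claim.

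For the translation property, I would argue by contradiction. Orient $l_{max}$ so that $p_* \in l_{max}^+$ and $D_\Delta \subseteq l_{max}^0$, and let $\vec{n}$ be the unit normal pointing from the $0$-side toward the $+$-side (i.e., toward $p_*$). Consider the one-parameter family of parallel hyperplanes $l(t) = l_{max} + t\vec{n}$ for $t \ge 0$, with $p_* \in l(t^*)$ for a unique $t^* > 0$. Suppose, contrary to the claim, that some $\Delta$-sample $q$ is crossed at time $t_q \in (0, t^*)$; then $q$ was originally in $l_{max}^+$, and hence $q \in D_*$. For small $\epsilon > 0$, the hyperplane $l(t_q+\epsilon)$ still satisfies $p_* \in l(t_q+\epsilon)^+$ (since $t_q+\epsilon < t^*$) while now $q \in l(t_q+\epsilon)^0$, so its induced classification is $(D_* \setminus \{q\},\, D_\Delta \cup \{q\})$, whose $\Delta$-side cardinality is $|D_\Delta|+1$. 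By Definitions~12--13 this classification corresponds to some $R_D$-equivalence class whose representative also lies in $\mathcal{R}_*$ (since $p_*$ remains on the $+$-side) and strictly exceeds the maximum attained by $l_{max}$, a contradiction.

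The main obstacle is the bookkeeping step that bridges the continuous sliding picture with the discrete catalogue of representative hyperplanes: one must verify that the classification induced by the slid hyperplane $l(t_q+\epsilon)$ coincides with that of a representative in $\mathcal{R}_*$, so that the maximality hypothesis for $l_{max}$ can be invoked. A minor side concern is fixing the orientation of ``translation toward $p_*$'' unambiguously along the outward normal of the $+$-side; once these are in place, the exchange argument itself is short and purely combinatorial.
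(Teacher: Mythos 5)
Your proposal is correct and follows essentially the same route as the paper: existence of $l_{max}$ from the finiteness of representative hyperplanes (Section 4.3), and the translation property by contradiction, since crossing a $\Delta$-sample before reaching $p_*$ would produce a hyperplane with strictly larger $|D_{\Delta}|$. You simply fill in details the paper leaves implicit (nonemptiness of the candidate set, the orientation convention, and the explicit exchange step identifying the slid hyperplane with a representative).
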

\begin{proof}
Because the number of representative hyperplanes is finite, we can always find the one $l_{max}$ contributing to maximum $|D_{\Delta}|$. If during the translation of $l_{max}$, it meets a $\Delta$-sample before reaching $p_*$, then there exists a representative hyperplane resulting in greater $|D_{\Delta}|$; this is a contradiction. The conclusion follows.

Figure \ref{Fig.5}a is an example. Line $l_i$ and pint $p_i$ correspond to $l_{max}$ and the $*$-sample of this lemma, respectively. When $l_i$ approaches $p_i$, it will not pass through any $\Delta$-sample.

\end{proof}

\begin{dfn}
In lemma 3, the representative hyperplane $l_{max}$ with respect to the two-category data set $D$ is called the maximum hyperplane of $D$.
\end{dfn}

\begin{figure}[!t]
\captionsetup{justification=centering}
\centering
\subfloat[Translation of $l_i$.]{\includegraphics[width=2.1in, trim = {4.5cm 2.5cm 4.4cm 3cm}, clip]{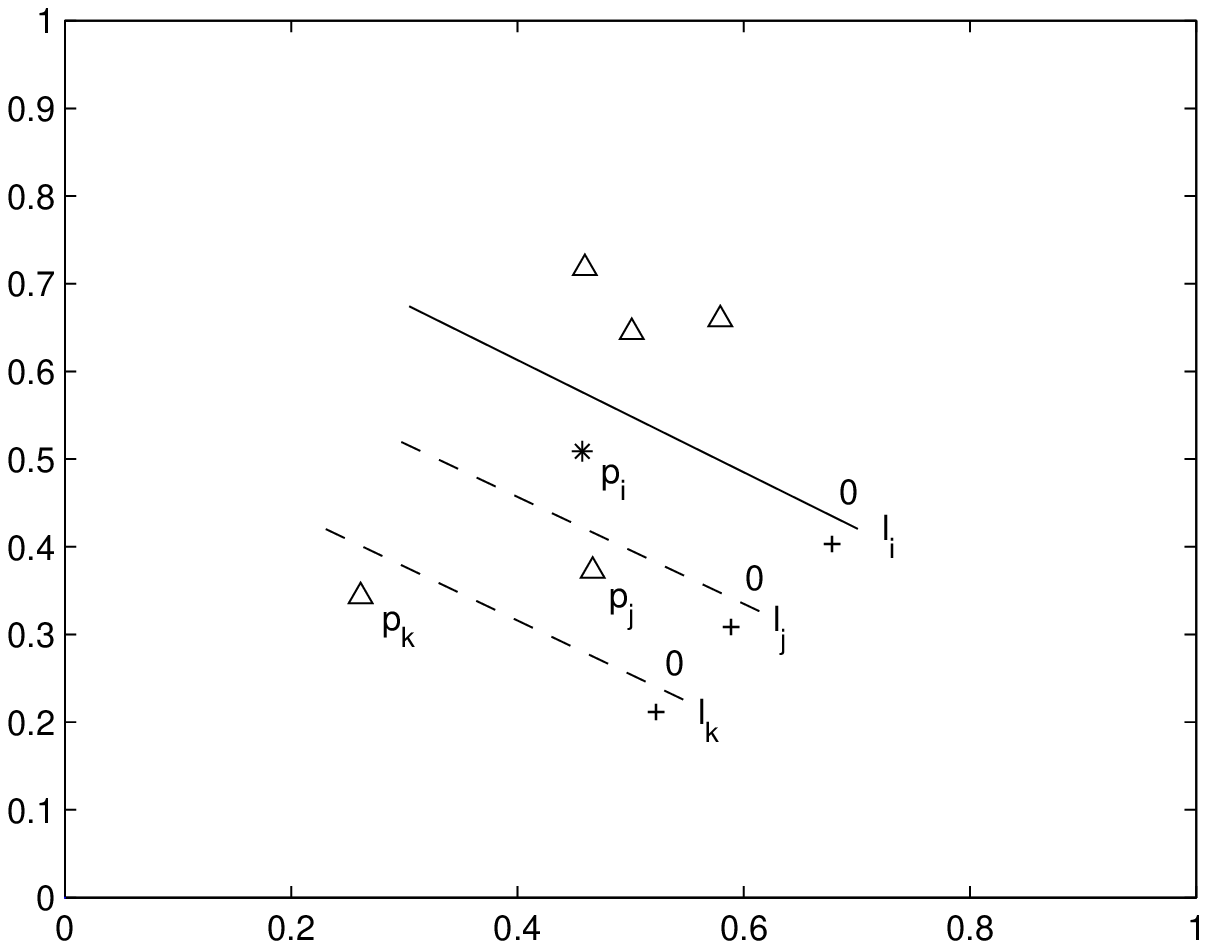}} \quad \quad \quad \quad
\subfloat[Translation of sightly perturbed $l_i$.]{\includegraphics[width=2.1in, trim = {4.3cm 2cm 4.4cm 3cm}, clip]{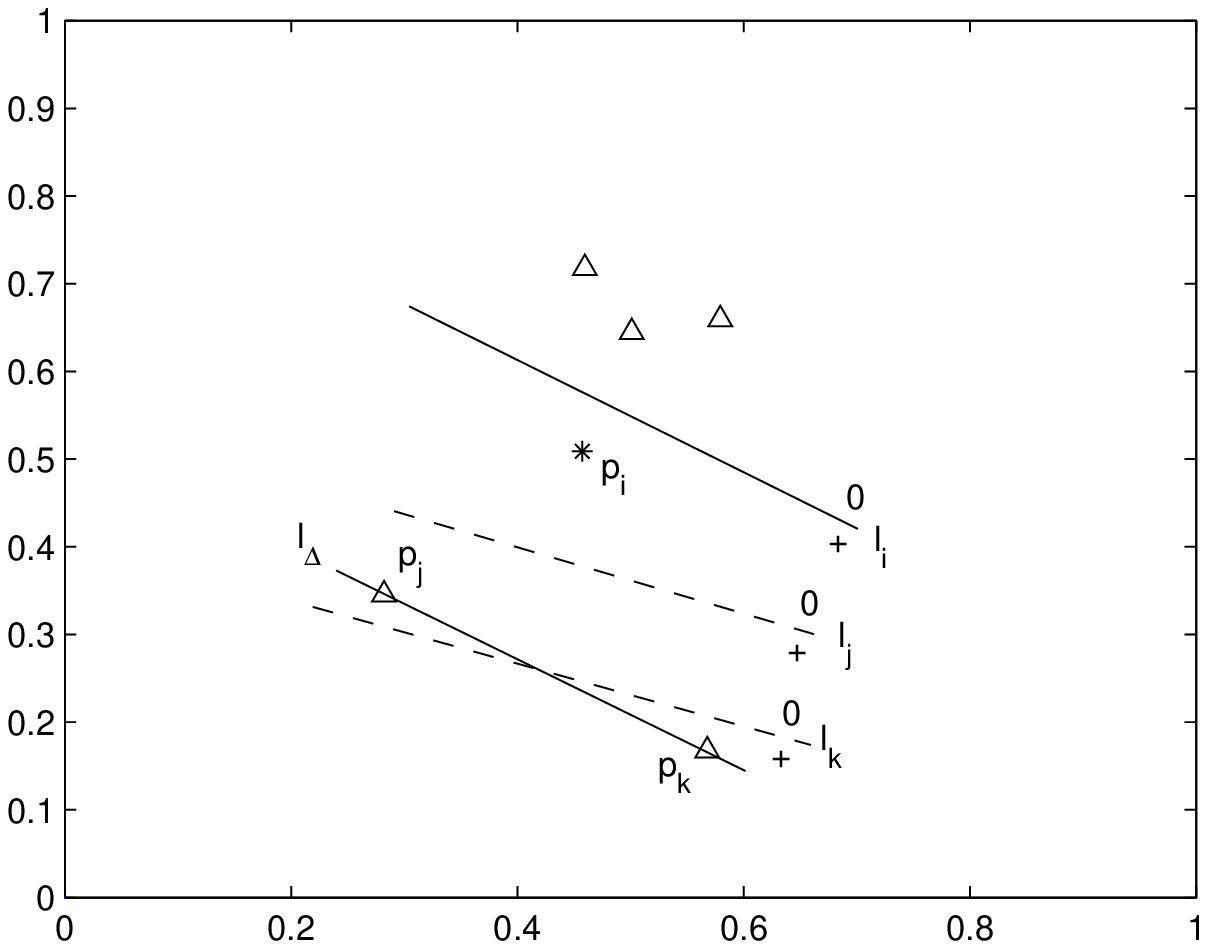}}
\caption{Construction of distinguishable data sets.}
\label{Fig.5}
\end{figure}

\begin{lem}
Given $k$ data sets $D_i$'s of $n$-dimensional space for $i = 1, 2, \cdots, k$, if each of them has only one element $p_i$, then $D_i$'s can be distinguishable.
\end{lem}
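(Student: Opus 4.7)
The plan is to prove Lemma 4 by induction on $k$, constructing the distinguishable order $j_1, j_2, \ldots, j_k$ in reverse (starting from $j_k$) and producing the separating hyperplanes one at a time using Lemma 3. Since each $D_i$ is a singleton $\{p_i\}$, the conditions of Definition 11 reduce to: for each $\nu$, find a hyperplane $l_\nu$ that places $p_{j_\nu}$ strictly on its positive side while placing $p_{j_1},\dots,p_{j_{\nu-1}}$ on its zero side (with no constraint imposed on the later points $p_{j_{\nu+1}},\dots,p_{j_k}$). This asymmetry is what makes the reverse-peeling strategy natural.

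For the base case $k=1$, any hyperplane with $p_1$ strictly on its positive side works. For the inductive step, I would pick $p_{j_k}$ to be an extreme point of the finite set $\{p_1,\dots,p_k\}$ (a finite set of points always has such an extreme point). Labelling $p_{j_k}$ as the sole $*$-sample and the remaining $k-1$ points as $\Delta$-samples, I would apply Lemma 3 to obtain the maximum hyperplane $l_{max}$. The extremality of $p_{j_k}$ ensures that there is \emph{some} hyperplane separating $p_{j_k}$ strictly from the convex hull of the other $k-1$ points, so the maximum attained by $l_{max}$ is the full $k-1$, i.e. $l_{max}$ puts $p_{j_k}$ on $l_{max}^+$ and every other $p_i$ on $l_{max}^0$. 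Taking $l_k := l_{max}$, I would then apply the induction hypothesis to the remaining $k-1$ singletons $\{p_i : i \neq j_k\}$ to obtain both the order $j_1,\dots,j_{k-1}$ and the hyperplanes $l_1,\dots,l_{k-1}$. Because the conditions in Definition 11 on $l_\nu$ for $\nu<k$ only concern $p_{j_1},\dots,p_{j_{\nu-1}}$, the presence of $p_{j_k}$ imposes no additional constraint, so the induction goes through.

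The one subtle point, which I anticipate being the main (though mild) obstacle, is justifying that the maximum attained by $l_{max}$ is genuinely $k-1$ rather than something smaller. This is exactly where the extremality of the chosen $p_{j_k}$ matters: since $p_{j_k}$ lies outside the convex hull of the other points, a strictly separating hyperplane exists, and one can slightly perturb and translate it as in the scenario of Figure~\ref{Fig.5}b so that $p_{j_k}$ is strictly on the positive side and every other $p_i$ lies on the zero side. Lemma 3 then certifies that $l_{max}$ achieves this same count. Once this step is established the rest is bookkeeping, and the inductive construction delivers the required order and hyperplanes witnessing that $D_1,\dots,D_k$ are distinguishable.
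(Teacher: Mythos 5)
Your proposal is correct, but it takes a genuinely different route from the paper. The paper runs a \emph{forward} induction: it processes the points in an arbitrary order, attaches to each new point $p_i$ the maximum hyperplane of lemma 3, and then repairs the damage this causes — any previously distinguishable $\Delta$-samples landing in $l_i^+$ must be re-handled by translating $l_i$ toward them one at a time, reinserting them at the rear of the distinguishable order, and adding a small random perturbation of the normal vector to break ties when several such points lie on a hyperplane parallel to $l_i$. Your argument instead builds the order \emph{backward} by convex-hull peeling: choose $p_{j_k}$ to be an extreme point of the current point set, strictly separate it from the convex hull of the rest (which exists precisely because it is extreme, and which forces the maximum in lemma 3 to equal $k-1$), and recurse. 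The key observation you exploit — that the condition of definition 11 on $l_\nu$ only constrains $p_{j_1},\dots,p_{j_{\nu-1}}$ and says nothing about later points — is exactly what makes the reverse order immune to the interference that the paper's forward construction must repair, so you avoid the entire case analysis (reordering, translation, and the collinear-perturbation step) at the cost of committing to a specific order dictated by the hull structure rather than accommodating an arbitrary processing order. Your detour through lemma 3 is actually unnecessary: once $p_{j_k}$ is known to lie outside the convex hull of the remaining points, the strict separating hyperplane can be taken directly as $l_k$; but invoking lemma 3 to certify the maximum does no harm. Both proofs are valid; yours is shorter and conceptually cleaner, while the paper's repair machinery (maximum hyperplanes, translation, perturbation) is developed in a form it reuses as intuition elsewhere.
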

\begin{proof}
The proof is constructive. We construct $k$ hyperplanes such that their dividing $D_i$'s satisfies the condition of distinguishable data sets. $D_i$'s will be dealt with one by one according to the ascending order of the subscripts. If $D_i$ fulfils the distinguishable condition, it is called a distinguishable data set and will be put aside for the next one $D_{i+1}$. During the process of the rest of the data sets, $D_i$ should not be influenced; otherwise, it may become undistinguishable again and needs further treatment. This is the general procedure.

Since each data set has only one element, we will use point $p_i$ instead of set $D_i$ to describe the proof. All the terms related to distinguishable data set $D_i$ will be applied to point $p_i$. During the construction, points $p_i$'s for $i = 1, 2, \cdots, k$ are classified into one or two categories. If a point is distinguishable, we call it a $\Delta$-sample as in lemma 3; otherwise, it is a $*$-sample. At the beginning, all of the data pints are $*$-samples; and finally all the $*$-samples will become $\Delta$-samples. The ultimate goal is to rearrange the order of $D_i$'s into $D_{j_{\nu}}$'s for $\nu = 1, 2, \cdots, k$, with a distinguishable hyperplane $l_{j_{\nu}}$ attached to each $D_{j_{\nu}}$.

The proof is by induction and the main inductive procedure is illustrated by an example of Figure \ref{Fig.5}. The five triangles of Figure \ref{Fig.5}a or Figure \ref{Fig.5}b represent five distinguishable points that have been processed and the distinguishable order is $p_{j_1}, \cdots, p_{j_5}$. Next step is to deal with $p_i$. The solution is not evident, since $p_i$ cannot be linearly separated from the five $\Delta$-samples. Let $D_p$ be the set composed of $p_i$ and the existing five $\Delta$-samples. Find the maximum line $l_i$ of $D_p$; then $l_i$ is the distinguishable line of $p_i$, after which $p_i$ becomes a $\Delta$-sample. Add $p_i$ to the rear of the distinguishable order. If there exist no other $\Delta$-samples in $l_i^+$, the operation on $p_i$ stops.

Otherwise, for example, if there's only one $\Delta$-sample $p_{j} \in l_i^+$ for $j < i$ as shown in Figure \ref{Fig.5}a (regardless of the $\Delta$-sample $p_k$), then $p_j$ becomes undistinguishable again since the operation on $p_i$ has influenced it, which needs further treatment. There are two steps required. The first is the update of the distinguishable order. Suppose that $p_j$ is located in the position $j_2$, i.e., $p_{j_2} = p_j$. Delete $p_j$ from $p_{j_1}, \cdots, p_{j_5}$, insert $p_i$ to the rear of the sequence and put $p_j$ after $p_i$; that is, the distinguishable order becomes $p_{j_1}, p_{j_3}, p_{j_4}, p_{j_5}, p_i, p_j$; we can relabel the subscripts such that the altered sequence appears as $p_{j_1}, p_{j_2}, \cdots, p_{j_6}$. The second is to update the distinguishable line of $p_j$. We translate $l_i$ to produce a new distinguishable line of $p_j$, and delete the former one. In Figure \ref{Fig.5}a, $l_i$ is translated to the position of dotted $l_j$; and $l_j$ could be the new distinguishable line of $p_j$, since all the $\Delta$-samples (including the new established $p_i$) are in $l_j^0$ and $p_j \in l_j^+$. We call the above process the \textsl{single $\Delta$-sample procedure}.

The reason that we use the method of translating $l_i$ to produce the new distinguishable line of $p_j$ is that when $l_i$ approaches $p_j$, it will never have the opportunity to influence the $\Delta$-samples in original $l_i^0$, which greatly facilitate the construction of distinguishable data sets.

If there are more than one former $\Delta$-samples in $l_i^+$, we classify the solution into two categories due to different processing methods.

\textbf{Case 1}: If there exist two $\Delta$-samples $p_j, p_k \in l_i^+$ and $p_j, p_k $ are not on a line parallel to $l_i$, the method is as follows. Without loss of generality, suppose that $p_j$ is the first point met by the translated $l_i$. Then the operation on $p_j$ is the same as the \textsl{single $\Delta$-sample procedure} above. After that, to the other $\Delta$-sample $p_k$, two steps are also required. First, change the position of $p_k$ in the distinguishable order into the rear of the sequence. Second, translate $l_i$ until $p_k \in l_i^+$ and the previous $p_j \in l_i^0$ to update the distinguishable line of $p_k$, such as the dotted $l_k$ of Figure \ref{Fig.5}a. Now the operation on $p_i$ stops, including its own treatment as well as that of its influencing $\Delta$-samples $p_j$ and $p_k$.

In general, denote the set of $\Delta$-samples in $l_i^+$ by $D_{\Delta}$ with $|D_{\Delta}| \ge 2$, and suppose that no two of $D_{\Delta}$ are on a line parallel to $l_i$. When translating $l_i$, the first encountered $\Delta$-sample is processed as $p_j$; and the remaining ones are by the method of $p_k$.

\textbf{Case 2}: The more difficult case is when $p_j, p_k \in l_i^+$ but on a line parallel to $l_i$. In Figure \ref{Fig.5}b, the line $l_{\Delta}$ connecting $p_j$ and $p_k$ is parallel to $l_i$. In this case, we cannot distinguish $p_j$ and $p_k$ by the translation of $l_i$. The solution is to generate a slight random disturbance to the weight parameters of $l_i$. Let $\boldsymbol{w_i}^T\boldsymbol{x} + b_i =0$ be the equation of $l_i$, and let $\boldsymbol{\epsilon}$ be an $n \times 1$ vector whose entries are randomly selected from the uniform distribution on interval $(0, 1)$. The perturbed line $l_{\boldsymbol{\epsilon}}$ is expressed as
\begin{equation}
(\boldsymbol{w_i} + \alpha\boldsymbol{\epsilon})^T\boldsymbol{x} + b_i =0,
\end{equation}
where $\alpha$ is a real number whose absolute value can be arbitrarily small. Through the perturbation, $l_{\boldsymbol{\epsilon}}$ is not parallel to $l_i$; and if $|\alpha|$ is small enough, $l_{\boldsymbol{\epsilon}}$ could have the same classification result as $l_i$, which ensures that the $\Delta$-samples in $l_i^0$ are also in $l_{\boldsymbol{\epsilon}}^0$. Then translate $l_{\boldsymbol{\epsilon}}$ instead of $l_i$ to process $p_j$ and $p_k$ as case 1. Figure \ref{Fig.5}b shows the result of two dotted lines as the updated distinguishable lines.

If the collinear case occurs for several times when cardinality $|D_{\Delta}| > 2$, recursively apply the method of equation 4.13. For example, when the translated $l_i$ encounters some $\Delta$-samples lying on a line parallel to $l_i$, randomly perturb it to be $l_{\boldsymbol{\epsilon}_1}$ whose equation is $(\boldsymbol{w_i} + \alpha_1\boldsymbol{\epsilon}_1)^T\boldsymbol{x} + b_i =0$, and then translate $l_{\boldsymbol{\epsilon}_1}$. If again meet $\Delta$-samples on a line parallel to $l_{\boldsymbol{\epsilon}_1}$, change $l_{\boldsymbol{\epsilon}_1}$ to be $(\boldsymbol{w_i} + \alpha_1\boldsymbol{\epsilon}_1 + \alpha_2\boldsymbol{\epsilon}_2)^T\boldsymbol{x} + b_i =0$ of $l_{\boldsymbol{\epsilon}_2}$, and translate $l_{\boldsymbol{\epsilon}_2}$. Note that if $|\alpha_2|$ is small enough, the $\Delta$-samples in $l_{\boldsymbol{\epsilon}_1}^0$ are also in $l_{\boldsymbol{\epsilon}_2}^0$; and if both $|\alpha_1|$ and $|\alpha_2|$ are small enough, those $\Delta$-samples in $l_{i}^0$ also belong to $l_{\boldsymbol{\epsilon}_2}^0$.

Because cardinality $|D_{\Delta}|$ is finite, this disturbance-adding operation would not be done for infinite times. The final line for translation can be written as
\begin{equation}
(\boldsymbol{w_i} + \sum_{\nu = 1}^{\mu}\alpha_{\nu}\boldsymbol{\epsilon}_{\nu})^T\boldsymbol{x} + b_i =0,
\end{equation}
where $\mu$ is the number of the disturbance-adding operations. In equation 4.14, if each $|\alpha_{\nu}|$ is small enough, we can always preserve the classification result before disturbance and avoid the difficulty caused by collinear points.

During the process, the way of updating the distinguishable order is the same as that of case 1. This completes the proof of case 2.

The inductive procedure above is applicable to the general $n$-dimensional case. If we change the term \textsl{line} into \textsl{hyperplane}, the proof still holds.

Now we use mathematical induction to prove this lemma. The induction begins with $p_1$. Select a hyperplane $l_1$ such that $p_1 \in l_1^+$. The case of $p_2$ is also trivial; divide $p_2$ and $p_1$ via hyperplane $l_2$ such that $p_2 \in l_2^+$ and $p_1 \in l_2^0$. The difficulty starts from $p_3$, since it may not be easy to find $l_3$ satisfying the distinguishable condition. Use the inductive method as the example of Figure \ref{Fig.5} to deal with $p_i$ for $i \ge 3$ until $i = k$.

\end{proof}

\begin{thm}
A three-layer network $n^{(1)}m^{(1)}1'^{(1)}$ with $m \ge \nu(n+1)$ can realize any discrete piecewise linear function of equation 4.6, where $\nu = |D|$ is the number of the points of domain $D$.
\end{thm}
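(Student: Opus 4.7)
The plan is to reduce Theorem 2 to Proposition 2 by treating the discrete piecewise linear function as one whose subdomains are all singletons, and then invoke Lemma 4 to supply the distinguishability hypothesis that Proposition 2 requires.

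First, I would observe that the piecewise linear function $f:D\to\mathbb{R}$ of equation 4.6 is only defined on the finite set $D=\bigcup_{i=1}^{k}D_i$, so its values $\{f(\boldsymbol{x}_j)\}_{j=1}^{\nu}$ on the $\nu$ points of $D$ determine $f$ completely. Accordingly, I would refine the original partition $\{D_i\}$ into the finest possible one, namely the collection of $\nu$ singletons $\{\{\boldsymbol{x}_j\}\}_{j=1}^{\nu}$, with the trivial ``linear function'' on $\{\boldsymbol{x}_j\}$ taken to be any affine functional that evaluates to $f(\boldsymbol{x}_j)$ at $\boldsymbol{x}_j$ (e.g., the constant $f(\boldsymbol{x}_j)$). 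Realizing the refined piecewise linear function on this refined partition automatically realizes $f$ itself, since the two agree on every point of $D$.

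Next, I would apply Lemma 4 to the $\nu$ singleton data sets $D_j' = \{\boldsymbol{x}_j\}$: it guarantees that they are distinguishable, in the sense of Definition 11, via an explicit inductive construction of distinguishing hyperplanes. With distinguishability in hand, Proposition 2 applies directly to the refined piecewise linear function with $k=\nu$ singleton subdomains, yielding a three-layer network $n^{(1)}m^{(1)}1'^{(1)}$ with $m\ge \nu(n+1)$ hidden units that interpolates every $(\boldsymbol{x}_j,f(\boldsymbol{x}_j))$. This is exactly the desired conclusion.

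I do not expect a serious obstacle, since both Lemma 4 and Proposition 2 do the heavy lifting; the only thing to be careful about is observing that any discrete piecewise linear function can be refined to singletons without loss of generality, so that the hypothesis of Proposition 2 on distinguishable subdomains reduces to the hypothesis of Lemma 4 on singletons. The bound $m\ge\nu(n+1)$ then matches Proposition 2 with $k$ replaced by $\nu$, and there is nothing further to verify.
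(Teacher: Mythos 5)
Your proposal is correct and follows essentially the same route as the paper's own proof: refine $D$ into $\nu$ singleton subdomains, invoke Lemma 4 to make them distinguishable, and then apply the result on distinguishable subdomains to get the bound $m \ge \nu(n+1)$. The only slip is a citation: the result you describe and use is Proposition 3 (realization of a piecewise linear function with distinguishable subdomains), not Proposition 2, which concerns the construction of distinguishable regions from hyperplanes.
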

\begin{proof}
We decompose the domain $D = \bigcup_{i=1}^{k}D_i$ of a discrete piecewise linear function of equation 4.6 into $D = \bigcup_{j=1}^{\nu}D'_j$ for $j = 1, 2, \cdots, \nu$, where each $D'_j$ contains only one distinct element of $D$. Then by lemma 4 and proposition 3, the conclusion follows.
\end{proof}

\begin{cl}
Any two-category data set $D$ of the $n$-dimensional input space can be classified by a three-layer network $n^{(1)}m^{(1)}1^{(1)}$, provided that there are sufficiently many units of the hidden layer.
\end{cl}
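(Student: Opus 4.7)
The plan is to reduce the two-category classification problem to an instance of Theorem 2 and then adjust for the ReLU output unit. First I would encode the two categories numerically: pick distinct non-negative labels, say $0$ and $1$, and assign $y_i = 0$ to every $\boldsymbol{x}_i$ in one class and $y_i = 1$ to every $\boldsymbol{x}_i$ in the other. This turns the two-category data set $D$ into a set of input-output pairs that defines a discrete piecewise constant function on $D$, which by Note 4 of section 2.4 is a special case of a discrete piecewise linear function in the sense of definition 10.

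Second, I would invoke Theorem 2 directly: taking $\nu = |D|$, there exists a three-layer network $n^{(1)}m^{(1)}1'^{(1)}$ with $m \ge \nu(n+1)$ whose output interpolates every pair $(\boldsymbol{x}_i, y_i)$ exactly. This handles the bulk of the construction (choice of hidden-layer hyperplanes, distinguishability via lemma 4, and parameter setting of the linear output unit via lemmas 1 and 2).

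Third, I would convert the network from one with a linear output unit $1'^{(1)}$ to one with a ReLU output unit $1^{(1)}$. Because the two target values $0$ and $1$ are both non-negative, applying $\sigma(\cdot) = \max(0,\cdot)$ to the linear combination produced by Theorem 2 leaves the target values unchanged: $\sigma(0) = 0$ and $\sigma(1) = 1$. Thus the same weights and biases that realize the piecewise constant labeling through a linear output unit also realize it through a ReLU output unit, as already anticipated by the remark following lemma 1.

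The main obstacle is essentially bookkeeping rather than a substantive difficulty: one must make sure that the substitution of a ReLU for the linear output does not corrupt the interpolated values. Choosing non-negative labels sidesteps this, but if one instead wished to use arbitrary real labels, the alternative is to insert a bias $\beta$ into the output ReLU as in the remark after lemma 1, shifting the image into $[0,\infty)$ before applying $\sigma$. Either route gives the stated conclusion for sufficiently large $m$ (specifically $m \ge |D|(n+1)$ suffices).
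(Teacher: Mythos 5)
Your proposal is correct and follows essentially the same route as the paper: the paper's own proof likewise invokes Theorem 2 to make one category's outputs positive (constant/linear) and the other's zero, relying on the non-negativity of the targets so that the output ReLU does not disturb them. Your version merely makes explicit the label choice ($0$ and $1$) and the bookkeeping for replacing the linear output unit with a ReLU, which the paper leaves implicit.
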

\begin{proof}
By theorem 2, make the outputs of one category positive by linear functions, and make the outputs of the other category zero by constant functions, which is a two-category classification.
\end{proof}

\section{Multi-Output Case \rom{1}}
We'll explain the parameter-sharing mechanism of a three-layer network for multi-outputs in the proof of theorem 3. A solution of the last three layers of convolutional neural networks is given in corollary 4.

\begin{dfn}
A multi-dimensional discrete piecewise linear function
\begin{equation}
f: D \to \mathbb{R}^{\mu},
\end{equation}
where domain $D = \bigcup_{i=1}^{k}D_i \subset \mathbb{R}^n$ with $D_i \cap D_j = \emptyset$ for $i \ne j$ and $j=1, 2, \cdots, k$, is a function composed of $\mu$ discrete piecewise linear functions, each of which is denoted by $f_{\nu}: D \to \mathbb{R}$ for $\nu = 1, 2, \cdots, \mu$  as equation 4.6 and corresponds to the $\nu$th dimension of the codomain $\mathbb{R}^{\mu}$.
\end{dfn}

\begin{figure}[!t]
\captionsetup{justification=centering}
\centering
\includegraphics[width=3.8in, trim = {2.3cm 2cm 1.6cm 3cm}, clip]{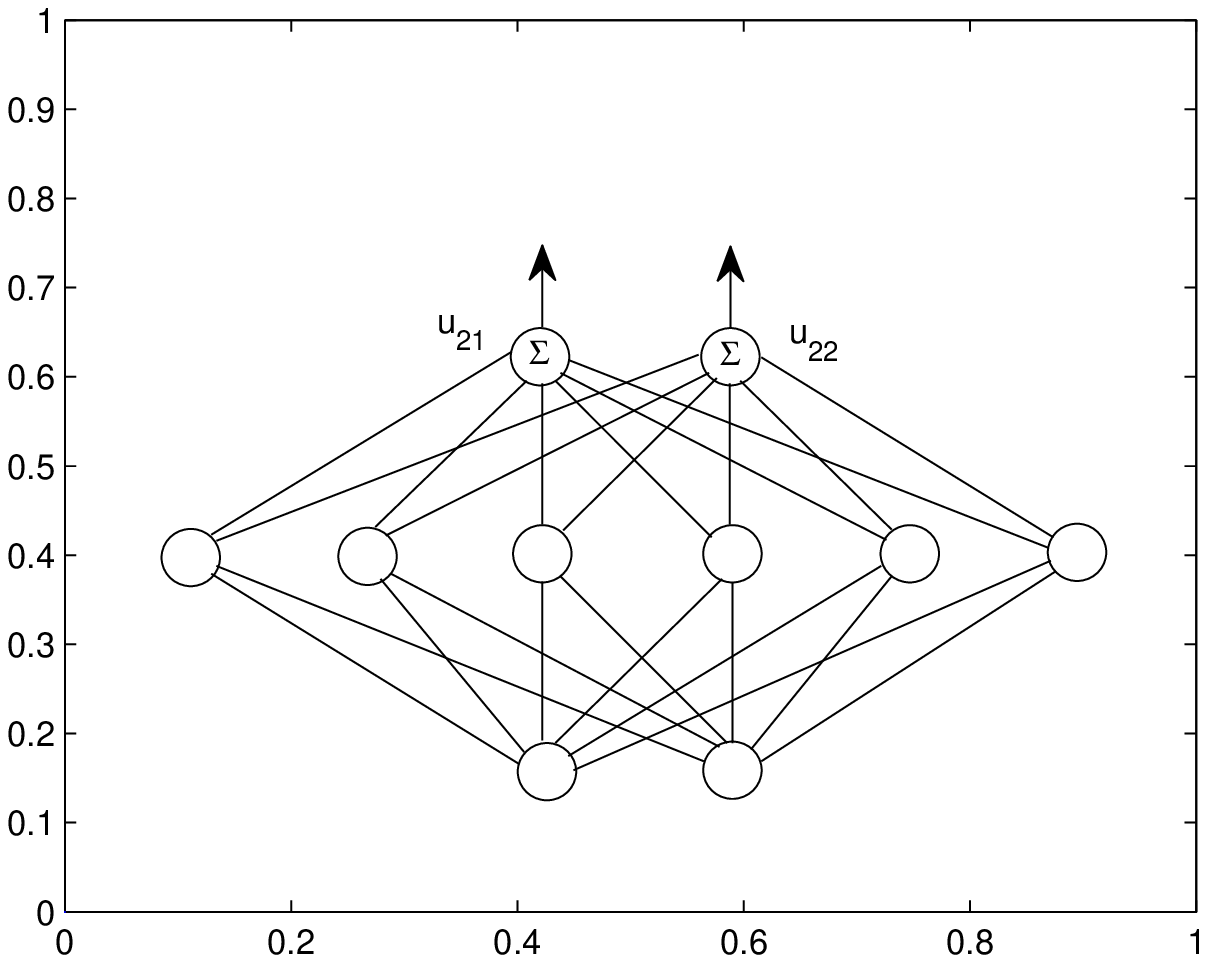}
\caption{Multi-outputs of three-layer networks.}
\label{Fig.6}
\end{figure}

\begin{thm}
Any multi-dimensional discrete piecewise linear function of equation 5.1 can be realized by a three-layer network $n^{(1)}m^{(1)}\mu'^{(1)}$ for $m \ge \nu(n+1)$ with $\nu = |D|$.
\end{thm}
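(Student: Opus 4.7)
The plan is to lift the single-output construction of Theorem 2 to the multi-output setting by exploiting the fact that the hidden layer built in that proof depends only on the domain $D$, not on the target values. Concretely, I would decompose $D = \bigcup_{j=1}^{\nu}D'_j$ into $\nu$ singleton subdomains exactly as in the proof of Theorem 2, and then apply Lemma 4 to construct a distinguishable hyperplane $l_{j_\mu}$ for each $D'_{j_\mu}$ together with the $n$ auxiliary hyperplanes produced by the matrix construction of equation 4.8. This yields a hidden layer with $\nu(n+1)$ ReLUs, and crucially the hyperplane parameters are fixed once $D$ is fixed.

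Next I would argue that this single hidden layer suffices for all $\mu$ output coordinates. For each output dimension $\nu' \in \{1,\dots,\mu\}$, the $\nu'$-th target function $f_{\nu'}$ assigns a scalar value to each point of $D$, which (on each singleton $D'_j$) trivially determines a linear function that we must reproduce at the $\nu'$-th output unit. Since the activated hyperplane set $l^+(D'_j)$ has linear-output matrix of rank $n+1$ by the construction of equation 4.8, Lemma 1 applied to the $\nu'$-th output unit produces a solution vector $\boldsymbol{\alpha}^{(\nu')}$ of output weights realizing $f_{\nu'}$ on $D'_j$; and Lemma 2 handles the redundant activated hyperplanes coming from other singletons $D'_{j'}$ whose linear-function contributions are already fixed, following the distinguishable order supplied by Lemma 4. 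Thus each of the $\mu$ output units receives its own set of $m$ weights from the common hidden layer, and the $\nu'$-th unit outputs precisely $f_{\nu'}$.

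The parameter-sharing mechanism I would emphasize is exactly this observation: the hidden-layer geometry is a property of the domain alone, so different codomain coordinates correspond merely to different choices of the output-layer weight row, obtained by solving independent linear systems of the form $\boldsymbol{\mathcal{W}}\boldsymbol{\alpha}^{(\nu')} = \boldsymbol{b}^{(\nu')}$. Because each of these $\mu$ systems has the same coefficient matrix (which has rank $n+1$) but different right-hand sides coming from $f_{\nu'}$, all of them admit solutions, giving the desired network $n^{(1)}m^{(1)}\mu'^{(1)}$.

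I do not expect a genuine obstacle; the only subtlety worth checking is that the inductive distinguishable-order bookkeeping of Lemma 4 is performed once for the entire domain $D$, independently of $\nu'$, so that the sequential application of Lemmas 1 and 2 (first on $D'_{j_1}$, then on $D'_{j_2}$, and so on) remains consistent simultaneously for every output coordinate. This coherence follows because the zero/nonzero output pattern of the hidden layer on each singleton is determined by $D$ alone, so the non-interference argument used in Proposition 3 applies verbatim to each of the $\mu$ output coordinates in parallel, yielding the claimed bound $m \ge \nu(n+1)$.
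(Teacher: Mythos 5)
Your proposal is correct and follows essentially the same route as the paper: the hidden layer (built once via Lemma 4 and the construction of Theorem 2) is shared as a property of the domain $D$ alone, and each of the $\mu$ output units independently realizes its own target function through its own input weights, by applying the Proposition 3 / Lemmas 1--2 argument in parallel. Your additional observation that the $\mu$ output coordinates amount to linear systems with a common rank-$(n+1)$ coefficient matrix and different right-hand sides is a clean restatement of the paper's parameter-sharing remark.
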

\begin{proof}
Figure \ref{Fig.6} is an example of multi-output networks with $\mu' = 2$. Use the method of theorem 2 and lemma 4 to construct the distinguishable data sets for $D = \bigcup_{i=1}^{k}D_i$, through the hidden layer of $n^{(1)}m^{(1)}\mu'^{(1)}$. Suppose that the hidden layer of Figure \ref{Fig.6} has completed the construction of distinguishable data sets for domain $D$. By proposition 3, unit $u_{21}$ of the output layer can produce any discrete piecewise linear function on $D$ by adjusting its input weights.

The case of the other output unit $u_{22}$ is similar. The parameters of $u_{22}$ can be set independently of $u_{21}$, since they share no common input weights. Thus $u_{22}$ can also produce arbitrary discrete piecewise linear function on $D$.

In general, each linear unit of the output layer can do the same work independently through their independent input weights. This is the key to the proof.
\end{proof}

\begin{rmk}
Generally speaking, to the mechanism of multi-outputs of three-layer networks, the units of the hidden layer divide the domain, while the units of the output layer share the same divided subdomains and realize the linear functions on them independently.
\end{rmk}

\begin{cl}
A three-layer network $n^{(1)}m^{(1)}\mu^{(1)}$ with $\mu$ outputs can classify any $\mu$-category data set $D$ of the $n$-dimensional input space, if the number of units of the hidden layer satisfies $m \ge \nu(n+1)$ with $\nu = |D|$.
\end{cl}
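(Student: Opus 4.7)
The plan is to reduce Corollary 4 directly to Theorem 3 by converting the classification task into the realization of a suitable multi-dimensional discrete piecewise linear function, and then to deal with the fact that the output layer here is $\mu^{(1)}$ (ReLU units) rather than $\mu'^{(1)}$ (linear units).

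First I would partition the $\mu$-category data set as $D=\bigcup_{i=1}^{\mu}D_i$ where $D_i$ is the subset of samples of category $i$, with $D_i\cap D_j=\emptyset$ for $i\ne j$. Next I would define a target multi-dimensional discrete piecewise linear function $f:D\to\mathbb{R}^{\mu}$ in the sense of Definition 15 by choosing the $\nu$-th component $f_{\nu}:D\to\mathbb{R}$ to take a fixed positive constant (say $1$) on $D_{\nu}$ and to take the constant $0$ on $D\setminus D_{\nu}$. Each $f_{\nu}$ is a discrete piecewise linear function in the sense of equation 4.6 (constant functions are a special case of linear functions, and the singleton partitioning used in Theorem 2 refines whatever partition is needed). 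This $f$ encodes a one-hot classification scheme: the $\nu$-th output fires only on samples of category $\nu$.

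Then I would invoke Theorem 3: since $|D|=\nu$ and $m\ge \nu(n+1)$, there is a three-layer network $n^{(1)}m^{(1)}\mu'^{(1)}$ realizing $f$. To finish, I need to show that the linear output layer $\mu'^{(1)}$ can be replaced by a ReLU output layer $\mu^{(1)}$ as claimed. Here the target values of every output on every point of $D$ are non-negative (either $0$ or $1$), so passing each output through $\sigma(\cdot)$ leaves it unchanged. Concretely, in the notation of the Remark following Lemma 1, set the extra bias $\beta=0$ in each output unit; the linear system to be solved (the $\mu$ copies of equation 4.2, one per output) is exactly the one already solved by Theorem 3, and the resulting input sums to each output ReLU on each point of $D$ coincide with the prescribed non-negative targets, which ReLU passes through unaltered. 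Hence the same parameter setting works with ReLU output units, giving a network $n^{(1)}m^{(1)}\mu^{(1)}$ that classifies $D$.

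The only subtle point, and the one I would write most carefully, is this last conversion from $\mu'^{(1)}$ to $\mu^{(1)}$: one must check that the target outputs are non-negative on \emph{all} points of $D$ (not just in expectation) so that ReLU is transparent. Because the construction of Theorem 3 interpolates exactly the prescribed values $f_{\nu}(x)\in\{0,1\}$ at each $x\in D$, this is immediate; no modification of the hidden layer is required, and the bound $m\ge\nu(n+1)$ is inherited verbatim from Theorem 3.
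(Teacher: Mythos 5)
Your proposal is correct and follows essentially the same route as the paper: the paper's proof likewise assigns each output unit a one-hot target (positive on its category via Corollary 3, zero elsewhere) and invokes Theorem 3 to set the $\mu$ output units independently over the shared hidden layer. Your explicit check that the ReLU output units are transparent on the non-negative interpolated targets is a detail the paper leaves implicit (it is folded into Corollary 3 and the remark after Lemma 1), but it is the same argument.
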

\begin{proof}
To the $i$th unit $u_{2i}$ for $i = 1, 2, \cdots, \mu$ of the output layer, by corollary 3, make its output positive for the $i$th category of $D$, and zero for the remaining categories. By theorem 3, each unit of the output layer can be dealt with independently for the corresponding category.
\end{proof}
\begin{rmk}
This corollary gives one solution of the subnetwork of the last three layers of convolutional neural networks \citep*{LeCun1989, LeCun1998, Krizhevsky2012}, which is fully connected and produces the final output of multi-category classification. It may explain the general classification mechanism of three-layer networks to some extent as well.
\end{rmk}

\section{Deep-Layer Network}
The main difficulty of producing a certain piecewise linear function by three-layer networks is the half-space interference as discussed in section 3.3. The concept of distinguishable data sets of section 4 is to eliminate the disturbance among hyperplanes. To solve this problem via deep-layer networks, \citet{Huang2020} resorted to a network architecture of unconnected independent modules together with a bias constraint.

In this section, we propose another method to avoid the half-space interference via deep layers, with less constraints on network architectures, whose results could explain some networks of engineering.

The interpolation methods of \citet*{DeVore2021} and \citet*{zhang2017} can also yield a solution of deep layers. Ours is distinct in two ways. The first is that it's natural for the latter to be generalized to the approximation to continuous functions, due to its property of region dividing. The second is the clear geometric meaning of each component of network architectures, which could help us to understand the mechanism of neural networks.

In the proof of lemma 6, in order to coordinate different stages of binary classification of different subdomains, redundant subnetworks only for transmitting data via affine transforms are added in the architecture, which can be considered as a type of overparameterization solution. And the further discussion will be in section 8.2.

\subsection{Interference-avoiding Principle}
The next theorem plays a fundamental role in the solution of deep ReLU networks. It in fact solves the problem of the restricted architecture of \citet{Huang2020}, where an exclusively designed bias and independent subnetwork modules must be required to produce piecewise linear approximations. It makes the universal-solution finding of some architectures of engineering possible.

\begin{figure}[!t]
\captionsetup{justification=centering}
\centering
\subfloat[Hyperplanes and data sets.]{\includegraphics[width=2.3in, trim = {4cm 3cm 4cm 2.5cm}, clip]{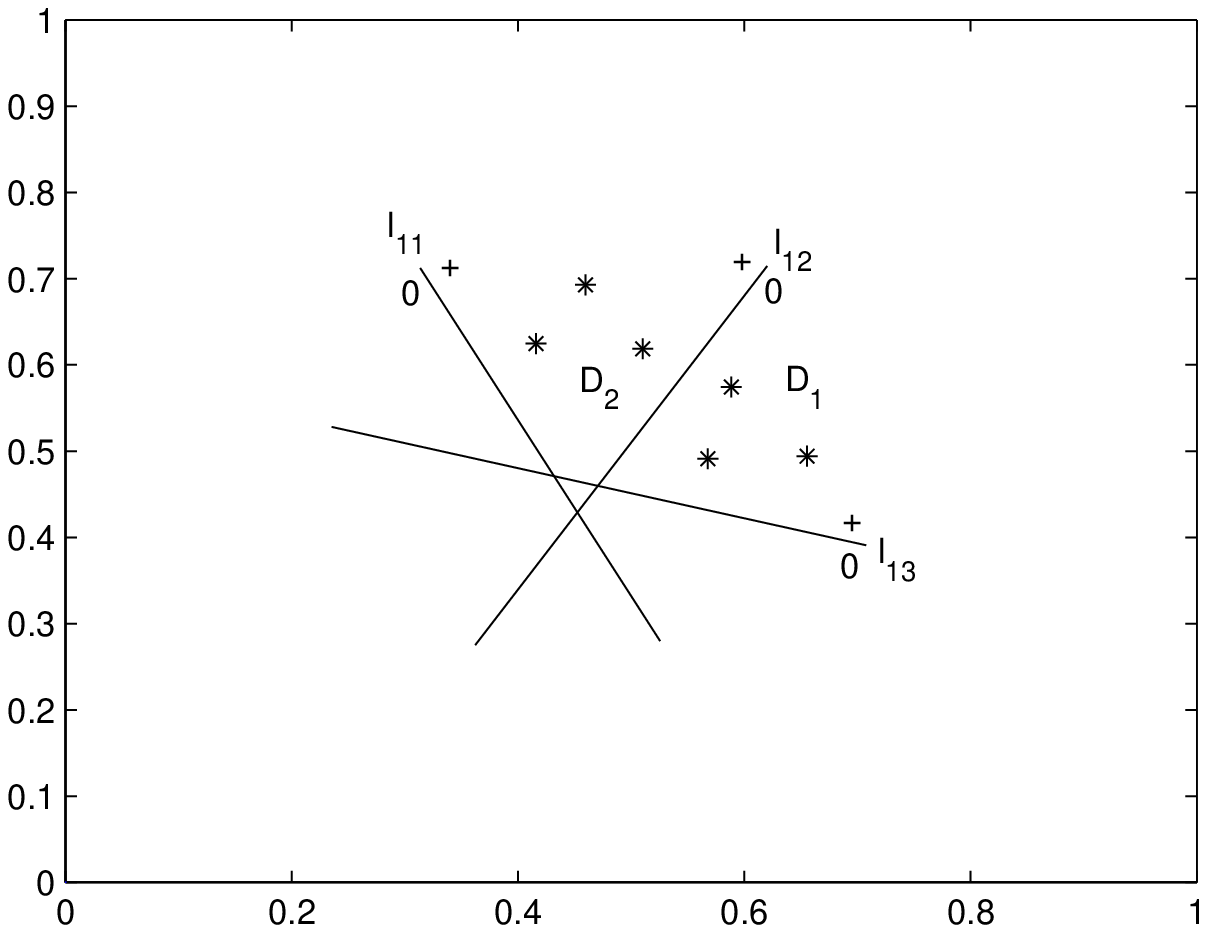}} \quad \quad \quad
\subfloat[Network architecture for (a).]{\includegraphics[width=2.3in, trim = {4cm 4cm 4.5cm 2.5cm}, clip]{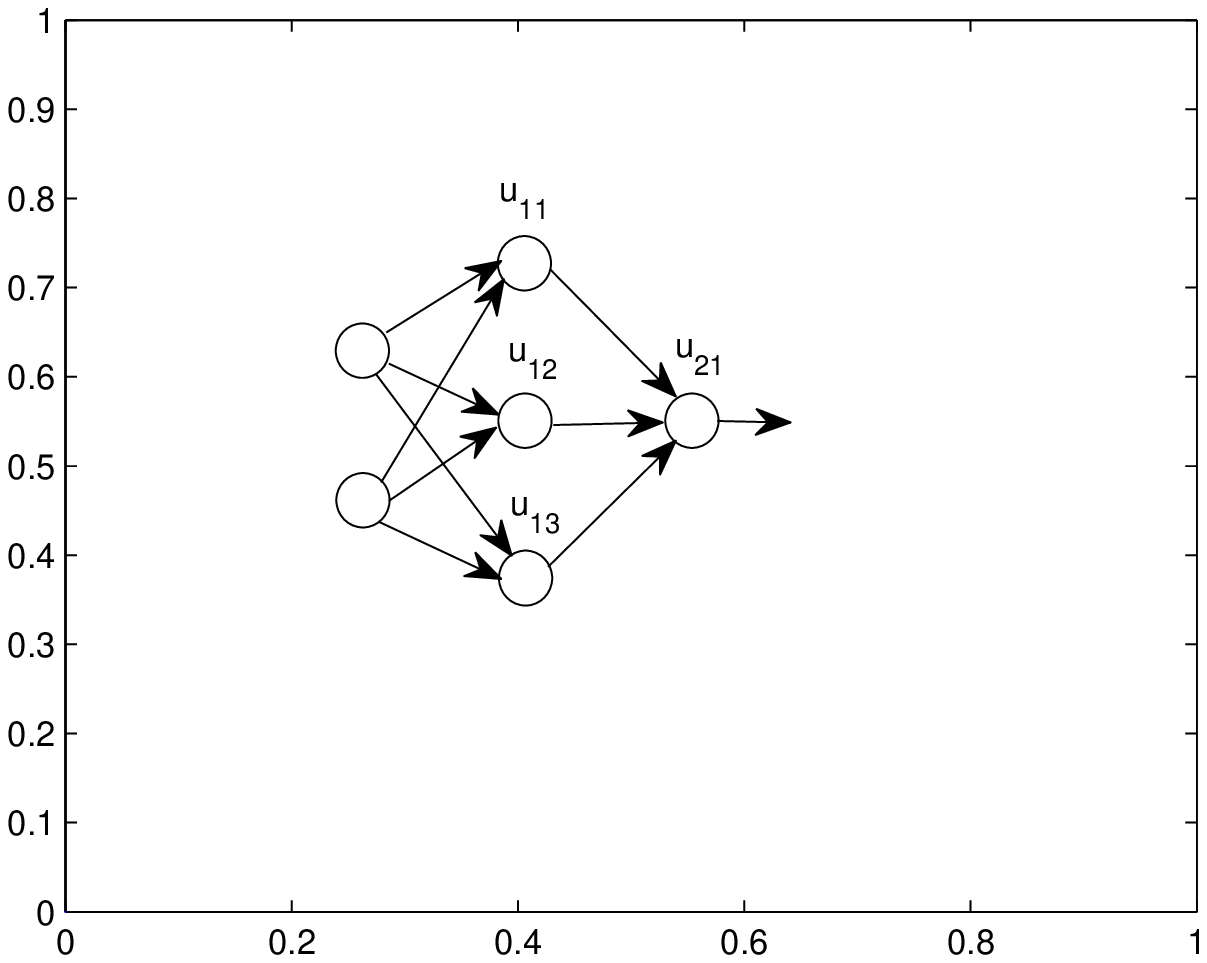}}
\caption{Interference-avoiding principle.}
\label{Fig.7}
\end{figure}

\begin{thm}[Interference-avoiding principle]
Given network $n^{(1)}m^{(1)}1^{(1)}$, let $D_1$ and $D_2$ be two data sets of the $n$-dimensional input space. Suppose that $D_1$ and $D_2$ are separated by a hyperplane $l$ such that $D_1 \subset l^0$ and $D_2 \subset l^+$, where $l$ corresponds to a unit of the first layer. Then in the second layer, if a unit is activated by $D_1$, it could be designed not to be activated by $D_2$.
\end{thm}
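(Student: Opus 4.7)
The plan is to exploit the asymmetry that $l$ produces exactly zero output on $D_1$ and strictly positive output on $D_2$: if we route the signal from $l$ into the target second-layer unit $u$ with a sufficiently negative weight, we can depress $u$'s pre-activation on $D_2$ arbitrarily far below zero without touching its behavior on $D_1$.

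More concretely, I would start by writing the pre-activation of a second-layer unit $u$, assumed to be activated by $D_1$, as
\begin{equation*}
z(\boldsymbol{x}) \;=\; \alpha\,\sigma(\boldsymbol{w}^T\boldsymbol{x}+b) \;+\; \sum_{j \ne l} \alpha_j\,\sigma(\boldsymbol{w}_j^T\boldsymbol{x}+b_j) \;+\; \beta,
\end{equation*}
where $\boldsymbol{w}^T\boldsymbol{x}+b=0$ is the equation of $l$, $\alpha$ is the weight from $l$ to $u$, and the $\alpha_j$'s and $\beta$ are the remaining input weights and bias of $u$. Fix the $\alpha_j$'s and $\beta$ to the values that already make $u$ activated by $D_1$; only $\alpha$ will be redesigned.

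Next I would split by cases. For any $\boldsymbol{x} \in D_1$ the hypothesis $D_1 \subset l^0$ gives $\sigma(\boldsymbol{w}^T\boldsymbol{x}+b)=0$, so the first summand vanishes identically and $z(\boldsymbol{x})$ is independent of $\alpha$; hence $u$ remains activated on every point of $D_1$ no matter how $\alpha$ is chosen. For any $\boldsymbol{x} \in D_2$ the hypothesis $D_2 \subset l^+$ gives $s(\boldsymbol{x}) := \sigma(\boldsymbol{w}^T\boldsymbol{x}+b) > 0$, and
\begin{equation*}
z(\boldsymbol{x}) \;=\; \alpha\, s(\boldsymbol{x}) \;+\; R(\boldsymbol{x}),
\end{equation*}
where $R(\boldsymbol{x})$ collects the contributions independent of $\alpha$. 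Because $D_2$ is finite (note 6 of section 2.4), the quantities $s_{\min} := \min_{\boldsymbol{x}\in D_2} s(\boldsymbol{x}) > 0$ and $R_{\max} := \max_{\boldsymbol{x}\in D_2} R(\boldsymbol{x})$ are well defined. Choosing any $\alpha \le -R_{\max}/s_{\min}$ then forces $z(\boldsymbol{x}) \le 0$ for every $\boldsymbol{x}\in D_2$, so $\sigma(z(\boldsymbol{x}))=0$ and $u$ is not activated by $D_2$.

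I expect the proof to be essentially routine once the decomposition above is in place; the only point requiring care is observing that the strict positivity of $s(\boldsymbol{x})$ on the finite set $D_2$ yields a positive minimum, which is what licenses a single finite choice of $\alpha$ to suppress the unit on all of $D_2$ simultaneously. Everything else is just book-keeping, and the argument uses only the first-layer separation assumption plus the sign structure of the ReLU, which is why it can serve as an architecture-free interference-avoiding principle for subsequent deep-layer constructions.
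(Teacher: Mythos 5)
Your proposal is correct and follows essentially the same route as the paper's proof: isolate the weight on the connection from the separating unit $l$, observe that it cannot affect the pre-activation on $D_1$ because $l$ outputs zero there, and use the finiteness of $D_2$ (hence $s_{\min}>0$ and bounded $R$) to choose that weight negative enough to suppress the unit on all of $D_2$. The only nit is that your threshold should be $\alpha \le \min\bigl(0, -R_{\max}/s_{\min}\bigr)$, since for $R_{\max}<0$ a positive $\alpha$ below $-R_{\max}/s_{\min}$ need not give $z\le 0$; the paper sidesteps this by simply taking the weight to be ``a negative number small enough.''
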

\begin{proof}
We first explain the idea by an example. In Figure \ref{Fig.7}a, $D_1$ and $D_2$ are two data sets, both having three elements. Each line $l_{1i}$ for $i = 1, 2, 3$ of Figure \ref{Fig.7}a corresponds to unit $u_{1i}$ of the first layer of network $2^{(1)}3^{(1)}1^{(1)}$ of Figure \ref{Fig.7}b. Let $w_i$ be the output weight of $u_{1i}$ of the first layer, as well as the $i$th input weight of $u_{21}$ of the second layer. As shown in Figure \ref{Fig.7}a, $D_1$ activates $u_{11}$ and $u_{13}$, while $D_2$ activates $u_{11}$, $u_{12}$ and $u_{13}$. There exists line $l_{12}$ such that $D_1 \subset l_{12}^0$ and $D_2 \subset l_{12}^+$.

Let $D_1'$ and $D_{2}'$ be the mapped data sets of $D_1$ and $D_2$ by the first layer, respectively. To any $\boldsymbol{x} \in D_1$, only $u_{11}$ and $u_{13}$ output nonzero values. Thus, we can set weights $w_1$, $w_3$ and bias $b$ of $u_{21}$ to make $D_1'$ activate $u_{21}$. After that, $w_1$, $w_3$ and $b$ are fixed to preserve the result.

Simultaneously, we don't want $u_{21}$ to be activated by $D_2'$. Since $D_1 \subset l_{12}^0$, the output of $u_{12}$ for $D_1$ is zero; so the output weight $w_2$ of $u_{12}$ has no influence on the activation of $u_{21}$ by $D_1$. However, when $D_2$ is the input, the output of $u_{12}$ is nonzero and the weight $w_2$ could influence the activation of $u_{21}$. That's the key point.

To any $\boldsymbol{x}^{(1)} \in D_1' \cup D_2'$, denote the input sum to $u_{21}$ by
\begin{equation}
s = \boldsymbol{w}^T\boldsymbol{x}^{(1)} + b = w_{2}x_{2} + C
\end{equation}
where $\boldsymbol{w} = [w_1, w_2, w_3]^T$, $\boldsymbol{x}^{(1)} = [x_{1}, x_{2}, x_{3}]^T$ is the output vector of the first layer, and $C = w_1x_{1} + w_3x_{3} + b$ with $w_1$, $w_3$ and $b$ fixed for $D_1$. To only $D_2'$, we write equation 6.1 as
\begin{equation}
s_{i} = \boldsymbol{w}^T\boldsymbol{x}^{(1)}_i + b = w_{2}x_{i2} + C_i,
\end{equation}
where $\boldsymbol{x}^{(1)}_i = [x_{i1}, x_{i2}, x_{i3}]^T \in D_2'$ is the $i$th element of $D_2'$ for $i = 1, 2, 3$, and
\begin{equation}
C_i = w_1x_{i1} + w_3x_{i3} + b.
\end{equation}
We should adjust $w_{2}$ of equation 6.2 to make $s_i < 0$ for all $i$, by which unit $u_{21}$ would not be activated by $D_2$.

Because the number of elements of $D_{2}'$ is finite, we have
\begin{equation}
\boldsymbol{0} < \boldsymbol{x}_{min} \le \boldsymbol{x}_{i}^{(1)} \le \boldsymbol{x}_{max},
\end{equation}
where the inequalities or equalities for vectors describe the relation of their entries, such as $\boldsymbol{x}_{min} \le \boldsymbol{x}_{1}$ meaning that each entry of $\boldsymbol{x}_{1}$ is greater than or equal to the corresponding entry of $\boldsymbol{x}_{min}$; $\boldsymbol{0}$ is a zero vector; $ \boldsymbol{x}_{min} > \boldsymbol{0}$ holds since the nonzero output of a ReLU is always greater than zero. Thus, both $x_{i2}$ and $C_i$ of equation 6.2 are bounded and $x_{i2} > 0$. With this constraint, if $w_2$ is a negative number small enough, $s_i <0$ for all $i$.

To the general case of $n^{(1)}m^{(1)}1^{(1)}$, suppose that $D_1 \subset \l_{\nu}^0$ and $D_2 \subset l_{\nu}^+$, where hyperplane $l_{\nu}$ corresponds to the $\nu$th unit of the first layer. The bias and some input weights of $u_{21}$ of the second layer have been set to activate $u_{21}$ by $D_1'$. Let $\boldsymbol{x}^{(1)}_i = [x_{i1} \ x_{i2} \  \cdots \ x_{im}]^T$ be the $i$th element of $D_2'$ for $i = 1, 2, \cdots, k$ with $k = |D_2'|$. We change equation 6.2 into
\begin{equation}
s_i = \boldsymbol{w}^T\boldsymbol{x}_i^{(1)} + b = w_{\nu}x_{i\nu} + C_i,
\end{equation}
where
\begin{equation}
C_i = \sum_{\mu \in A}w_{\mu}x_{i{\mu}} + b
\end{equation}
with
\begin{equation}
A = \{\tau | \tau \ne \nu, l_{\tau} \in l^+(\boldsymbol{x}_{i}^{(1)})\},
\end{equation}
in which $l^+(\boldsymbol{x}_{i}^{(1)})$ is the set of hyperplanes activated by $\boldsymbol{x}_{i}^{(1)}$. In equation 6.6, parameters $w_{\mu}$'s and $b$ are fixed, with some of which for $D_1'$ and others not used (if any).

Note that equation 6.4 also holds for the general case. Then equations 6.5, 6.6 and 6.4 can always yield a solution of $w_{\nu}$ of equation 6.5 such that $s_i < 0$ for all $i$; that is, $u_{21}$ of the second layer cannot be activated by $D_2$. This completes the proof.
\end{proof}

\begin{rmk}
To a linear function instead of a set of discrete points defined on a bounded domain, equation 6.4 still holds; thus, the proof of this theorem is applicable to the case of continuous piecewise linear functions.
\end{rmk}

\begin{cl}
Using the notations of theorem 4, suppose that there exist $k$ hyperplanes for $k \ge 1$ satisfying $D_1 \subset \prod_{\nu = 1}^{k}l_{\nu}^0$ and $D_2 \subset \prod_{\nu = 1}^{k}l_{\nu}^+$. If a unit of the second layer can be activated by $D_1$, we can make it not activated by $D_2$.
\end{cl}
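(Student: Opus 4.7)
The plan is to reduce the corollary directly to Theorem 4 by observing that any single one of the $k$ hyperplanes already supplies the separation that Theorem 4 requires. Concretely, pick any index $\nu_0 \in \{1, 2, \ldots, k\}$. The hypothesis gives $D_1 \subset l_{\nu_0}^0$ and $D_2 \subset l_{\nu_0}^+$, which is exactly the setup in which Theorem 4 was stated, so applying the theorem to the single hyperplane $l_{\nu_0}$ yields the conclusion immediately.

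The only matter to verify is that this reduction is compatible with the standing assumption that $u_{21}$ is already activated by $D_1$. That assumption fixes the bias of $u_{21}$ together with the output weights of those first-layer units whose hyperplanes lie in $l^+(D_1)$. Since $D_1 \subset l_{\nu_0}^0$, the unit associated with $l_{\nu_0}$ outputs zero on every $\boldsymbol{x} \in D_1$, so its output weight $w_{\nu_0}$ into $u_{21}$ is untouched by the $D_1$-activation argument and remains a free parameter. The proof of Theorem 4 then tunes $w_{\nu_0}$ to a sufficiently negative value, using the boundedness $\boldsymbol{0} < \boldsymbol{x}_{\min} \le \boldsymbol{x}_i^{(1)} \le \boldsymbol{x}_{\max}$ on the finite second-layer image of $D_2$, to drive the input sum $s_i$ to $u_{21}$ strictly below zero for every point of $D_2'$. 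The remaining $k-1$ hyperplanes simply play no role; they are available but unused.

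There is no substantive obstacle: the corollary is essentially a remark that Theorem 4 uses only the existence of \emph{some} separating hyperplane in the first layer, so when several are available any one of them is enough. If a more ``distributed'' construction is desired, the same argument can be repeated with all $k$ free weights $w_1, \ldots, w_k$ set to small negative values whose combined contribution $\sum_{\nu=1}^{k} w_\nu x_{i\nu}$ dominates the fixed constants $C_i$ coming from the $D_1$-activation of $u_{21}$; the uniform bound on $\boldsymbol{x}_i^{(1)}$ guarantees that this can be done simultaneously for all $\boldsymbol{x}_i \in D_2'$. Either variant delivers the statement, with the single-hyperplane reduction being the cleanest.
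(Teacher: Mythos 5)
Your proposal is correct, but your primary route differs from the paper's. The paper does not reduce to a single hyperplane: it rewrites the key input-sum identity of Theorem~4's proof, replacing the single term $w_{\nu}x_{i\nu}$ in equation 6.5 by the sum $\sum_{\nu=1}^{k}w_{\nu}x_{i\nu}$, and then reruns the argument of Theorem~4 with all $k$ free weights driven negative — i.e.\ the paper's proof is exactly your ``distributed'' variant. Your main argument instead observes that any single $l_{\nu_0}$ already satisfies the hypothesis of Theorem~4 ($D_1 \subset l_{\nu_0}^0$, $D_2 \subset l_{\nu_0}^+$), so the corollary follows by applying the theorem as a black box, with the remaining $k-1$ hyperplanes' output weights absorbed into the bounded constants $C_i$ as ``fixed but unused'' parameters — which Theorem~4's proof explicitly permits. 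You also correctly verify the one point that needs checking, namely that $w_{\nu_0}$ is not constrained by the $D_1$-activation step because $l_{\nu_0}$ outputs zero on $D_1$. Your reduction is the more economical proof of the statement as literally written; the paper's sum form is what actually gets used downstream (e.g.\ in proposition 4 and lemma 6, where the separating hyperplanes come in pairs from lemma 5 and it is natural to tune all of their output weights together), so it buys a formula that later constructions can cite directly. Both arguments are sound and rest on the same boundedness of the finite mapped data set $D_2'$.
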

\begin{proof}
Change the term $w_{\nu}x_{i\nu}$ of equation 6.5 into a sum form as
\begin{equation}
s_i = \boldsymbol{w}^T\boldsymbol{x}_i^{(1)} + b = \sum_{\nu = 1}^{k}w_{\nu}x_{i\nu} + C_i.
\end{equation}
The remaining proof is similar to that of theorem 4.
\end{proof}

\begin{cl}
Given network $n^{(1)}1^{(1)}$, $D_1$ and $D_2$ are two data sets of the $n$-dimensional input space. Denote a point of the input space by $\boldsymbol{x} = [x_1,x_2, \cdots, x_n]^T$. Suppose that to $D_1$, there are $k$ dimensions satisfying $x_{i_1} = 0, x_{i_2} = 0, \cdots, x_{i_{k}} = 0$ for $1 \le i_k \le n$ with $1 \le k < n$; and to $D_2$, the coordinate values of the previous $k$ dimensions are positive instead of zero. Then if a unit of the first layer is activated by $D_1$, it can be designed not to be activated by $D_2$.
\end{cl}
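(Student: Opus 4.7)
The plan is to reduce the statement to Corollary 5 (equivalently, to rerun the argument of Theorem 4) with the $k$ coordinate hyperplanes $\{x_{i_j} = 0\}_{j=1}^{k}$ playing the role of the $k$ hyperplanes $l_\nu$ in that corollary. The hypotheses are perfectly analogous: for every $\boldsymbol{x} \in D_1$ the coordinates $x_{i_1},\ldots,x_{i_k}$ vanish, while for every $\boldsymbol{x}\in D_2$ these same coordinates are strictly positive, which is exactly the activation/non-activation pattern that a preceding ReLU layer produces in Corollary 5.

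The execution begins by writing the input sum to the single unit of the first layer as
\[
s(\boldsymbol{x}) \;=\; \sum_{\nu=1}^{n} w_\nu x_\nu + b \;=\; \sum_{j=1}^{k} w_{i_j} x_{i_j} \;+\; C(\boldsymbol{x}),
\]
where $C(\boldsymbol{x}) := \sum_{\mu \notin \{i_1,\ldots,i_k\}} w_\mu x_\mu + b$ collects the contributions of the remaining $n-k$ coordinates together with the bias. Since $x_{i_j} = 0$ on $D_1$ for every $j$, the value $s(\boldsymbol{x})$ on $D_1$ reduces to $C(\boldsymbol{x})$, so the weights $w_{i_1},\ldots,w_{i_k}$ are irrelevant for activation by $D_1$. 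Using the assumption that the unit can be activated by $D_1$, first fix the weights $w_\mu$ with $\mu \notin \{i_1,\ldots,i_k\}$ together with $b$ so that $C(\boldsymbol{x}) > 0$ for all $\boldsymbol{x}\in D_1$.

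With those weights frozen, $C$ becomes a fixed real-valued function on $D_2$ and, because $|D_2|$ is finite, there exist constants with
\[
0 < x_{\min} \le x_{i_j}(\boldsymbol{x}) \le x_{\max}, \qquad |C(\boldsymbol{x})| \le C_{\max}, \qquad \forall\,\boldsymbol{x}\in D_2,\ j=1,\ldots,k,
\]
which is precisely the analogue of inequality (6.4). Choosing, say, $w_{i_1}$ sufficiently negative (and the remaining $w_{i_j}$ zero, or likewise negative) forces $\sum_j w_{i_j}x_{i_j}(\boldsymbol{x}) + C(\boldsymbol{x}) < 0$ for every $\boldsymbol{x}\in D_2$, so the ReLU outputs zero on $D_2$ while still being activated by $D_1$. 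There is no real obstacle beyond noticing that the coordinate conditions on $D_1$ and $D_2$ already supply the sign structure that Corollary 5 derived from a preceding ReLU layer; the only technical ingredient — boundedness and strict positivity of the distinguishing coordinates on $D_2$ — is immediate from $|D_2|<\infty$ and the hypothesis $x_{i_j}>0$ on $D_2$.
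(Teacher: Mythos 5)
Your proposal is correct and follows essentially the same route as the paper: the paper's proof of this corollary is precisely the observation that the input layer can be treated as the first layer of Corollary 5 (with the zero/positive coordinates playing the role of the deactivated/activated ReLU outputs), and your explicit decomposition of the input sum with the boundedness argument on the finite set $D_2$ is just the Theorem 4 computation written out for this special case.
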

\begin{proof}
We consider the input layer of $n^{(1)}1^{(1)}$ as the first layer of $n^{(1)}m^{(1)}1^{(1)}$ of corollary 5, with the number of units changed. Then the condition of this corollary that $k \ge 1$ corresponds to that of corollary 5. Thus the conclusion holds.
\end{proof}

\subsection{Application of the Principle}
We propose a decoder-like architecture to implement a piecewise linear function in theorem 5, as an application of the interference-avoiding principle of theorem 4.
\begin{lem}
To network $n^{(1)}m^{(1)}$, if two data sets $D_1$ and $D_2$ of the $n$-dimensional input space are linearly separable, we can construct $m = k_1 + k_2$ hyperplanes corresponding to the units of the first layer, such that $D_1 \subset \prod_{i=1}^{k_1}l_i^+\prod_{j=k_1+1}^{m}l_j^0$ and $D_2 \subset \prod_{i=1}^{k_1}l_i^0\prod_{j=k_1+1}^{m}l_j^+$. To $D_{\nu}$ for $\nu = 1, 2$, let $D_{\nu}'$ be the mapped data set of $D_{\nu}$ by the the fist layer; and if $k_1 = k_2 = n$, $D_{\nu}'$ is an affine transform of $D_{\nu}$ for all $\nu$ by the construction method.
\end{lem}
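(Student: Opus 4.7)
The plan is to build the two groups of hyperplanes symmetrically, each obtained by perturbing a single separating hyperplane by the same mechanism used in equation 4.8. Since $D_1$ and $D_2$ are linearly separable, I would choose a hyperplane $\ell$ with $D_1 \subset \ell^+$ and $D_2 \subset \ell^0$, and by translating the bias slightly ensure strict inequality on $D_2$, leaving slack for perturbations. A symmetric hyperplane $\ell'$ separating in the opposite direction (so $D_2 \subset \ell'^+$ and $D_1 \subset \ell'^0$) handles $D_2$.

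To construct $\ell_1,\ldots,\ell_{k_1}$, I would apply the perturbation scheme of equation 4.8 to $\ell$: assuming without loss of generality that the first weight entry $w_1$ is nonzero, keep the first row fixed and perturb the remaining weight entries and the bias by polynomial powers of small, distinct $\varepsilon_j > 0$. Since $D_1 \cup D_2$ is finite, the unperturbed functional $\boldsymbol{w}^T\boldsymbol{x} + b$ attains bounded values on it, with a strictly positive gap between its values on $D_1$ and on $D_2$; choosing $\max_j \varepsilon_j$ smaller than that gap ensures each perturbed $\ell_i$ still satisfies $D_1 \subset \ell_i^+$ and $D_2 \subset \ell_i^0$. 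Applying the same procedure to $\ell'$ yields $\ell_{k_1+1},\ldots,\ell_m$ with the mirror activation pattern, giving the required inclusions.

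For the affine-transform claim with $k_1 = k_2 = n$, focus on $D_1$ (the argument for $D_2$ is identical). The first layer restricted to $D_1$ sends $\boldsymbol{x}$ to $(\boldsymbol{w}_1^T\boldsymbol{x}+b_1,\ldots,\boldsymbol{w}_n^T\boldsymbol{x}+b_n,0,\ldots,0)$, since the last $n$ hyperplanes are inactive on $D_1$ and the ReLU is the identity on the first $n$ strictly positive outputs. The map is affine, so it remains to check that the $n \times n$ weight matrix $W$ formed by $\boldsymbol{w}_1,\ldots,\boldsymbol{w}_n$ is invertible. Subtracting the first column of $W$ from the others and factoring $\varepsilon_j$ out of row $j+1$, $\det W$ reduces to $w_1$ times an $(n-1)\times(n-1)$ Vandermonde determinant in the $\varepsilon_j$'s, which is nonzero since the $\varepsilon_j$'s are distinct and nonzero. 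Hence $W$ is invertible, and the restriction of the first-layer map to $D_1$ is an affine isomorphism onto its image in the first $n$ coordinates, padded with zeros in the last $n$.

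The main obstacle is coordinating the two requirements the perturbations must satisfy simultaneously: they must be small enough to preserve strict one-sided separation of the finite sets $D_1$ and $D_2$, yet generic enough (distinct, nonzero) to make the weight submatrix invertible for the affine-transform conclusion. Both conditions are open and generic, so any sufficiently small choice of distinct positive $\varepsilon_j$'s meets them at once, which is exactly the mechanism already exploited in the proof of proposition 1.
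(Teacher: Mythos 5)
Your proposal is correct and follows essentially the same route as the paper: one separating hyperplane for each orientation, replicated by the $\varepsilon$-perturbation scheme of equation 4.8 (theorem 4 of \citet{Huang2020}) with $\varepsilon_j$'s small enough to preserve both sides of the classification, and the affine-transform claim settled by nonsingularity of the resulting $n\times n$ weight submatrix. The paper simply cites \citet{Huang2020} for the nonsingularity and classification-preservation facts, whereas you spell out the Vandermonde determinant and the gap argument explicitly; the content is the same.
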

\begin{proof}
The proof is on the basis of theorem 4 of \citet{Huang2020}, whose thought was also mentioned in proposition 1 of this paper.

Since $D_1$ and $D_2$ are linearly separable, a hyperplane $l_1$ could be found with $D_1 \subset l_1^+$ and $D_2 \subset l_1^0$. Then construct other $k_1-1$ hyperplanes $l_i$'s for $i = 2, 3, \cdots, k_1$ by the method of theorem 4 of \citet{Huang2020}, having the same classification effect as $l_1$, that is, $D_1 \subset l_i^+$ and $D_2 \subset l_i^0$ for all $i$.

Also according to $l_1$, by the same method, construct other $k_2$ hyperplanes with the output property reversed in comparison with the above case, such that $D_1 \subset \prod_{j=k_1 + 1}^{m}l_j^0$ and $D_2 \subset \prod_{j=k_1 + 1}^{m}l_j^+$.

To the mapped $D_1'$ of $D_1$ by the first layer, each of its elements can be expressed as
\begin{equation}
\boldsymbol{x}^{(1)} = \begin{bmatrix} {{\boldsymbol{x}}'}^T, \boldsymbol{0}^T \end{bmatrix}^T,
\end{equation}
where the nonzero subvector $\boldsymbol{x}'$ of size $k_1 \times 1$ comes from $\prod_{i=1}^{k_1}l_i^+$, and zero subvector $\boldsymbol{0}$ of size $(m-k_1) \times 1$ from $\prod_{j=k_1+1}^{m}l_j^0$. The zero part $\boldsymbol{0}$ of equation 6.9 has no influence on the data structure of $D_1'$. That is, we can consider $D_1'$ equivalent to a corresponding set whose each element is in the form of ${\boldsymbol{x}}'$ of equation 6.9.

By the construction method of \citet{Huang2020}, when $k_1 = n$, $\boldsymbol{x}'$ is an affine transform of $\boldsymbol{x}$ of the input space. Thus, $D_1'$ is an affine transform of $D_1$, and similarly for the case of $D_2'$.
\end{proof}

The proposition below is an example of how the interference-avoiding principle could be used to avoid the disturbances among hyperplanes through deep layers, which can help to understand the more general case of lemma 6.

\begin{figure}[!t]
\captionsetup{justification=centering}
\centering
\includegraphics[width=3.5in, trim = {2.1cm 2.3cm 2cm 1cm}, clip]{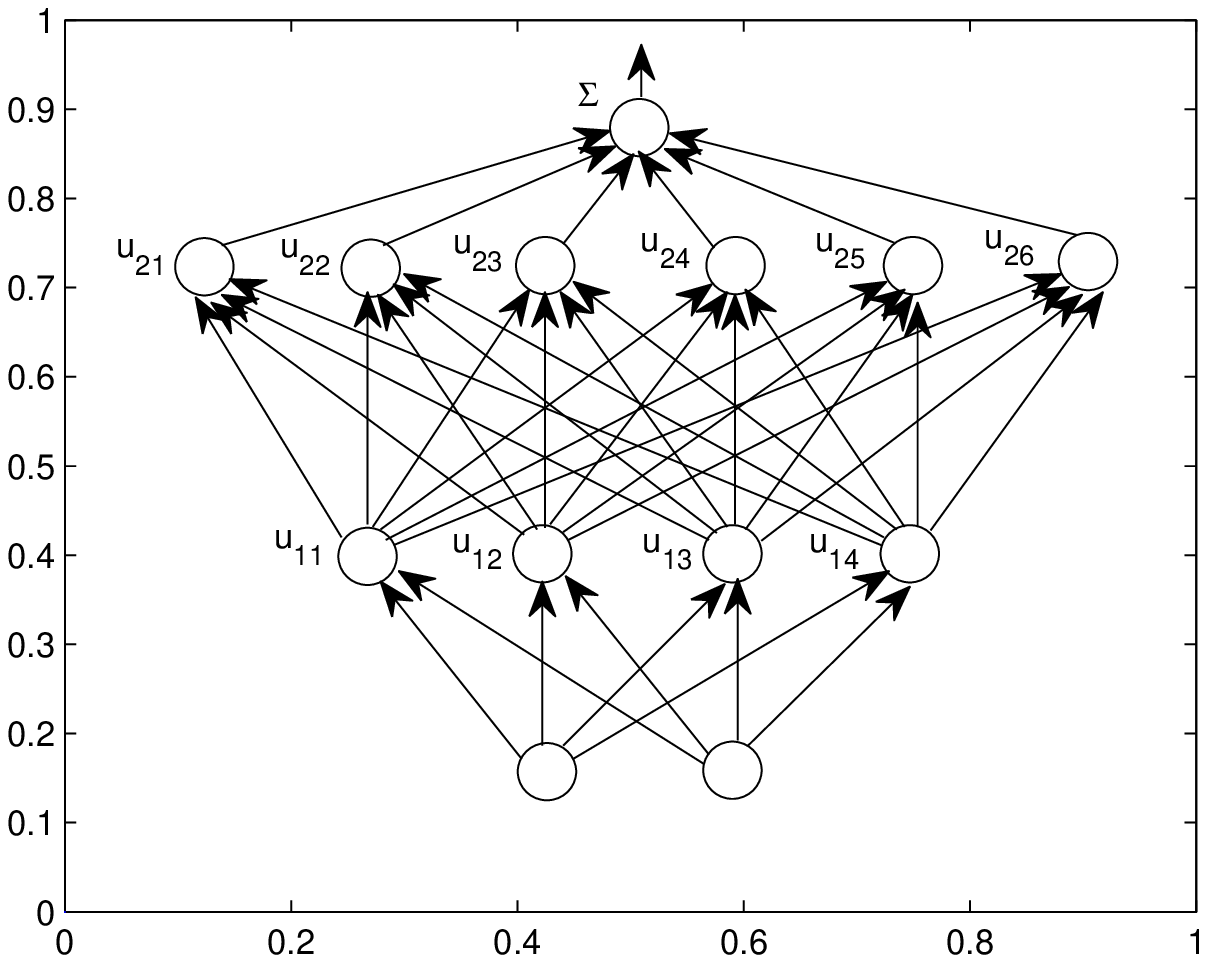}
\caption{Application of the interference-avoiding principle.}
\label{Fig.8}
\end{figure}

\begin{prp}
Let $D_1$ and $D_2$ be two data sets of two-dimensional input space, which are linearly separable. Any piecewise linear function of equation 4.6 on domain $D = D_1 \cup D_2$ could be realized by a fourth-layer network $2^{(1)}4^{(1)}6^{(1)}1'$ as shown in Figure \ref{Fig.8}. The construction of a linear function on one of the two subdomains (such as $D_1$) has no impact on that of the other one (such as $D_2$), through the parameter setting of more than one hidden layers.
\end{prp}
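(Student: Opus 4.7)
The plan is to use lemma 6 to make the first hidden layer route $D_1$ and $D_2$ into disjoint coordinate blocks of the second-layer input, then split the six units of the second hidden layer into two independent triples, one per subdomain, using lemma 1 to realize the prescribed linear function on each and corollary 6 to enforce non-activation on the other.

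First I would apply lemma 6 with $n = k_1 = k_2 = 2$. Since $D_1,D_2$ are linearly separable, this produces four hyperplanes $l_1,\ldots,l_4$, realized by the four units of the first hidden layer, with $D_1 \subset l_1^+ l_2^+ l_3^0 l_4^0$ and $D_2 \subset l_1^0 l_2^0 l_3^+ l_4^+$. Writing the output of this layer as $(y_1,y_2,y_3,y_4)$, every $\boldsymbol{x}\in D_1$ is sent to $(y_1,y_2,0,0)$ with $(y_1,y_2)$ an affine image of $\boldsymbol{x}$, and every $\boldsymbol{x}\in D_2$ to $(0,0,y_3,y_4)$ with $(y_3,y_4)$ an affine image of $\boldsymbol{x}$. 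Thus $D_1'$ and $D_2'$ live in complementary two-dimensional coordinate slices of the second-layer input.

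Next, partition the six second-layer units into groups $G_1,G_2$ of three units each, with $G_1$ serving $D_1'$ and $G_2$ serving $D_2'$. For the three units of $G_1$, set the weights on $(y_1,y_2)$ and the biases via the recipe of equation 4.8 applied inside the slice $y_3=y_4=0$, so that all of $D_1'$ lies in the common positive half-space of the three resulting hyperplanes and the associated $3\times 3$ linear-output matrix has rank $3$. By lemma 1 the output-layer weights $\alpha_i$ for the units of $G_1$ can then be chosen to realize any prescribed linear function on $D_1'$, hence, via the affine equivalence of $D_1'$ with $D_1$, any prescribed linear function on $D_1$. To prevent the units of $G_1$ from being activated by $D_2'$, I would then set the remaining two weights of each unit of $G_1$, those on $(y_3,y_4)$, to be sufficiently large negative numbers; because every point of $D_2'$ has $y_3,y_4 > 0$ and $|D_2'|$ is finite, this drives the pre-activation $w_3 y_3 + w_4 y_4 + b$ strictly below zero on $D_2'$, while leaving the pre-activation on $D_1'$ unchanged since $y_3 = y_4 = 0$ there. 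This final step is precisely corollary 6. Build $G_2$ by the symmetric recipe on the $(y_3,y_4)$ block, using large negative weights on $(y_1,y_2)$ to block $D_1'$. The single linear output unit then sums the six second-layer outputs: on $D_1$ only $G_1$ fires and reproduces the prescribed linear function on $D_1$, while on $D_2$ only $G_2$ fires and reproduces the prescribed linear function on $D_2$.

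The main obstacle I expect is reconciling, within a single unit of $G_1$ (and symmetrically $G_2$), the rank-$3$ condition of lemma 1 with the interference-avoiding condition of corollary 6. The key observation that dissolves the tension is that the two conditions act on complementary coordinate blocks of the second-layer input: the $(y_1,y_2)$-weights and bias of each unit of $G_1$ govern both the rank construction and the value of the linear function on $D_1'$, while the $(y_3,y_4)$-weights are freely available to block $D_2'$ without disturbing either the rank condition or the linear function on $D_1'$ (since $y_3 = y_4 = 0$ on $D_1'$); symmetrically for $G_2$. Because $G_1$ and $G_2$ occupy disjoint units and are decoupled in this block-structured way, the parameter choice for one subdomain is made without reference to the other, which is the stronger decoupling claim of the proposition.
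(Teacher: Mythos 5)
Your proposal is correct and follows essentially the same route as the paper's proof: the block-structured first layer sending $D_1$ and $D_2$ to complementary coordinate slices via affine transforms, a $3+3$ split of the second layer with the equation 4.8 recipe and lemma 1 for the rank-$3$ condition, and the interference-avoiding mechanism (sufficiently negative weights on the complementary block) to deactivate each triple on the other subdomain. The only discrepancies are citation labels: the four-hyperplane construction you use is lemma 5 (not lemma 6), and the paper invokes corollary 5 rather than corollary 6 for the blocking step, though both statements apply to your coordinate-block formulation.
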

\begin{proof}
Because $D_1$ and $D_2$ are linearly separable, by lemma 5, we can find four lines such that $D_1 \subset l_{11}^+l_{12}^+l_{13}^0l_{14}^0$ and $D_2 \subset l_{11}^0l_{12}^0l_{13}^+l_{14}^+$, where line $l_{1j}$ for $j=1, 2, 3, 4$ corresponds to unit $u_{1j}$ of the first layer; and the mapped data sets $D_1'$ and $D_2'$ by the first layer are the affine transforms of $D_1$ and $D_2$, respectively.

In the first layer, only $u_{11}$ and $u_{12}$ are activated by $D_1$, having nonzero output when the input is $D_1$. We want units $u_{21}$, $u_{22}$ and $u_{23}$ of the second layer activated by $D_1'$, but not activated by $D_2'$. For example, to $u_{21}$, adjust its input weights associated with $u_{11}$ and $u_{12}$, as well as its bias, to make it activated by $D_1'$, regardless of the weights related to $u_{13}$ and $u_{14}$ due to their zero outputs for $D_1$. After that, fix the adjusted parameters of $u_{21}$, preserving the activation by $D_1$.

For the deactivation of $u_{21}$ by $D_2$, because $D_1 \subset l_{13}^0l_{14}^0$ and $D_2 \subset l_{13}^+l_{14}^+$, by corollary 5, $u_{21}$ could be designed not to be activated by $D_2$ through the parameter setting of output weights of $u_{13}$ and $u_{14}$. The cases of $u_{22}$ and $u_{23}$ are similar.

In the same way, units $u_{24}$, $u_{25}$ and $u_{26}$ of the second layer could be activated by $D_2$ but not by $D_1$. The next step is to realize the linear function on subdomains $D_1$ and $D_2$.

Because $D_1 \subset l_{11}^+l_{12}^+$, the mapped data set $D_1'$ by the first layer is an affine transform of $D_1$. By lemma 1, the network can produce any linear function on $D_1'$ via its activated units $u_{21}$, $u_{22}$ and $u_{23}$ of the second layer, through which the one on $D_1$ could be realized by the property of affine transforms (lemma 10 of \citet{Huang2020}). Units $u_{24}$, $u_{25}$ and $u_{26}$ cannot influence this process, since they are not activated by $D_1$. Similarly, we can produce the linear function on $D_2$ and the construction has no impact on $D_1$, due to the deactivation of $u_{21}$, $u_{22}$ and $u_{23}$ by $D_2$.
\end{proof}

\begin{rmk}
If the six units of the second layer of $2^{(1)}4^{(1)}6^{(1)}1'$ of Figure \ref{Fig.8} are the units of the hidden layer of a three-layer network, we cannot arbitrarily arrange the corresponding six hyperplanes to realize a piecewise linear function, due to the interference among hyperplanes; the concept of distinguishable data sets was used then in section 4 to solve this problem. However, by adding one layer, for example, the hyperplanes of units $u_{21}$, $u_{22}$ and $u_{23}$ for subdomain $D_1$ can be arbitrarily placed, regardless of disturbing $D_2$. Although simple, this example reveals an intrinsic advantage of deep layers.
\end{rmk}

\begin{figure}[!t]
\captionsetup{justification=centering}
\centering
\subfloat[A domain to be divided.]{\includegraphics[width=2.3in, trim = {4.0cm 3.8cm 4cm 1cm}, clip]{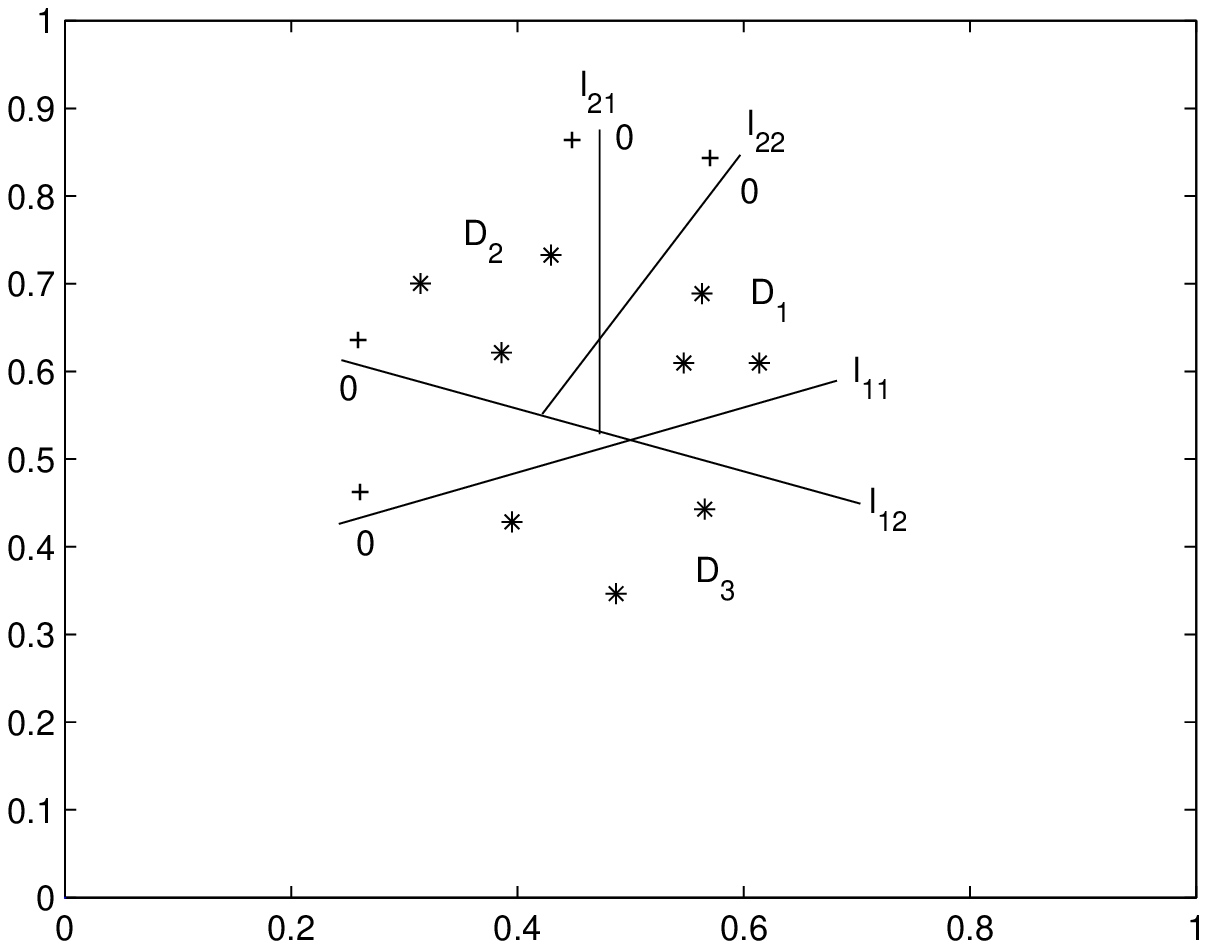}} \quad \quad \quad \quad \quad
\subfloat[Convex-polygon separation.]{\includegraphics[width=2.2in, trim = {4cm 4.1cm 5cm 1cm}, clip]{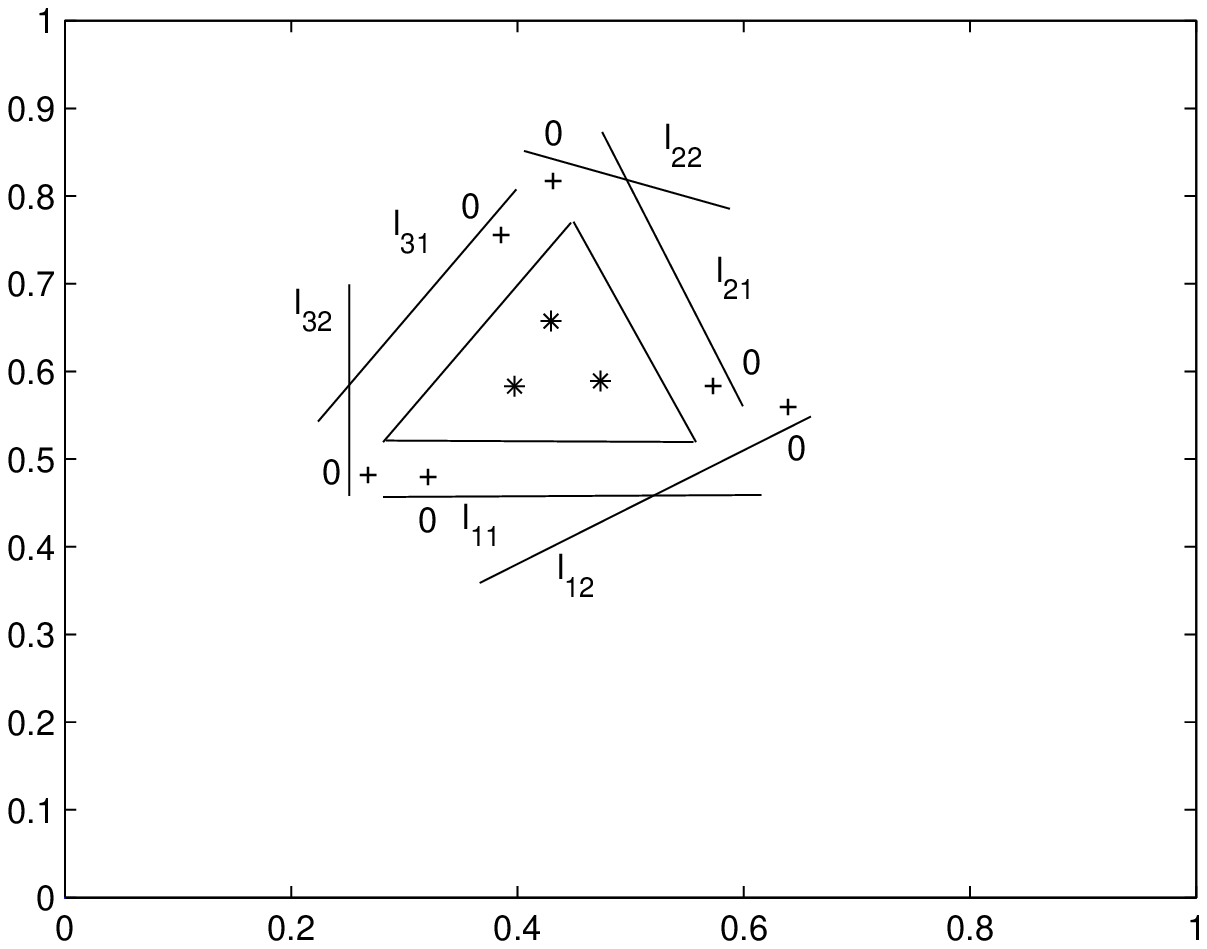}} \\
\subfloat[Network architecture for (a).]{\includegraphics[width=3.2in, trim = {2.4cm 1.4cm 3cm 1cm}, clip]{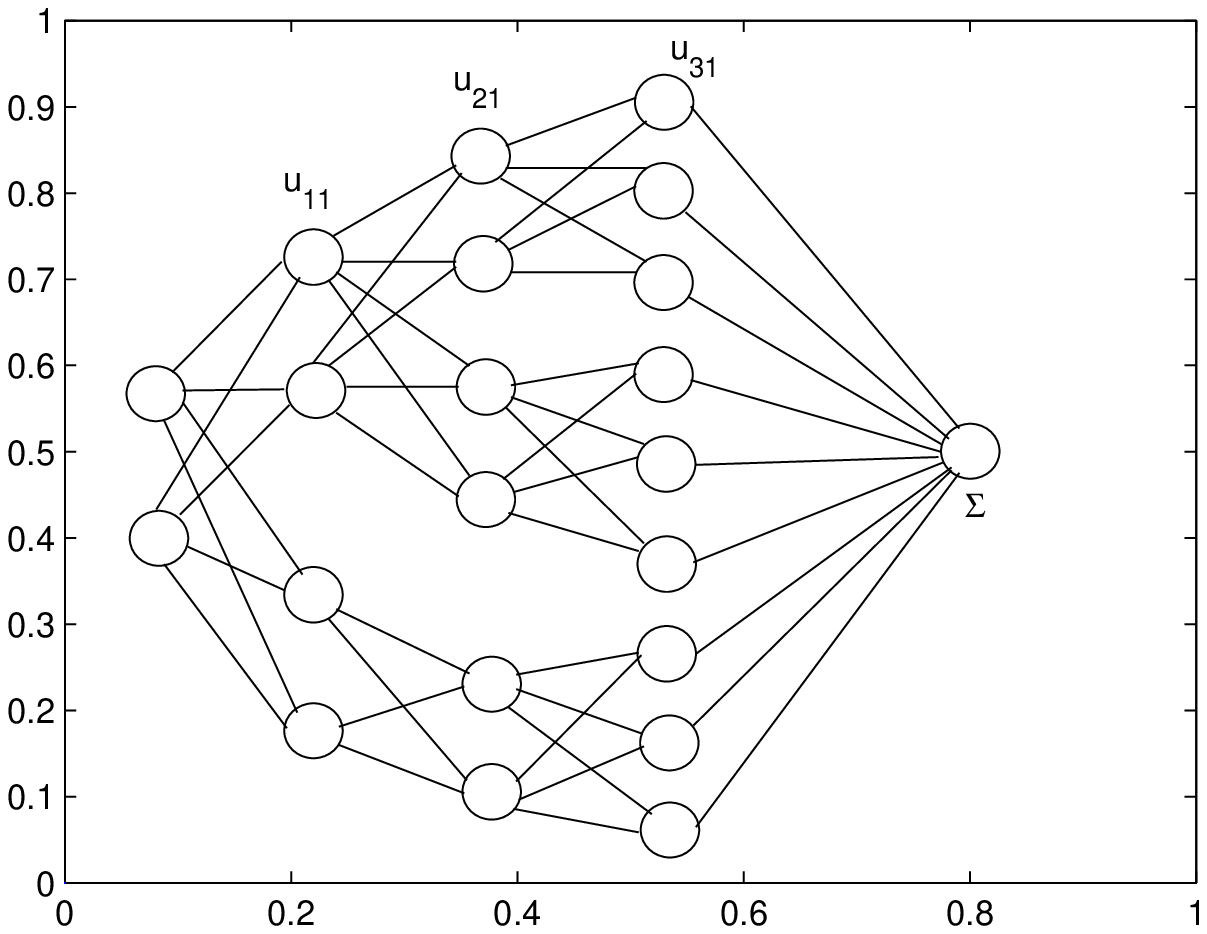}}
\caption{Implementation of piecewise linear functions.}
\label{Fig.9}
\end{figure}

To some concepts of a polytope used in this paper, we refer the reader to \citet*{Grunbaum2003}, such as \textsl{convex polytope}, \textsl{facet} and \textsl{halfspace}. The definition below is on the base of those concepts.
\begin{dfn}
An open convex polytope is a convex one that excludes its facet boundary. Intuitively speaking, if a convex polytope doesn't contain its boundary, it is an open convex polytope. It is the intersection of some open halfspaces without considering the hyperplanes that form the halfspaces. Correspondingly, we also call the usual convex polytope a closed one. Each open convex polytope corresponds to a closed one that contains the facets, and vice versa.
\end{dfn}

\begin{lem}
Let $f: D \to \mathbb{R}$ be an arbitrary discrete piecewise linear function of equation 4.6 on domain $D = \bigcup_{i=1}^{k}D_i$. Suppose that each subdomain $D_i$ is contained in an open convex polytope $\mathcal{P}$ and $(D - D_i) \cap \mathcal{P} = \emptyset$. Then network $n^{(1)}\prod_{j=1}^{d}m_j^{(1)}1'^{(1)}$ with $m_j \ge m_{j-1}$ for $j \ge 2$ could realize it, provided that depth $d$ and widths $m_j$'s are sufficiently large.
\end{lem}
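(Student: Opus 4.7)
The plan is to construct the deep network so that, for each subdomain $D_i$, some group of units in a late hidden layer — which I call the \emph{channel} of $D_i$ — is activated on $D_i$, zero on every $D_j$ with $j \ne i$, and carries an affine image of $D_i$ on its nonzero coordinates. Once such channels exist, lemma 1 applied channel by channel, in the spirit of the parameter-sharing argument of theorem 3, furnishes output weights that realise the prescribed linear function on every $D_i$ without crosstalk, since channel $i$ is inactive on $D_j$ whenever $j \ne i$.

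First I would build the first hidden layer in the style of lemma 5. Its hyperplanes are of two kinds: the facet hyperplanes of each open convex polytope $\mathcal{P}_i$, oriented so that $D_i$ lies on the positive side of every facet of $\mathcal{P}_i$; and enough auxiliary hyperplanes, constructed by the method of equation 4.8, to ensure that the nonzero sub-coordinates of the first-layer image of each $D_i$ form an affine transform of $D_i$. The hypothesis $(D - D_i) \cap \mathcal{P}_i = \emptyset$ implies that for every $x \in D_j$ with $j \ne i$, at least one facet $l$ of $\mathcal{P}_i$ satisfies $x \in l^0$ while $D_i \subset l^+$, and this is the separation that later applications of the interference-avoiding principle will exploit.

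I would then process the subdomains one at a time through the remaining hidden layers. To build the channel for $D_i$, I open a fresh group of units in the next hidden layer and, for each $D_j$ with $j \ne i$, partition $D_j$ according to which facet of $\mathcal{P}_i$ separates a given point from $D_i$. On each piece of the partition, theorem 4 — and, when several facets cooperate, corollary 5 — lets me choose the output weight of the corresponding facet unit so that the new channel unit remains activated on $D_i$ but is not activated on that piece of $D_j$, by precisely the argument of equations 6.5--6.7. Finitely many applications of this construction, spread across a bounded number of consecutive hidden layers, kill all activations coming from the other subdomains. Channels for subdomains already processed are carried forward by pass-through units whose weights implement an identity affine transform on their own positive inputs and which see zero input from $D_i$ by construction.

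The main obstacle is the last point above: a single channel for $D_i$ must be deactivated against every $D_j$ with $j \ne i$ simultaneously, even though different facets of $\mathcal{P}_i$ may be required for different points of the same $D_j$. Since theorem 4 handles only one deactivating hyperplane at a time, this forces a sequential, layerwise construction, which is precisely why both large depth $d$ — to absorb the sequence of deactivations — and nondecreasing widths $m_j$ — to accommodate the growing collection of channel units and pass-through units — are required. Once the channels are in place, the output layer is fixed by lemma 1 applied channel by channel, and the conclusion follows.
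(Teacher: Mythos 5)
There is a genuine gap, and it lies at the heart of your construction: the orientation of your separating hyperplanes is incompatible with the mechanism of the interference-avoiding principle. Theorem 4 deactivates a second-layer unit on a set $D_2$ while preserving its activation on $D_1$ only when $D_1 \subset l^0$ and $D_2 \subset l^+$: the proof drives the output weight $w_\nu$ of the unit of $l$ to a large negative value, which leaves the input sum over $D_1$ untouched (because $l$ outputs zero there) and overwhelms the bounded positive contribution $w_\nu x_{i\nu}$ over $D_2$ (because $x_{i\nu} \ge x_{min} > 0$ there). You orient the facets of $\mathcal{P}_i$ so that $D_i \subset l^+$ and the offending point $x \in D_j$ satisfies $x \in l^0$, and then propose to ``exploit'' this with theorem 4. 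But in that configuration the facet unit outputs zero on $x$, so no choice of its output weight can reduce the channel unit's activation on $x$, while any negative weight on it damages the activation on $D_i$ instead. The separation you have is real, but it points the wrong way for the only deactivation tool the paper provides. The fix is the one built into lemma 5: for each separation you need a second, \emph{reversed} family of hyperplanes, zero on $D_i$ and positive on the points to be excluded, and it is the weights of those reversed units that get driven negative. Your first layer contains no such units --- the auxiliary hyperplanes of equation 4.8 classify exactly as the facets do, hence are also positive on $D_i$ and useless for inhibition.

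For comparison, the paper's proof avoids this by never working with a single polytope's facets directly: it performs a recursive binary split of the collection of subdomains, at each stage invoking lemma 5 to build \emph{both} groups of hyperplanes ($k_1$ positive on one side and zero on the other, plus $k_2$ with the roles reversed), so that the interference-avoiding principle always has correctly oriented inhibitory units available; the general polytope condition is handled by peeling off one facet's worth of exterior points per split. Your per-subdomain ``channel'' organization and pass-through units are consistent with the paper's architecture, and your worry about deactivating against several $D_j$ simultaneously is resolvable (once the facets are reversed, every excluded point lies in the positive side of at least one reversed facet, and setting all of those output weights sufficiently negative at once suppresses every piece without interaction, since the remaining reversed facets contribute nonpositively). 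But as written, the central deactivation step would fail.
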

\begin{proof}

We first introduce the idea by an example. As shown in Figure \ref{Fig.9}a, a certain piecewise linear function on domain $D = D_1 \cup D_2 \cup D_3$ is to be realized by a deep-layer network. Figure \ref{Fig.9}c is the network architecture for Figure \ref{Fig.9}a, which is a fully connected network but omitting some connections in the figure in order to emphasize the different modules of linear components. Each unit of the hidden layers is denoted by $u_{ij}$, which means the $j$th unit of the $i$th layer. For simplicity, we only label the first unit of each hidden layer in Figure \ref{Fig.9}c, including $u_{11}$, $u_{21}$ and $u_{31}$; the arrows of connections between units are also omitted. Line $l_{ij}$ for $i, j = 1, 2$ of Figure \ref{Fig.9}a corresponds to unit $u_{ij}$ of Figure \ref{Fig.9}c.

We show how the architecture of Figure \ref{Fig.9}c could implement a piecewise linear function on $D$ of Figure \ref{Fig.9}a. Write $D_{12} = D_1 \cup D_2$; then $D_{12}$ and $D_3$ are linearly separable, as can be seen from Figure \ref{Fig.9}a. Let $D_i'$ for $i = 1, 2, 3$ be the mapped data set of $D_i$ by the first layer, and let $D_{12}' = D_1' \cup D_2'$.

By lemma 5, in the first layer, we construct four lines such that $D_{12} \subset l_{11}^+l_{12}^+l_{13}^0l_{14}^0$ and $D_3 \subset l_{11}^0l_{12}^0l_{13}^+l_{14}^+$. So when $D_{12}$ is the input, only $u_{11}$ and $u_{12}$ have nonzero output; and $D_{12}'$ is an affine transform of $D_{12}$. To $D_{12}'$, in the second layer, construct four lines to further divide it, with $D_1' \subset l_{21}^+l_{22}^+l_{23}^0l_{24}^0$ and $D_2' \subset l_{21}^0l_{22}^0l_{23}^+l_{24}^+$. After that, because $D_{12} \subset l_{13}^0l_{14}^0$ and $D_{3} \subset l_{13}^+l_{14}^+$, by corollary 5, set the input weights of $u_{2j}$ for $j = 1, 2, 3, 4$ associated with $u_{13}$ and $u_{14}$ to make $u_{2j}$ not activated by $D_3'$.

Note that in Figure \ref{Fig.9}a, lines $l_{21}$ and $l_{22}$ should have been depicted in an affine-transform region of $l_{11}^+l_{12}^+$; however, the effect is equivalent in terms of line dividing of data points due to the property of affine transforms.

In the architecture of Figure \ref{Fig.9}c, when the input is $D_3$, the outputs of $u_{11}$ and $u_{12}$ are both zero, through which $D_3$ is excluded. We want $D_3$ still to be excluded by the succeeding layers of the upper subnetwork for dividing $D_{12}$, in terms of zero outputs as well. This is not trivial, for which in \citet*{Huang2020}, we proposed an exclusive subnetwork called a ``T-bias'' to realize that. By the proof above, we saw that the interference-avoiding principle could solve this problem by a usual network architecture of engineering, without the help of a T-bias.

Return to the construction procedure. When $D_3$ is the input, since it needs not be subdivided, just transmit it through layers via affine transforms. In the second layer, construct two units $u_{25}$ and $u_{26}$ such that $D_3' \subset l_{25}^+l_{26}^+$. And because $D_{3} \subset l_{11}^0l_{12}^0$ and $D_{12} \subset l_{11}^+l_{12}^+$, by corollary 5, set relevant parameters to make $u_{25}$ and $u_{26}$ not activated by $D_{12}$.

Denote an output vector of the second layer by
\begin{equation}
\boldsymbol{x}^{(2)} = \begin{matrix}
[x_1, x_2, x_3, x_4, x_5, x_6]^T
\end{matrix},
\end{equation}
where $x_i$ for $i = 1, 2, \cdots, 6$ corresponds to the output of $u_{2i}$. Let $D_j''$ for $j = 1, 2, 3$ be the mapped data set of $D_j'$ by the second layer. When $D_1$ is the input, to $D_1''$, only dimensions $x_1$ and $x_2$ associated with $u_{21}$ and $u_{22}$ have nonzero coordinate values; $D_2''$ is of nonzero in $x_3$ and $x_4$, and $D_3''$ in $x_5$ and $x_6$. By corollary 6, $u_{31}$, $u_{32}$ and $u_{33}$ can be designed to be activated only by $D_1''$; $D_2''$ activates $u_{34}$, $u_{35}$ and $u_{36}$, while $u_{37}$, $u_{38}$ and $u_{39}$ are left for $D_3''$.

Finally, by lemma 1 and the property of affine transforms (lemma 10 of \citet{Huang2020}), any linear function on $D_1$, $D_2$ or $D_3$ could be realized in the output layer.

We summarize the above construction process as five parts. First, subdivide the domain recursively by adding layers until only one subdomain left. Second, use lemma 5 to construct hyperplanes to satisfy the condition of the interference-avoiding principle, and to transmit the data via affine transforms. Third, the interference-avoiding principle is used to make the subdividing in a separated region without influencing other data points. Fourth, data points that need not be subdivided could be transmitted to succeeding layers via affine transforms. Fifth, the output layer produces the linear function on each subdomain by lemma 1 and the property of affine transforms.

The proof of the general case is the repeated application of the above five parts. If each subdomain $D_i$ is contained in an open convex polytope and the remaining ones satisfy $(D - D_i) \cap \mathcal{P} = \emptyset$, such as Figure \ref{Fig.9}b, we can always use the above method to separate $D_i$ and transmit it to the last hidden layer in the sense of affine transforms. Then the linear function on $D_i$ could be realized in the output layer.

To the feature of the network architecture, as the example of Figure \ref{Fig.9}c, when data subdividing is required, in the corresponding subnetwork, the number of units needed in the next layer is twice the number of input units of the current layer. And when we only need affine transforms to transmit the data, in the relevant subnetwork, the number of units of two adjacent layers is equal. The last hidden layer has the maximum number of units because of lemma 1. Thus we have $m_j \ge m_{j-1}$ for $j \ge 2$ of $n^{(1)}\prod_{j=1}^{d}m_j^{(1)}1'^{(1)}$. This completes the proof.
\end{proof}

\begin{rmk-8}
From the proof above, we see that less units in shallower layers are suitable for coarse region dividing, while more units in deeper layers are for finer region dividing.
\end{rmk-8}

\begin{rmk-8}
By the interference-avoiding principle, we can eliminate the disturbance among hyperplanes for different linear components of a piecewise linear function, without resorting to a specially designed subnetwork whose parameter setting or architecture is fixed or constrained, such as the T-bias of \citet*{Huang2020}. Thus, the solution constructed by this method is closer to the one used in engineering.
\end{rmk-8}

\begin{thm}
The network $n^{(1)}\prod_{j=1}^{d}m_j^{(1)}1'^{(1)}$ for $m_j \ge m_{j-1}$ when $j \ge 2$ can realize any discrete piecewise linear function of equation 4.6, with depth $d$ and widths $m_j$'s large enough.
\end{thm}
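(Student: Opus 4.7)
My plan is to reduce Theorem 5 to Lemma 6 by passing to the finest possible subdivision of the domain, in exact analogy with how Theorem 2 was obtained from Proposition 3 via Lemma 4. Since the domain $D = \bigcup_{i=1}^{k} D_i$ of any discrete piecewise linear function has finite cardinality $\nu = |D|$, I can rewrite $D = \bigcup_{j=1}^{\nu} \{p_j\}$ as a disjoint union of singletons, and define a refined discrete piecewise linear function $f': D \to \mathbb{R}$ that assigns to each singleton $\{p_j\}$ a linear function whose value at $p_j$ agrees with $f(p_j)$. This refined function has the same graph as $f$ on the discrete set $D$, so realizing $f'$ is equivalent to interpolating the original data.

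The next step is to verify that the singleton decomposition satisfies the hypothesis of Lemma 6: for each $j$, the singleton $\{p_j\}$ must be contained in an open convex polytope $\mathcal{P}_j$ with $(D - \{p_j\}) \cap \mathcal{P}_j = \emptyset$. Because $D$ is a finite set of distinct points in $\mathbb{R}^n$, for each $p_j$ there exists $\delta_j > 0$ such that every other point of $D$ lies at Euclidean distance strictly greater than $\delta_j$ from $p_j$; an open axis-aligned cube of side length less than $\delta_j/\sqrt{n}$ centered at $p_j$ is then an open convex polytope that contains $p_j$ and no other point of $D$. Thus the convex-polytope separation hypothesis of Lemma 6 holds for the singleton decomposition.

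Applying Lemma 6 to $f'$ with this decomposition yields a network of the form $n^{(1)}\prod_{j=1}^{d}m_j^{(1)}1'^{(1)}$ with $m_j \ge m_{j-1}$ for $j \ge 2$ that realizes $f'$, provided $d$ and the $m_j$ are sufficiently large. Since $f'$ and $f$ coincide on $D$, this network also realizes $f$. The required depth and widths scale with $\nu = |D|$ through the recursive binary region-dividing scheme inside the proof of Lemma 6 (each subdivision step roughly doubles the number of units needed for the separated branch and duplicates units for the transmit branch), but the theorem only asserts existence of sufficiently large parameters, so no explicit bound is needed.

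The main obstacle I anticipate is conceptual rather than technical: one must confirm that the singleton refinement does not introduce any pathology forbidden by Lemma 6, in particular that the open convex polytopes $\mathcal{P}_j$ can be chosen independently for each point without conflicting with the recursive region-subdivision construction inside Lemma 6 (which proceeds by splitting the currently active data set into two linearly separable halves at each stage). This is handled by noting that any finite set of distinct points admits a linear bisection at every nontrivial stage of a recursive splitting, which is exactly the structural ingredient that the proof of Lemma 6 exploits via Lemma 5 and Corollary 5; therefore the recursion terminates after finitely many depth-additions with each singleton isolated, and the output layer then assigns the prescribed value by Lemma 1 combined with the affine-transform property.
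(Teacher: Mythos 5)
Your proposal is correct and follows essentially the same route as the paper: the paper's entire proof of Theorem~5 is the one-line observation that if the hypothesis of Lemma~6 fails, one subdivides the $D_i$'s into subsets that do satisfy it. Your singleton refinement, together with the explicit verification that each point of a finite set can be isolated in a small open cube meeting no other data point, is simply the extreme instance of that subdivision, worked out in more detail than the paper bothers to give.
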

\begin{proof}
If the condition of lemma 6 is not satisfied, subdivide $D_i$ into some subsets, such that each of them could fulfil that condition.
\end{proof}

\begin{rmk}
The network architecture of this theorem fits the decoder of autoencoders \citep*{Hinton2006}, whose main feather is that the number of units of hidden layers increases monotonically as the depth of the layer grows. Thus, we in fact find a solution of decoders.
\end{rmk}

\section{Affine-Transform Generalization}
We generalize theorem 5 to more types of network architectures, particularly for those applied in practice. The generalizations are composed of two parts. The first is related to affine transforms (this section), and the second is for the output layer relevant to lemmas 1 and 2 (next section).

Sections 7 and 8 also serve as improving the universality of our constructed solutions, whose results are the crucial ingredients of the theories, through which some network architectures of engineering could be explained.

In this section, the affine-transform generalization aims at the problem that when a data set of the input space is embedded in a higher-dimensional space, how it can be transmitted to succeeding layers via affine transforms, by which a certain linear function on it could be realized in the output layer. This generalization could make the parameter setting of the number of the units of hidden layers flexible, with less constraints on network architectures.

Another issue is the overparameterization solution in terms of affine transforms. We know that for an $n$-dimensional input, $n$ units of the next layer are enough to produce an affine transform. When the number of units is greater than $n$, which means that the parameters are redundant, it is the case to be discussed in this section.

The main conclusions are summarized in section 7.4 by theorems 7 and 8. Yet, most of the intermediate results are also important. For instance, theorem 6 is the geometric knowledge of a basic phenomenon; corollary 7 is related to the classification of the data embedded in a higher-dimensional space, and proposition 6 provides a method of constructing affine transforms for that kind of data.

\subsection{Geometric Preliminaries}
\begin{lem}
Given network $n^{(1)}m^{(1)}$ for $m > n$, let $l_n$ be the part of the $n$-dimensional input space that simultaneously activates the $m$ units of the first layer. Then the mapped $l_n'$ of $l_n$ by the first layer lies on an $n$-dimensional subspace of $\boldsymbol{X}_m$, where $\boldsymbol{X}_m$ is the $m$-dimensional space of the first layer, or we say that $l_n'$ is on an $n$-dimensional hyperplane embedded in $\boldsymbol{X}_m$.
\end{lem}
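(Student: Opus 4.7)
The plan is to exploit the fact that on $l_n$ every ReLU of the first layer is in its linear regime, so the first-layer map restricted to $l_n$ is a genuine affine map $\boldsymbol{x} \mapsto \boldsymbol{W}\boldsymbol{x} + \boldsymbol{b}$ from $\mathbb{R}^n$ into $\boldsymbol{X}_m = \mathbb{R}^m$, where $\boldsymbol{W}$ is the $m \times n$ weight matrix of the first layer and $\boldsymbol{b}$ is its bias vector. Once this reduction is in hand, the statement becomes a linear-algebra fact about the image of an affine map.

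First I would unpack the definition: $\boldsymbol{x} \in l_n$ means $\boldsymbol{w}_i^T \boldsymbol{x} + b_i > 0$ for every $i = 1, 2, \cdots, m$, so $\sigma(\boldsymbol{w}_i^T \boldsymbol{x} + b_i) = \boldsymbol{w}_i^T \boldsymbol{x} + b_i$ and the nonlinearity vanishes on $l_n$. Stacking the $m$ output components gives $\boldsymbol{x}^{(1)} = \boldsymbol{W}\boldsymbol{x} + \boldsymbol{b}$ for every $\boldsymbol{x} \in l_n$, so $l_n' = \{\boldsymbol{W}\boldsymbol{x} + \boldsymbol{b} : \boldsymbol{x} \in l_n\}$.

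Next I would invoke Note 1 of section 2.4: since $m > n$, the $m \times n$ matrix $\boldsymbol{W}$ may be assumed to have $\text{rank}(\boldsymbol{W}) = n$. Consequently the column space of $\boldsymbol{W}$ is an $n$-dimensional linear subspace $V \subset \mathbb{R}^m$, and the image of the affine map $\boldsymbol{x} \mapsto \boldsymbol{W}\boldsymbol{x} + \boldsymbol{b}$ applied to the entire $\mathbb{R}^n$ is the affine subspace (flat) $\boldsymbol{b} + V$, which is an $n$-dimensional hyperplane embedded in $\boldsymbol{X}_m$. Since $l_n' \subseteq \boldsymbol{b} + V$, the conclusion follows.

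The only potentially delicate point is justifying that the rank-$n$ assumption is legitimate here; this is already covered by Note 1 together with the probabilistic model in the appendix (theorem 12), so no extra work is needed. Everything else is a one-line reduction from ReLU to linear map plus the standard dimension count, so I expect the proof to be short and essentially mechanical once the activation condition is translated into the absence of the nonlinearity.
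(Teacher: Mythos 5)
Your proof is correct and follows essentially the same route as the paper: on $l_n$ every ReLU is in its linear regime, so the first-layer map is the affine map $\boldsymbol{x} \mapsto \boldsymbol{W}\boldsymbol{x} + \boldsymbol{b}$, and the rank-$n$ assumption on $\boldsymbol{W}$ (Note 1 of section 2.4) places the image inside an $n$-dimensional affine flat of $\boldsymbol{X}_m$. The only difference is presentational: the paper reaches the same conclusion by splitting $\boldsymbol{W}$ into a nonsingular $n \times n$ block and a remainder to obtain the explicit embedded form of equation 7.6 (which it reuses in theorem 6 and later sections), whereas you argue directly via the column space; both are valid and equivalent here.
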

\begin{proof}

We can imagine that an one-dimensional line could be put in a three-dimensional space, with only its location changed. That is, a line can be represented by three-dimensional vectors, or a line is embedded in a three-dimensional space in the topological language. The proof is based on this thought.

By the assumption of section 2.4, the rank of the $m \times n$ input weight matrix $\boldsymbol{W}$ of the first layer is $n$. Without loss of generality, we arrange the order of units of the first layer such that
\begin{equation}
\boldsymbol{W} =
\begin{bmatrix} \boldsymbol{W}_n^T, \boldsymbol{W}_{r}^T\end{bmatrix}^T,
\end{equation}
where $\boldsymbol{W}_n$ is an $n \times n$ nonsingular submatrix that makes the rank of $\boldsymbol{W}$ to be $n$, and $\boldsymbol{W}_{r}$ is the remaining part of $\boldsymbol{W}$ besides $\boldsymbol{W}_n$, whose size is $(m-n) \times n$.

Let $\boldsymbol{x}$ and $\boldsymbol{x}^{(1)}$ be the vectors of $l_n$ and $\boldsymbol{X}_m$, respectively. Then the nonzero output of the first layer is
\begin{equation}
\boldsymbol{x}^{(1)} = \boldsymbol{W}\boldsymbol{x} + \boldsymbol{b} =
\begin{bmatrix}
\boldsymbol{W}_n\boldsymbol{x} + \boldsymbol{b}_n \\
\boldsymbol{W}_r\boldsymbol{x} + \boldsymbol{b}_r
\end{bmatrix},
\end{equation}
where $\boldsymbol{b}$ is the bias vector of the first layer, while $\boldsymbol{b}_n$ and $\boldsymbol{b}_r$ are subvectors of $\boldsymbol{b}$ decomposed according to equation 7.1. Because $\boldsymbol{W}_n$ is nonsingular, equation 7.2 can be expressed as
\begin{equation}
\boldsymbol{x}^{(1)} =
\begin{bmatrix}
\boldsymbol{x}' \\
\boldsymbol{x}_c'
\end{bmatrix},
\end{equation}
where
\begin{equation}
\boldsymbol{x}' = \boldsymbol{W}_n\boldsymbol{x} + \boldsymbol{b}_n
\end{equation}
is an affine transform of $\boldsymbol{x}$ of the input space, and
\begin{equation}
\boldsymbol{x}_c' = \boldsymbol{W}_r\boldsymbol{x} + \boldsymbol{b}_r,
\end{equation}
whose dimensionality is $m - n$. Equations 7.3, 7.4 and 7.5 imply
\begin{equation}
\boldsymbol{x}^{(1)} =
\begin{bmatrix} \boldsymbol{x}' \\
\boldsymbol{W}_c\boldsymbol{x}' + \boldsymbol{b}_c
\end{bmatrix},
\end{equation}
where $\boldsymbol{W}_c = \boldsymbol{W}_r\boldsymbol{W}_n^{-1}$ and $\boldsymbol{b}_c = -\boldsymbol{W}_r\boldsymbol{W}_n^{-1}\boldsymbol{b}_n + \boldsymbol{b}_r$.

Based on equation 7.6, we analyze the output of the first layer from geometric viewpoints. For instance, suppose that the input is an one-dimensional line of the input space whose parametric equation is $\boldsymbol{x}_0 + t\boldsymbol{\lambda}$. Denote by $l$ the part of this line that simultaneously activates the $m$ units of the first layer. After passing through the first layer, by equation 7.6, the output of $l$ is
\begin{equation}
\boldsymbol{x}^{(1)}_l = \begin{bmatrix}
\boldsymbol{x}'_0 + t\boldsymbol{\lambda}' \\
\boldsymbol{W}_c(\boldsymbol{x}'_0 + t\boldsymbol{\lambda}') + \boldsymbol{b}_c
\end{bmatrix}
= \begin{bmatrix}
\boldsymbol{x}'_0 + t\boldsymbol{\lambda}' \\
(\boldsymbol{W}_c\boldsymbol{x}'_0 + \boldsymbol{b}_c) + t\boldsymbol{\lambda}'_c
\end{bmatrix} = \boldsymbol{x}_0{''} + t\boldsymbol{\lambda}'',
\end{equation}
which is still a line embedded in the $m$-dimensional space $\boldsymbol{X}_m$, since both $\boldsymbol{x}_0{''}$ and $\boldsymbol{\lambda}''$ are $m$-dimensional vectors.

To the input $l_n$ of this lemma, each of its elements can be represented as the parametric-equation form
\begin{equation}
\boldsymbol{x} = \boldsymbol{x}_0 + \sum_{i = 1}^{n}t_i\boldsymbol{\lambda}_i,
\end{equation}
and its output of the first layer is
\begin{equation}
\boldsymbol{x}^{(1)} = \begin{bmatrix}
\boldsymbol{x}_0' + \sum_{i = 1}^{n}t_i\boldsymbol{\lambda}_i' \\
(\boldsymbol{W}_c\boldsymbol{x}'_0 + \boldsymbol{b}_c) + \sum_{i = 1}^{n}t_i\boldsymbol{\lambda}_{c_i}'
\end{bmatrix} \\
= \boldsymbol{x}_0{''} + \sum_{i = 1}^{n}t_i\boldsymbol{\lambda}_i'',
\end{equation}
which is on an $n$-dimensional subspace embedded in $\boldsymbol{X}_m$.
\end{proof}

By lemma 7, the map by the first layer of network $n^{(1)}m^{(1)}$ is from a hyperplane to a hyperplane. However, a map between hyperplanes is not necessarily an affine transform, but may be a projective one. The following theorem will provide more information.
\begin{thm}
Given data set $D$ of the $n$-dimensional input space and network $n^{(1)}m^{(1)}$ with $m > n$, suppose that $D$ simultaneously activates the $m$ units of the first layer. Let $D'$ be the mapped data set of $D$ by the first layer. Then $D'$ is on an $n$-dimensional subspace embedded in $\boldsymbol{X}_m$, where $\boldsymbol{X}_m$ is the $m$-dimensional space of the first layer, and is equivalent to $D$ in the sense of affine transforms.
\end{thm}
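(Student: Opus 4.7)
The plan is to build directly on Lemma 7 and the explicit parametric formula it supplies, namely equation 7.6, so that most of the geometric work is already done. The first claim, that $D'$ lies on an $n$-dimensional subspace embedded in $\boldsymbol{X}_m$, is an immediate corollary of Lemma 7: since $D$ simultaneously activates all $m$ units, every $\boldsymbol{x}\in D$ lies in the region $l_n$ of the input space where the layer map is globally linear-plus-bias, so $D'\subseteq l_n'$, and Lemma 7 already identifies $l_n'$ as an $n$-dimensional affine subspace of $\boldsymbol{X}_m$.

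For the second claim (equivalence to $D$ in the sense of affine transforms), the idea is to exhibit the restricted layer map $\boldsymbol{x}\mapsto\boldsymbol{x}^{(1)}$ as an injective affine map from $\mathbb{R}^n$ onto the $n$-dimensional subspace on which $l_n'$ sits. From equation 7.6 the image vector decomposes as $\boldsymbol{x}^{(1)} = [\boldsymbol{x}'^T,\,(\boldsymbol{W}_c\boldsymbol{x}'+\boldsymbol{b}_c)^T]^T$ with $\boldsymbol{x}'=\boldsymbol{W}_n\boldsymbol{x}+\boldsymbol{b}_n$. Since $\boldsymbol{W}_n$ is an $n\times n$ nonsingular matrix by construction (the rank-$n$ assumption of Section 2.4), the map $\boldsymbol{x}\mapsto\boldsymbol{x}'$ is an invertible affine transform of $\mathbb{R}^n$, and the remaining $m-n$ coordinates are an affine function of $\boldsymbol{x}'$, hence of $\boldsymbol{x}$. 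Composing, I get a single affine map $\varphi:\mathbb{R}^n\to\boldsymbol{X}_m$ whose image is exactly the $n$-dimensional hyperplane of Lemma 7, and whose corestriction onto that hyperplane is a bijective affine transform with inverse given by reading off the first $n$ coordinates and applying $\boldsymbol{W}_n^{-1}$.

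With this set up, equivalence in the sense of Definition 3 follows routinely: affine-geometry invariants such as collinearity, parallelism, affine dimension and ratios of segments on a line are preserved by any invertible affine map between affine spaces of equal dimension, and the map $\varphi$ restricted to its image is such a map. Therefore the data structure of $D'$ inside its ambient $n$-dimensional hyperplane coincides with that of $D$ in the input space, which is what the statement asserts.

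The one step that requires a little care, and which I expect to be the real subtlety, is justifying that it is legitimate to speak of the ``affine equivalence'' of $D$ and $D'$ even though $D\subset\mathbb{R}^n$ while $D'\subset\mathbb{R}^m$ with $m>n$. The clean way to handle this is to note, as in the remark following Definition 3, that affine equivalence is relative to the affine hull: the relevant ambient space for $D'$ is the $n$-dimensional affine subspace identified by Lemma 7, not all of $\boldsymbol{X}_m$, and within that subspace the map $\varphi$ is a genuine affine transform in the classical sense. Once this is stated explicitly, no additional computation beyond unpacking equation 7.6 is needed.
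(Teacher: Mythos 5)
Your proof is correct and follows the paper's proof closely in its overall structure: both derive the first claim immediately from Lemma 7, and both rest the second claim on the decomposition of equation 7.6 together with the nonsingularity of $\boldsymbol{W}_n$. The one place where you genuinely diverge is in how the dimension mismatch between $D \subset \mathbb{R}^n$ and $D' \subset \boldsymbol{X}_m$ is resolved. You corestrict the layer map to the $n$-dimensional affine hull of $D'$ and invoke the invariance of affine-geometric properties under a bijective affine map between $n$-dimensional affine spaces, reading off the inverse from the first $n$ coordinates via $\boldsymbol{W}_n^{-1}$. The paper instead introduces the zero-padded intermediate point $\boldsymbol{x}_0 = [{\boldsymbol{x}'}^T, \boldsymbol{0}^T]^T$ and constructs an explicit nonsingular $m \times m$ matrix $\boldsymbol{R}$ (equations 7.13--7.17) so that $\boldsymbol{x}^{(1)} = \boldsymbol{R}\boldsymbol{x}_0 + \boldsymbol{T}$; that is, $D'$ is exhibited as the image of the zero-padded copy $D_0'$ under an affine transform of the whole ambient space $\boldsymbol{X}_m$, and the equivalence of $D_0'$ with $D$ is then argued by noting that the padded dimensions carry no information about the data structure. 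Your route is more economical and avoids building $\boldsymbol{R}$, at the cost of implicitly extending Definition 3 to equivalence relative to affine hulls (which you flag explicitly, and which is consistent with the note following that definition); the paper's route is longer but produces a genuine ambient affine transform of $\boldsymbol{X}_m$, matching the letter of Definition 3. Both are valid, and you correctly identified the dimension-mismatch issue as the one point needing care.
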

\begin{proof}
By the assumption of this theorem, we have $D \subset l_n$, with $l_n$ as the part of the $n$-dimensional input space that simultaneously activates the $m$ units of the first layer. Thus, according to lemma 7, the mapped data set $D'$ is still on an $n$-dimensional hyperplane embedded in $\boldsymbol{X}_m$.

To prove the equivalence of the data structures of $D$ and $D'$, for example, we first assume that $n = 2$ and $m = 3$ of $n^{(1)}m^{(1)}$. To any $\boldsymbol{x} \in D$ of the two-dimensional input space, write equation 7.6 here as
\begin{equation}
\boldsymbol{x}^{(1)} =
\begin{bmatrix}
\boldsymbol{x}' \\
\boldsymbol{w}^T\boldsymbol{x}' + b
\end{bmatrix},
\end{equation}
where $\boldsymbol{x}^{(1)} \in D'$ is the mapped data point of $\boldsymbol{x}$, and $\boldsymbol{x}'$ is a subvector of $\boldsymbol{x}^{(1)}$ as well as an affine transform of $\boldsymbol{x}$. Since $n = 2$ and $m = 3$, write $\boldsymbol{x} = [x, y]^T$, $\boldsymbol{x}' = [x', y' ]^T$, $\boldsymbol{w} = [A, B ]^T$ and $b = C$. Then equation 7.10 can be expressed as
\begin{equation}
\boldsymbol{x}^{(1)} =
\begin{bmatrix}
x', y', Ax' + By' + C
\end{bmatrix}^T,
\end{equation}
which is on a plane of three-dimensional space.

In order to see the change of the data structure from $D$ to $D'$ in terms of $\boldsymbol{x}$ and $\boldsymbol{x}^{(1)}$, we introduce an intermediate vector
\begin{equation}
\boldsymbol{x}_0 = \begin{bmatrix} x', y', 0\end{bmatrix}^T,
\end{equation}
which has a more direct relationship with $\boldsymbol{x} = [x, y]^T$. From $\boldsymbol{x}_0$ of equation 7.12 to $\boldsymbol{x}^{(1)}$ of equation 7.11, it could be an affine transform as
\begin{equation}
\boldsymbol{x}^{(1)} = \boldsymbol{R}\boldsymbol{x}_0 + \boldsymbol{T},
\end{equation}
where
\begin{equation}
\boldsymbol{R} =
\begin{bmatrix}
1 & 0 & 0 \\
0 & 1 & 0 \\
A & B & \alpha
\end{bmatrix}
\end{equation}
and $\boldsymbol{T} = [0, 0, C]^T$, with $\alpha \ne 0$ such that $\boldsymbol{R}$ is nonsingular. Let $D_0'$ be the data set derived from the first two dimensions of $D'$, with its each element represented in terms of $\boldsymbol{x}_0$ of equation 7.12. Then by equation 7.13, $D_0'$ is an affine transform of $D'$.

Note that in equation 7.12, from $\boldsymbol{x}' = [x', y']^T$ to $\boldsymbol{x}_0$, only a new dimension whose coordinate value is zero is added, and the augmented dimension has no relationship with the subvector $\boldsymbol{x}' = [x', y' ]^T$. We just put $\boldsymbol{x}'$ into a higher-dimensional space. Because $\boldsymbol{x}'$ is an affine transform of $\boldsymbol{x}$ by equation 7.10, the data structure of $D_0'$ is equivalent to that of $D$. Combined with equation 7.13, the conclusion follows; that is, the data structure of $D'$ output by the first layer is equivalent to that of $D$ of the input space.

The general case is similar, which is mainly related to equation 7.13. The intermediate vector is constructed as
\begin{equation}
\boldsymbol{x}_0 = \begin{bmatrix} {\boldsymbol{x}'}^T, \boldsymbol{0}^T \end{bmatrix}^T,
\end{equation}
where $\boldsymbol{0}$ is a $(m - n) \times 1$ zero vector. The nonsingular matrix $\boldsymbol{R}$ is changed into
\begin{equation}
\boldsymbol{R} =
\begin{bmatrix}
\boldsymbol{I}_{n} & \boldsymbol{0}_{m-n}\\
\boldsymbol{W}_c & \boldsymbol{I}_{m-n}
\end{bmatrix},
\end{equation}
where $\boldsymbol{I}_{n}$ is the $n \times n$ identity matrix, and similarly for $\boldsymbol{I}_{m-n}$; $\boldsymbol{W}_c$ is the matrix introduced in equation 7.6; $\boldsymbol{0}_{m-n}$ is the $n \times (m-n)$ zero matrix. Note that $\boldsymbol{R}$ of equation 7.16 is nonsingular. The vector $\boldsymbol{T}$ is
\begin{equation}
\boldsymbol{T} = \begin{bmatrix} {\boldsymbol{0}'}^T, \boldsymbol{b}_c^T \end{bmatrix}^T,
\end{equation}
where $\boldsymbol{b}_c$ comes from equation 7.6 and $ \boldsymbol{0}'$ is an $n \times 1$ zero vector.

By equations 7.15, 7.16 and 7.17, $\boldsymbol{x}^{(1)}$ of equation 7.6 can be expressed as the form of equation 7.13. The remaining proof is similar to the above example.
\end{proof}

\begin{lem}
Let $\boldsymbol{x}_0$ be a point of $n$-dimensional space, and $\boldsymbol{w}^T\boldsymbol{x} + b = 0$ be a hyperplane $l$ associated with a ReLU. After an affine transform, $\boldsymbol{x}_0$ becomes $\boldsymbol{x}_0'$ and $l$ becomes ${\boldsymbol{w}{'}}^T\boldsymbol{x}' + b' = 0$. We then have $\boldsymbol{w}^T\boldsymbol{x}_0 + b = {\boldsymbol{w}{'}}^T\boldsymbol{x}'_0 + b'$, which implies that the output of a hyperplane with respect to a point is not affected by affine transforms.
\end{lem}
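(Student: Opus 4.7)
The plan is to parametrize the affine transform explicitly as $\boldsymbol{x}' = \boldsymbol{A}\boldsymbol{x} + \boldsymbol{t}$ with $\boldsymbol{A}$ an invertible $n \times n$ matrix, and to derive the coefficients $(\boldsymbol{w}', b')$ of the transformed hyperplane from the requirement that the image of $l$ be exactly the zero set of ${\boldsymbol{w}'}^T\boldsymbol{x}' + b' = 0$. Substituting the inverse relation $\boldsymbol{x} = \boldsymbol{A}^{-1}(\boldsymbol{x}' - \boldsymbol{t})$ into the original equation $\boldsymbol{w}^T\boldsymbol{x} + b = 0$ yields $\boldsymbol{w}^T\boldsymbol{A}^{-1}\boldsymbol{x}' + (b - \boldsymbol{w}^T\boldsymbol{A}^{-1}\boldsymbol{t}) = 0$, which identifies ${\boldsymbol{w}'}^T = \boldsymbol{w}^T\boldsymbol{A}^{-1}$ and $b' = b - \boldsymbol{w}^T\boldsymbol{A}^{-1}\boldsymbol{t}$ up to an overall nonzero rescaling.

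Having obtained these expressions, the remaining computation is mechanical. I would evaluate
\[
{\boldsymbol{w}'}^T\boldsymbol{x}_0' + b' = \boldsymbol{w}^T\boldsymbol{A}^{-1}(\boldsymbol{A}\boldsymbol{x}_0 + \boldsymbol{t}) + b - \boldsymbol{w}^T\boldsymbol{A}^{-1}\boldsymbol{t},
\]
observe that $\boldsymbol{A}^{-1}\boldsymbol{A}$ collapses to the identity so that the first term reduces to $\boldsymbol{w}^T\boldsymbol{x}_0$, and note that the two copies of $\boldsymbol{w}^T\boldsymbol{A}^{-1}\boldsymbol{t}$ cancel, leaving $\boldsymbol{w}^T\boldsymbol{x}_0 + b$ as required. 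Conceptually this is just a change-of-variables identity: the scalar linear form $\boldsymbol{w}^T\boldsymbol{x} + b$ is an intrinsic attribute of the pair (point, hyperplane), so transporting both sides of that pairing by the same affine map leaves the value untouched.

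The only delicate point, rather than a real obstacle, is the scale ambiguity inherent in representing a hyperplane by a linear equation: any nonzero rescaling of $(\boldsymbol{w}, b)$ defines the same set $l$, and likewise for $({\boldsymbol{w}'}, b')$. The lemma implicitly fixes the normalization by taking the $({\boldsymbol{w}'}, b')$ produced by direct substitution, which is precisely the representation a subsequent network layer sees once the affine transform is realized by weights and biases (as in the construction of equations 7.4 and 7.6). With this natural choice, the proof reduces to the two-line calculation above, and the claim follows; any alternative common rescaling of both sides would preserve the equality trivially.
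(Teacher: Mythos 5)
Your proposal is correct and follows essentially the same route as the paper's own proof: substitute $\boldsymbol{x} = \boldsymbol{A}^{-1}(\boldsymbol{x}' - \boldsymbol{t})$ into the hyperplane equation to read off $\boldsymbol{w}'$ and $b'$, then verify the identity by direct cancellation. Your added remark on the scale ambiguity of the hyperplane's coefficient representation is a reasonable clarification but does not change the argument.
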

\begin{proof}
Denote an affine transform by $\boldsymbol{x}' = \boldsymbol{W}\boldsymbol{x} + \boldsymbol{b}$,
and then $\boldsymbol{x} = \boldsymbol{W}^{-1}(\boldsymbol{x}' - \boldsymbol{b})$. So the equation $\boldsymbol{w}^T\boldsymbol{x} + b = 0$ of $l$ can be expressed as
\begin{equation}
{\boldsymbol{w}'}^T\boldsymbol{x}' + b' = 0,
\end{equation}
where ${\boldsymbol{w}'}^T = \boldsymbol{w}^T\boldsymbol{W}^{-1}$ and $b' = -\boldsymbol{w}^T\boldsymbol{W}^{-1}\boldsymbol{b} + b$, which is the affine transform of hyperplane $l$, denoted by $l'$. The affine transform of point $\boldsymbol{x}_0$ is
\begin{equation}
\boldsymbol{x}'_0 = \boldsymbol{W}\boldsymbol{x}_0 + \boldsymbol{b}.
\end{equation}
Substituting equation 7.19 into equation 7.18, it's easy to verify $\boldsymbol{w}'^T\boldsymbol{x}'_0 + b' = \boldsymbol{w}^T\boldsymbol{x}_0 + b$, and thus $\sigma(\boldsymbol{w}'^T\boldsymbol{x}'_0 + b') = \sigma(\boldsymbol{w}^T\boldsymbol{x}_0 + b)$, where $\sigma(x)$ is the activation function of a ReLU.
\end{proof}

\subsection{Basic Principle}
\begin{prp}
Given network $n^{(1)}m^{(1)}1^{(1)}$ for $m > n$, let $l_n$ be an $n$-dimensional subspace of the $m$-dimensional space $\boldsymbol{X}_m$ of the first layer, and let $p$ be a data point of $l_n$. Then to any $m-1$-dimensional hyperplane $l_{m-1}$ formed by the unit of the second layer, if $l_{m-1} \cap l_n \ne \emptyset$ and $l_n \nsubseteq l_{m-1}$, the output of $l_{m-1}$ with respect to $p$ is equal to the output of an $n-1$-dimensional hyperplane $l_{n-1}$, with $l_{n-1} \subset l_n$ and $l_{n-1} = l_{m-1} \cap l_n$.
\end{prp}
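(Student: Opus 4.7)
The plan is to reduce the problem to a direct computation with an affine parameterization of $l_n$ and then identify the resulting linear form on the parameter space with the equation of the intersection hyperplane $l_{n-1}$.

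First I would write down the equation of $l_{m-1}$ in $\boldsymbol{X}_m$ as $\boldsymbol{w}^T\boldsymbol{x} + b = 0$, so that the output of $l_{m-1}$ with respect to $p\in l_n$ is $\sigma(\boldsymbol{w}^Tp + b)$. Next, since $l_n$ is an $n$-dimensional affine subspace of $\boldsymbol{X}_m$, I would pick any base point $p_0\in l_n$ and an $m\times n$ matrix $\boldsymbol{A}$ of rank $n$ whose columns span the directions of $l_n$, so that every point of $l_n$ is of the form $p = p_0 + \boldsymbol{A}\boldsymbol{t}$ for a unique $\boldsymbol{t}\in\mathbb{R}^n$. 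Substituting gives
\begin{equation}
\boldsymbol{w}^T p + b = (\boldsymbol{w}^T\boldsymbol{A})\boldsymbol{t} + (\boldsymbol{w}^T p_0 + b) =: \tilde{\boldsymbol{w}}^T\boldsymbol{t} + \tilde{b},
\end{equation}
so the restriction of the ReLU input sum of $l_{m-1}$ to $l_n$ is itself an affine function in the intrinsic coordinate $\boldsymbol{t}\in\mathbb{R}^n$.

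Then I would verify that the two hypotheses translate into $\tilde{\boldsymbol{w}}\neq \boldsymbol{0}$. The condition $l_{m-1}\cap l_n\neq\emptyset$ guarantees that $\tilde{\boldsymbol{w}}^T\boldsymbol{t}+\tilde{b}=0$ is solvable, while $l_n\nsubseteq l_{m-1}$ rules out $\tilde{\boldsymbol{w}}=\boldsymbol{0}$ and $\tilde{b}=0$ simultaneously; the combination forces $\tilde{\boldsymbol{w}}\neq\boldsymbol{0}$, so that the zero set $\{\boldsymbol{t}:\tilde{\boldsymbol{w}}^T\boldsymbol{t}+\tilde{b}=0\}$ is a genuine $(n-1)$-dimensional hyperplane in the intrinsic coordinates of $l_n$. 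Mapping this zero set back to $\boldsymbol{X}_m$ via $\boldsymbol{t}\mapsto p_0+\boldsymbol{A}\boldsymbol{t}$ recovers exactly $l_{m-1}\cap l_n$, which by definition is $l_{n-1}$. So $l_{n-1}$, viewed intrinsically in $l_n$, has defining equation $\tilde{\boldsymbol{w}}^T\boldsymbol{t}+\tilde{b}=0$, and its ReLU output at the point whose intrinsic coordinate is $\boldsymbol{t}^*$ equals $\sigma(\tilde{\boldsymbol{w}}^T\boldsymbol{t}^*+\tilde{b})$. Combining with the displayed equation above for $p = p_0+\boldsymbol{A}\boldsymbol{t}^*$ yields the claimed equality of outputs.

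The main subtlety, rather than the main obstacle, is justifying that ``the output of $l_{n-1}$ with respect to $p$'' is well-defined despite depending on a choice of parameterization $(p_0,\boldsymbol{A})$. This is where I would appeal to lemma 8: any two such affine parameterizations of $l_n$ differ by an affine transform of $\mathbb{R}^n$, and lemma 8 asserts that the output of a hyperplane with respect to a point is invariant under affine transforms, so the scalar $\tilde{\boldsymbol{w}}^T\boldsymbol{t}^*+\tilde{b}$ is an intrinsic invariant of the pair $(l_{n-1},p)$. With this observation the proof becomes the short algebraic identification above.
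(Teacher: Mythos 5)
Your proposal is correct and follows essentially the same route as the paper: both restrict the affine functional $\boldsymbol{w}^T\boldsymbol{x}+b$ defining $l_{m-1}$ to an affine parameterization of $l_n$, identify the resulting form $\tilde{\boldsymbol{w}}^T\boldsymbol{t}+\tilde{b}$ as the equation of $l_{n-1}=l_{m-1}\cap l_n$ (with the two hypotheses forcing $\tilde{\boldsymbol{w}}\neq\boldsymbol{0}$), and invoke lemma 8 for affine invariance of the output. The only cosmetic difference is that the paper extends the basis of $l_n$ to a full adapted coordinate system of $\boldsymbol{X}_m$, whereas you work directly in intrinsic coordinates on $l_n$.
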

\begin{proof}
When $n=2$ and $m = 3$ of $n^{(1)}m^{(1)}1^{(1)}$, this proposition can be interpreted intuitively as follows. To a point $p$ of a two-dimensional subspace $l_2 \subset \boldsymbol{X}_3$ of the first layer, the output of any plane $l_2'$ of $\boldsymbol{X}_3$ for $p$ is equal to the output of a line $l_1$, where $l_1 \subset l_2$ and $l_1 = l_2' \cap l_2$, provided that $l_2' \cap l_2 \ne \emptyset$ and $l_2 \nsubseteq l_2'$.

We prove the conclusion in a general form. Construct a new coordinate system of $\boldsymbol{X}_m$ of the first layer, according to the $n$-dimensional subspace $l_n$ or the $n$-dimensional hyperplane $l_n$ embedded in it, where the point $p$ is located in. On $l_n$, select an arbitrary point as the origin $O$ of the new coordinate system, and choose $n$ linearly independent vectors $\boldsymbol{v}_i$'s for $i = 1, 2, \cdots, n$ to be the $n$ bases. The remaining $m - n$ bases of the new coordinate system are other $m - n$ linearly independent vectors $\boldsymbol{v}_j$'s for $j = n+1, \dots, m$ that are not on $l_n$. To any point of $\boldsymbol{X}_m$, its coordinate vectors in the original and new coordinate systems are linked by an affine transform $\boldsymbol{A}$.

Write the equation of a $m-1$-dimensional hyperplane $l_{m-1}$ of $\boldsymbol{X}_m$ as
\begin{equation}
\boldsymbol{w}^T\boldsymbol{x}^{(1)} + b = 0,
\end{equation}
where ${\boldsymbol{x}}^{(1)}$ is the output vector of the first layer. In the new coordinate system, by the affine transform $\boldsymbol{A}$, $l_{m-1}$ becomes
\begin{equation}
{\boldsymbol{w}'}^T{\boldsymbol{x}'}^{(1)} + b' = 0,
\end{equation}
denoted by $l_{m-1}'$. By lemma 8, we have
\begin{equation}
\boldsymbol{w}^T\boldsymbol{x}_p^{(1)} + b = {\boldsymbol{w}'}^T{\boldsymbol{x}'}_p^{(1)} + b',
\end{equation}
where $\boldsymbol{x}_p^{(1)}$ and ${\boldsymbol{x}'}_p^{(1)}$ are the coordinate vectors of $p$ and $p'$ before and after the affine transform $\boldsymbol{A}$, respectively; that is, affine transforms do not change the relative position of a point with respect to a hyperplane.

Equation 7.21 can be decomposed into
\begin{equation}
{\boldsymbol{w}'}_n^T{\boldsymbol{x}'}_n^{(1)} + {\boldsymbol{w}'}_{n-m}^T{\boldsymbol{x}'}_{n-m}^{(1)} + b' = 0,
\end{equation}
where ${\boldsymbol{x}'}_n^{(1)}$ is the subvector of ${\boldsymbol{x}'}^{(1)}$ associated with the $n$ bases of the new coordinate system on hyperplane $l_n$, and subvector ${\boldsymbol{x}'}_{n-m}^{(1)}$ is composed of the remaining dimensions. Since $l_{m-1} \cap l_n \ne \emptyset$ and $l_n \nsubseteq l_{m-1}$, not all of entries of ${\boldsymbol{w}'}_n$ of equation 7.23 are zero. When
\begin{equation}
{\boldsymbol{x}'}_{n-m}^{(1)} = \boldsymbol{0}
\end{equation}
in equation 7.23, where $\boldsymbol{0}$ is a $(m-n) \times 1$ vector whose entries are all zero, equation
\begin{equation}
{\boldsymbol{w}'}_n^T{\boldsymbol{x}'}_n^{(1)} + b' = 0
\end{equation}
can be considered as an $n-1$-dimensional hyperplane of $n$-dimensional space, denoted by $l_{n-1}$. From the construction of the new coordinate system of $\boldsymbol{X}_m$, we know $l_{n-1} \subset l_n$. Because ${\boldsymbol{x}'}_{n-m}^{(1)} = \boldsymbol{0}$ of equation 7.24 could be regarded as equation of $l_n$, equation 7.25 of $l_{n-1}$ is the intersection of $l_n$ (equation 7.24) and $l_{m-1}$ (equation 7.23), that is,
\begin{equation}
l_{n-1} = l_{m-1} \cap l_n.
\end{equation}

Because $p$ is on hyperplane $l_n$ where the first $n$ bases are located in, the coordinate vector ${\boldsymbol{x}'}_p^{(1)}$ of $p'$ can be expressed as
\begin{equation}
{\boldsymbol{x}'}_p^{(1)} = \begin{bmatrix}
x_1, x_2, \cdots, x_{n}, 0, \cdots, 0
\end{bmatrix}^T =
\begin{bmatrix}
\boldsymbol{x}_{p_n}^T, \boldsymbol{0}^T
\end{bmatrix}^T,
\end{equation}
where $\boldsymbol{x}_{p_n}$ can be considered as the coordinate vector of $p'$ in $n$-dimensional space of $l_n$. Substituting equation 7.27 into equation 7.23 or 7.21, we obtain the output of $l_{m-1}'$ with respect to $p'$
\begin{equation}
y = \sigma({\boldsymbol{w}'}^T{\boldsymbol{x}'}_p^{(1)} + b') = \sigma({\boldsymbol{w}'}_n^T\boldsymbol{x}_{p_n} + b'),
\end{equation}
where $\sigma(x)$ is the activation function of a ReLU. Equations 7.28 and 7.25 indicate that the output $y$ of $m-1$-dimensional hyperplane $l_{m-1}'$ with respect to $p'$ equals the output of $n-1$-dimensional hyperplane $l_{n-1}$.

Equations 7.28 and 7.22 yield
\begin{equation}
y = \sigma(\boldsymbol{w}^T\boldsymbol{x}^{(1)}_p + b) = \sigma({\boldsymbol{w}'}_n^T\boldsymbol{x}_{p_n} + b').
\end{equation}
Combined with equation 7.26, the conclusion follows.
\end{proof}

\begin{cl}
In $m$-dimensional space $\boldsymbol{X}_m$, any linear classification via a $m-1$-dimensional hyperplane $l_{m-1}$ on data set $D$ of an $n$-dimensional subspace $\boldsymbol{X}_n \subset \boldsymbol{X}_m$ for $n < m$ could be done by an $n-1$-dimensional hyperplane $l_{n-1} \subset \boldsymbol{X}_n$, and vice versa, with $l_{m-1}$ and $l_{n-1}$ having the same output with respect to any point of $\boldsymbol{X}_n$.
\end{cl}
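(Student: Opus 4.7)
The plan is to obtain the corollary essentially as a pointwise application of Proposition 5, together with a converse construction that inverts the map $l_{m-1} \mapsto l_{m-1}\cap \boldsymbol{X}_n$.

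First, I would handle the forward direction. Fix any data point $p \in D \subset \boldsymbol{X}_n$. If $l_{m-1}\cap \boldsymbol{X}_n \ne \emptyset$ and $\boldsymbol{X}_n \nsubseteq l_{m-1}$, Proposition 5 applied to the single point $p$ yields an $(n-1)$-dimensional hyperplane $l_{n-1} = l_{m-1}\cap \boldsymbol{X}_n \subset \boldsymbol{X}_n$ whose output with respect to $p$ equals that of $l_{m-1}$. Crucially, the construction of $l_{n-1}$ in Proposition 5 depends only on $l_{m-1}$ and $\boldsymbol{X}_n$, not on the chosen point $p$; hence the same $l_{n-1}$ works uniformly for every $p \in D$. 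Since the output of a hyperplane determines on which side (or on) a point lies, the two-subset partition of $D$ induced by $l_{m-1}$ coincides with that induced by $l_{n-1}$.

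Next I would dispose of the degenerate cases. If $\boldsymbol{X}_n \subseteq l_{m-1}$ then every point of $D$ lies on $l_{m-1}$, and any $l_{n-1}\subset \boldsymbol{X}_n$ containing $D$ realizes the same (trivial) classification with matching zero output. If $l_{m-1}\cap \boldsymbol{X}_n = \emptyset$, then $\boldsymbol{X}_n$ lies entirely in one open halfspace determined by $l_{m-1}$, and an $l_{n-1}\subset \boldsymbol{X}_n$ placed so that all of $D$ lies on the corresponding side reproduces the classification; the matching of outputs in this boundary case follows by taking a limit of the Proposition 5 construction, or more concretely by extending the equation of any such $l_{n-1}$ to all of $\boldsymbol{X}_m$ as in the converse below.

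For the converse, given any $(n-1)$-dimensional hyperplane $l_{n-1}\subset \boldsymbol{X}_n$, I would construct $l_{m-1}$ by reversing the coordinate argument in Proposition 5. Using the adapted coordinate system with the first $n$ bases on $\boldsymbol{X}_n$ and the remaining $m-n$ bases transverse to it, write the equation of $l_{n-1}$ as ${\boldsymbol{w}'}_n^T\boldsymbol{x}'_n + b' = 0$ and extend it to $\boldsymbol{X}_m$ by padding with zero coefficients in the transverse directions, giving $l_{m-1}$. By equation~7.27, every $p \in \boldsymbol{X}_n$ has vanishing transverse coordinates, so the padded equation evaluates on $p$ exactly as the original, producing the identical output and hence the identical classification. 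Transforming this $l_{m-1}$ back to the original coordinates of $\boldsymbol{X}_m$ by the inverse of the affine change of basis, and invoking Lemma 8 to preserve outputs, yields the desired hyperplane.

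The only point that requires care is the boundary case $l_{m-1}\cap \boldsymbol{X}_n = \emptyset$, since Proposition 5 was stated under nontrivial intersection; I expect this to be the main (minor) obstacle, and the cleanest workaround is to go through the converse construction, which produces a paired $l_{m-1}$ and $l_{n-1}$ with matching outputs on all of $\boldsymbol{X}_n$ without any intersection hypothesis.
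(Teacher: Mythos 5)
Your proposal follows essentially the same route as the paper: both directions are reduced to Proposition 5, with the forward direction noting that $l_{n-1}=l_{m-1}\cap\boldsymbol{X}_n$ is independent of the chosen point so the induced partitions of $D$ coincide, and the converse obtained by extending the equation of $l_{n-1}$ to a hyperplane of $\boldsymbol{X}_m$ meeting $\boldsymbol{X}_n$ in $l_{n-1}$ (the paper merely calls this ``easily constructed''; your zero-padding in the adapted coordinates plus Lemma 8 is the explicit version of that).

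The one place you diverge is the treatment of the degenerate cases $\boldsymbol{X}_n\subseteq l_{m-1}$ and $l_{m-1}\cap\boldsymbol{X}_n=\emptyset$. The paper simply excludes them by asserting that a genuine linear classification of $D$ requires the hypotheses of Proposition 5, whereas you try to cover them and claim that output matching still ``follows by taking a limit.'' That added claim is false as stated: if $l_{m-1}\cap\boldsymbol{X}_n=\emptyset$ then the affine form $\boldsymbol{w}^T\boldsymbol{x}+b$ restricted to $\boldsymbol{X}_n$ is a nonzero constant (a non-constant affine function on an $n$-dimensional affine subspace always vanishes somewhere), so the ReLU output of $l_{m-1}$ is either identically zero or a positive constant on all of $\boldsymbol{X}_n$, and no hyperplane $l_{n-1}\subset\boldsymbol{X}_n$ has such an output on the whole subspace (only on a finite $D$ can the zero case be matched). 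Since these cases carry no nontrivial classification and the corollary is read as excluding them, this does not damage the main argument, but the degenerate-case paragraph should either be dropped or weakened to matching outputs on $D$ only rather than on all of $\boldsymbol{X}_n$.
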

\begin{proof}
To classify $D \subset \boldsymbol{X}_n$ via $l_{m-1}$, we need $l_{m-1} \cap \boldsymbol{X}_n \ne \emptyset$ and $\boldsymbol{X}_n \nsubseteq l_{m-1}$, satisfying the condition of proposition 5. The classification of $D$ via $l_{m-1}$ should compute the outputs of $l_{m-1}$ with respect to the elements of $D$, which are equal to those of $l_{n-1}$ by proposition 5; and thus the classifications via $l_{m-1}$ and $l_{n-1}$ are equivalent.

The converse conclusion obviously holds, which means that in subspace $\boldsymbol{X}_n$, the classification of $D$ via any $n-1$-dimensional hyperplane $l_{n-1}$ could be done by a $m-1$-dimensional hyperplane $l_{m-1}$ whose intersection with $\boldsymbol{X}_n$ is $l_{n-1}$, which is easily constructed. The outputs of $l_{n-1}$ and $l_{m-1}$ with respect to any point of $\boldsymbol{X}_n$ are equal due to proposition 5.
\end{proof}

\subsection{Construction Method}
Proposition 6 of this section is a basic operation of constructing various solutions for piecewise linear functions, and is also an existence proof of affine transforms for the data embedded in a higher-dimensional space.
\begin{lem}
To network $n^{(1)}m^{(1)}n^{(1)}$ for $m >n$, let $l^{(1)}_n$ be the mapped region of $\prod_{j=1}^{m}l_{1j}^+$ of the $n$-dimensional input space by the first layer, where hyperplanes $l_{1j}$'s correspond to the units of the first layer. Denote by $l_{2i}$'s for $i = 1, 2, \cdots, n$ the $m-1$-dimensional hyperplanes formed by the second layer. Suppose that data set $D$ of the input space simultaneously activates all the units of each layer, and that $l_i = l_{2i} \cap l_n^{(1)} \ne \emptyset$. If $\bigcap_{i=1}^{n}l_i= O$ is a single point, the mapped data set $D''$ by the second layer is an affine transform of $D$.
\end{lem}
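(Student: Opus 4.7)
The plan is to chain Theorem 6 (applied to the first layer) with repeated applications of Proposition 5 (one for each unit of the second layer), and then use the single-point-intersection hypothesis to upgrade the resulting restricted second-layer map to an affine isomorphism. Since $D$ simultaneously activates all $m$ units of the first layer, $D \subset \prod_{j=1}^m l_{1j}^+$, so by Theorem 6 the mapped data set $D'$ lies on the $n$-dimensional hyperplane $l_n^{(1)} \subset \boldsymbol{X}_m$ and is equivalent to $D$ under some affine transform $\varphi_1$.

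Next I would apply Proposition 5 to each unit of the second layer. Since $\bigcap_{i=1}^n l_i = O$ is a single point and each $l_i = l_{2i} \cap l_n^{(1)}$ is nonempty, none of the $l_{2i}$ can contain $l_n^{(1)}$ (otherwise one $l_i$ would equal $l_n^{(1)}$ and the remaining $n-1$ hyperplanes inside the $n$-dimensional $l_n^{(1)}$ could not cut the intersection down to a single point). Proposition 5 then asserts that, for every $\boldsymbol{x}' \in D' \subset l_n^{(1)}$, the output of the $i$-th second-layer unit equals the output of the $(n-1)$-dimensional hyperplane $l_i$ computed intrinsically in $l_n^{(1)}$. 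Because $D$ also activates all $n$ second-layer units, the ReLUs act linearly, and the second-layer map restricted to $D'$ is identified with
\begin{equation*}
\varphi_2\colon \boldsymbol{y} \longmapsto \boldsymbol{A}\boldsymbol{y} + \boldsymbol{c},
\end{equation*}
where $\boldsymbol{a}_i^T \boldsymbol{y} + c_i = 0$ are the intrinsic equations of $l_1, \ldots, l_n$ in any fixed affine coordinate system on $l_n^{(1)} \cong \mathbb{R}^n$ and $\boldsymbol{A}$ has rows $\boldsymbol{a}_i^T$.

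To finish, the hypothesis $\bigcap_{i=1}^n l_i = \{O\}$ translates to the linear system $\boldsymbol{A}\boldsymbol{y} = -\boldsymbol{c}$ having a unique solution, which forces $\boldsymbol{A}$ to be nonsingular. Hence $\varphi_2$ is an affine transform of $\mathbb{R}^n$ onto itself, and the composition $D \xrightarrow{\varphi_1} D' \xrightarrow{\varphi_2} D''$ exhibits $D''$ as an affine transform of $D$, as claimed.

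The main obstacle I expect is keeping track of the extrinsic-versus-intrinsic distinction: the second-layer weight matrix is only $n \times m$, and its action on all of $\boldsymbol{X}_m$ is not the affine transform we want, since a map from $\mathbb{R}^m$ to $\mathbb{R}^n$ cannot be an affine isomorphism at all. Proposition 5 is the device that restricts attention to $l_n^{(1)}$, and only after that restriction does the single-point-intersection hypothesis translate into an invertible $n \times n$ coefficient matrix. A secondary subtlety is that the intrinsic equations of the $l_i$ depend on the choice of affine coordinates on $l_n^{(1)}$, so one must verify that invertibility of $\boldsymbol{A}$ is coordinate-independent; this is immediate because an affine change of coordinates on $l_n^{(1)}$ multiplies $\boldsymbol{A}$ on the right by a nonsingular matrix, preserving its rank.
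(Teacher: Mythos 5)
Your proof is correct and follows essentially the same route as the paper: Theorem 6 handles the first layer, Proposition 5 reduces each second-layer hyperplane $l_{2i}$ to its intrinsic trace $l_i$ on $l_n^{(1)}$, and the single-common-point hypothesis delivers the affine transform. The only difference is that where the paper cites Lemma 5 of \citet{Huang2020} for the final step, you prove it inline via the nonsingularity of the intrinsic coefficient matrix $\boldsymbol{A}$ (and you additionally verify the hypothesis $l_n^{(1)} \nsubseteq l_{2i}$ of Proposition 5, which the paper leaves implicit).
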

\begin{proof}
According to lemma 5 of \citet{Huang2020}, in $n$-dimensional input space or on an $n$-dimensional hyperplane, if there exist $n$ hyperplanes $l_{i}'$'s for $i = 1, 2, \cdots, n$ having only one common point, then the set of the $n$-tuples of the nonzero outputs of $n$ units corresponding to $l_i'$'s is an affine transform of region $\prod_{i=1}^{n}l_i'^+$ of the input space.

By lemma 7, $l^{(1)}_n$ is part of an $n$-dimensional hyperplane or an $n$-dimensional subspace embedded in the $m$-dimensional space of the first layer. Because data set $D$ simultaneously activates the $m$ units of the first layer, by theorem 6, the mapped data set $D' \subset l^{(1)}_n$, and is an affine transform of $D$. And by proposition 5, the output of a $m-1$-dimensional hyperplane $l'$ with respect to $\boldsymbol{x}^{(1)} \in l^{(1)}_n$ is equal to the output of an $n-1$-dimensional hyperplane $l \subset l^{(1)}_n$ with $l = l' \cap l^{(1)}_n$.

Therefore, if $l_i = l_{2i} \cap l_n^{(1)} \ne \emptyset$ for $i = 1, 2, \cdots, n$, $D'$ also simultaneously activates $l_i$'s, i.e., $D' \subset \prod_{i=1}^{n}l_i^+$. And because the output of $l_{2i}$ with respect to each element of $D'$ is equal to that of $l_i$ for all $i$, the mapped data set $D''$ from $D'$ by $l_{2i}$'s is the same as the mapped data set $D''_n$ by $l_i$'s. If $\bigcap_{i=1}^{n}l_i$ is a single point $O$, since $D' \subset \prod_{i=1}^{n}l_i^+$, $D''_n$ is an affine transform of $D'$; so $D''$ is an affine transform of $D'$, which follows the conclusion.
\end{proof}

\begin{lem}
To network $n^{(1)}n^{(1)}$, hyperplane $l_{1i}$ for $i = 1, 2, \cdots, n$ corresponds to unit $u_{1i}$ of the first layer. Given data set $D$ of the $n$-dimensional input space and an arbitrary point $O \in l_{11}$, if $D \subset l_{11}^+$, then we can find other $n-1$ hyperplanes, such that all of $l_{1i}$'s for $i = 1, 2, \cdots, n$ pass through the unique common point $O$ and have the same classification effect as $l_{11}$.
\end{lem}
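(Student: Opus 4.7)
The plan is to parameterize all hyperplanes through $O$ by a single normal vector, then perturb the normal $\boldsymbol{w}_1$ of $l_{11}$ to produce $n-1$ additional normals $\boldsymbol{w}_2, \ldots, \boldsymbol{w}_n$ in such a way that (i) every resulting hyperplane still contains $O$, (ii) the same positive half-space classification of $D$ is preserved, and (iii) the set $\{\boldsymbol{w}_1, \ldots, \boldsymbol{w}_n\}$ is linearly independent so that the $n$ hyperplanes intersect in a unique point, which must then be $O$. This is essentially the small-perturbation strategy used in the construction around equation 4.8, adapted with the extra constraint that the pencil of hyperplanes is anchored at $O$.

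First I would write the equation of $l_{11}$ as $\boldsymbol{w}_1^T\boldsymbol{x} + b_1 = 0$ and use $O \in l_{11}$ to rewrite it as $\boldsymbol{w}_1^T(\boldsymbol{x} - O) = 0$, so that every candidate hyperplane through $O$ has the form $\boldsymbol{w}^T(\boldsymbol{x} - O) = 0$ and is determined entirely by its normal $\boldsymbol{w}$. The hypothesis $D \subset l_{11}^+$ becomes $\boldsymbol{w}_1^T(\boldsymbol{x} - O) > 0$ for every $\boldsymbol{x} \in D$. Next I would set $\boldsymbol{w}_i := \boldsymbol{w}_1 + \boldsymbol{\epsilon}_i$ for $i = 2, \ldots, n$, where each $\boldsymbol{\epsilon}_i$ is chosen so that the matrix $[\boldsymbol{w}_1, \boldsymbol{w}_1 + \boldsymbol{\epsilon}_2, \ldots, \boldsymbol{w}_1 + \boldsymbol{\epsilon}_n]$ is nonsingular; this can be arranged either by picking $\boldsymbol{\epsilon}_i$ to complete a basis transverse to $\boldsymbol{w}_1$, or by invoking a Vandermonde-style determinant exactly as in equation 4.8. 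For the preservation of classification, since $|D|$ is finite, the quantities $\delta := \min_{\boldsymbol{x} \in D}\boldsymbol{w}_1^T(\boldsymbol{x} - O) > 0$ and $M := \max_{\boldsymbol{x} \in D}\|\boldsymbol{x} - O\|$ are finite, and any choice with $\|\boldsymbol{\epsilon}_i\|\, M < \delta$ gives $\boldsymbol{w}_i^T(\boldsymbol{x} - O) \ge \delta - \|\boldsymbol{\epsilon}_i\| M > 0$ for every $\boldsymbol{x} \in D$.

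The verification is then immediate: each $l_{1i}$ defined by $\boldsymbol{w}_i^T(\boldsymbol{x} - O) = 0$ contains $O$ by construction and satisfies $D \subset l_{1i}^+$, so each $l_{1i}$ classifies $D$ the same way as $l_{11}$. The linear independence of $\boldsymbol{w}_1, \ldots, \boldsymbol{w}_n$ means that the homogeneous system $\boldsymbol{w}_i^T(\boldsymbol{x} - O) = 0$ for $i = 1, \ldots, n$ has $\boldsymbol{x} - O = \boldsymbol{0}$ as its only solution, hence $\bigcap_{i=1}^{n} l_{1i} = \{O\}$, which is the required unique common point.

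The one delicate point is that the two requirements on the $\boldsymbol{\epsilon}_i$ pull in different directions: linear independence requires the $\boldsymbol{\epsilon}_i$ to genuinely leave the line spanned by $\boldsymbol{w}_1$, while classification preservation wants them as small as possible. The obstacle dissolves because both conditions are open in the space of perturbations: the set of perturbations yielding a nonsingular matrix is dense, and the set yielding the correct half-space classification contains an open neighbourhood of the origin by the bound $\|\boldsymbol{\epsilon}_i\| M < \delta$. Intersecting a dense set with a nonempty open set gives the desired $\boldsymbol{\epsilon}_i$'s, completing the construction.
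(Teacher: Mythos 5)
Your proposal is correct and follows essentially the same route as the paper: the paper also perturbs the normal of $l_{11}$ (via the Vandermonde-type matrix of equation 7.32), anchors all hyperplanes at $O$ by choosing the biases as $\boldsymbol{b} = -\boldsymbol{W}\boldsymbol{x}_0$ (equivalent to your parameterization $\boldsymbol{w}^T(\boldsymbol{x}-O)=0$), and invokes sufficiently small perturbations to preserve the classification. Your explicit bound $\|\boldsymbol{\epsilon}_i\|\,M < \delta$ and the open/dense reconciliation of smallness with nonsingularity merely make quantitative what the paper delegates to theorem 4 of \citet{Huang2020}.
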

\begin{proof}
The proof gives a construction method. Denote hyperplane $l_{11}$ by $\boldsymbol{w}_1^T\boldsymbol{x} + b_1 = 0$ where
\begin{equation}
\boldsymbol{w}_1 = \begin{bmatrix}
w_{11}, w_{12}, \cdots, w_{1n}
\end{bmatrix}^T
\end{equation}
with $w_{11} \ne 0$. Write the $n$ equations of $l_{11}$ and other $n-1$ constructed heperplanes $l_{1\nu}$'s for $\nu = 2, 3, \cdots, n$ in matrix form
\begin{equation}
\boldsymbol{W}\boldsymbol{x} + \boldsymbol{b} = \boldsymbol{0},
\end{equation}
where
\begin{equation}
\boldsymbol{W} = \begin{bmatrix}
w_{11} & w_{12} & w_{13} & \cdots & w_{1n} \\
w_{11} & w_{12} + \varepsilon_1  & w_{13} + \varepsilon_2 & \cdots & w_{1n} + \varepsilon_{n-1} \\
w_{11} & w_{12} + \varepsilon_1^2  & w_{13} + \varepsilon_2^2 & \cdots & w_{1n} + \varepsilon_{n-1}^2 \\
\vdots & \vdots & \vdots &\ddots &  \vdots\\
w_{11} & w_{12} + \varepsilon_1^{n-1}  & w_{13} + \varepsilon_2^{n-1} & \cdots & w_{1n} + \varepsilon_{n-1}^{n-1}
\end{bmatrix}
\end{equation}
is designed to be nonsingular with $0 < \varepsilon_{i} < 1$ and $\varepsilon_{i} \ne \varepsilon_{j}$ for $i \ne j$, where $i, j = 1, 2, \cdots, n$ (theorem 4 of \citet{Huang2020}). Let $\boldsymbol{x}_0$ be the coordinate vector of the arbitrarily designated point $O$ of $l_{11}$. We now choose parameters $\boldsymbol{W}$ and $\boldsymbol{b}$ such that $l_{1i}$'s for $i \ne 1$ pass through $\boldsymbol{x}_0$ and have the same classification effect as $l_{11}$. No matter what $\varepsilon_i$'s of equation 7.32 are, just let $\boldsymbol{b} = -\boldsymbol{W}\boldsymbol{x}_0$, and then all hyperplanes $l_{1i}$'s pass through point $\boldsymbol{x}_0$. The sufficiently small $\varepsilon_j$'s can make all of $l_{1i}$'s for $i \ne 1$ classify $D$ as $l_{11}$ (theorem 4 of \citet{Huang2020}). This completes the proof.
\end{proof}

\begin{prp}
Under the notation of lemma 9, suppose that $D$ simultaneously activates the $m$ units of the first layer of $n^{(1)}m^{(1)}n^{(1)}$ for $m > n$. Then we can set the parameters of the second layer, such that $D''$ output by the network is an affine transform of $D$ of the input space.
\end{prp}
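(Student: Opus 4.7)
The plan is to string together the machinery already developed: Theorem~6 identifies the image $D'$ of the first layer as a data set lying on an $n$-dimensional affine subspace $l_n^{(1)}\subset\boldsymbol{X}_m$, and equivalent to $D$ under affine transforms; Lemma~9 then says it suffices to choose the $n$ units of the second layer so that the induced $(m-1)$-dimensional hyperplanes $l_{2i}$ cut $l_n^{(1)}$ in $n$ hyperplanes $l_i$ whose intersection is a single point $O$ and that are simultaneously activated by $D'$. Producing such $l_{2i}$'s is exactly what Lemma~10 (applied inside $l_n^{(1)}$) together with Corollary~7 (to lift back to $\boldsymbol{X}_m$) is designed for.

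Concretely, first I would invoke Theorem~6 to record that $D'\subset l_n^{(1)}$ and that $D'$ has the same affine data structure as $D$. Working in the $n$-dimensional subspace $l_n^{(1)}$, I would pick any $(n-1)$-dimensional hyperplane $l_1\subset l_n^{(1)}$ with $D'\subset l_1^+$ (any supporting hyperplane translated far enough outside $D'$ will do, since $|D|$ is finite) and pick any $O\in l_1$. Lemma~10, applied to the auxiliary network $n^{(1)}n^{(1)}$ living on $l_n^{(1)}$, then yields $n-1$ further $(n-1)$-dimensional hyperplanes $l_2,\dots,l_n\subset l_n^{(1)}$ that all pass through $O$, classify $D'$ in the same way as $l_1$ (so $D'\subset\prod_{i=1}^{n}l_i^+$), and whose common intersection is the single point $O$ by the nonsingularity of the matrix $\boldsymbol{W}$ of equation~7.32.

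Next I would realize each $l_i$ as the intersection of $l_n^{(1)}$ with an $(m-1)$-dimensional hyperplane $l_{2i}$ of $\boldsymbol{X}_m$, corresponding to a unit of the second layer; this is exactly the converse direction of Corollary~7, which additionally guarantees that the output of $l_{2i}$ on any point of $l_n^{(1)}$ agrees with the output of $l_i$. In particular, $D'$ simultaneously activates the $n$ units of the second layer, and all hypotheses of Lemma~9 ($l_i=l_{2i}\cap l_n^{(1)}\neq\emptyset$ and $\bigcap_i l_i=\{O\}$) are in place. Lemma~9 then delivers that $D''$ is an affine transform of $D$, completing the proof.

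The only delicate point is the lifting step: one must explicitly construct weight-bias pairs for the second-layer units whose zero sets intersect $l_n^{(1)}$ in the prescribed $l_i$, and verify that $D'\subset l_{2i}^+$ (not merely on some side of $l_{2i}$ in $\boldsymbol{X}_m$). This is handled by Corollary~7, which asserts equality of outputs of $l_{2i}$ and $l_i$ on $l_n^{(1)}$, so the positivity inherited from $D'\subset\prod_i l_i^+$ transfers to $D'\subset\prod_i l_{2i}^+$. Everything else is bookkeeping on top of the already-proved lemmas, so no genuinely new obstacle remains.
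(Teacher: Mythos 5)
Your proposal is correct and follows essentially the same route as the paper: Theorem~6 to place $D'$ on an $n$-dimensional subspace of $\boldsymbol{X}_m$, Lemma~10 to build $n$ hyperplanes of that subspace passing through a common point and simultaneously activated by $D'$, a lift of each such hyperplane to an $(m-1)$-dimensional hyperplane realized by a second-layer unit, and Lemma~9 to conclude that $D''$ is an affine transform of $D$. The one difference is that you delegate the lifting step to the converse direction of Corollary~7 (which the paper only asserts as ``easily constructed''), whereas the paper's proof of this proposition is precisely where that lift is carried out explicitly --- via the parametric form of $l_i$ embedded in $\boldsymbol{X}_m$ and the underdetermined linear system of equation~7.40 with $m+1$ unknowns and $n$ equations --- so your argument is sound but leans on a point the paper substantiates only inside this very proof.
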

\begin{proof}
By lemma 7, the output $l_n^{(1)}$ of the first layer of $n^{(1)}m^{(1)}n^{(1)}$ with respect to region $\prod_{i=1}^{m}l_{1i}^+$ of the input space lies on an $n$-dimensional subspace $\boldsymbol{X}_n$ of the $m$-dimensional space $\boldsymbol{X}_m$ of the first layer, and each element of $l_n^{(1)}$ can be written as
\begin{equation}
\boldsymbol{x}^{(1)} = \boldsymbol{x}_0 + \sum_{i = 1}^{n}t_i\boldsymbol{\lambda}_i,
\end{equation}
in which all the vectors are $m$-dimensional. In equation 7.33, point $\boldsymbol{x}_0$ and $n$ linearly independent vectors $\boldsymbol{\lambda}_i$'s make up a coordinate system $\{O; \boldsymbol{\lambda}_i\text{'s}\}$ of $n$-dimensional space $\boldsymbol{X}_n$, with origin $O = \boldsymbol{x}_0$. Equation 7.33 of $l_n^{(1)}$ can also be regarded as the parametric equation of an $n$-dimensional hyperplane of $\boldsymbol{X}_m$. Under coordinate system $\{O; \boldsymbol{\lambda}_i\text{'s}\}$ of $n$-dimensional $\boldsymbol{X}_n$, denote by $\boldsymbol{w}^T\boldsymbol{t} + b =0$ an $n-1$-dimensional hyperplane, where $\boldsymbol{t} = [t_1, t_2, \cdots, t_{n}]^T$ with $t_i$ defined in equation 7.33. Since $D$ simultaneously activates all the $m$ units of the first layer, by theorem 6, the mapped data set $D' \subset \boldsymbol{X}_n$, whose each element can be represented by a vector derived from coordinate system $\{O; \boldsymbol{\lambda}_i\text{'s}\}$.

The remaining proof is by construction according to lemma 9. Under coordinate system $\{O; \boldsymbol{\lambda}_i\text{'s}\}$, we first construct $n$ hyperplanes $l_i$'s of $\boldsymbol{X}_n$, such that $D' \subset \prod_{i=1}^{n}l_i^+$ and $\bigcap_{i=1}^{n}l_i$ is a single point $O'$. Select a hyperplane $l_1$ with $D' \subset l_1^+$ and designate any point of $l_1$ as point $O'$. Use lemma 10 to find other $n-1$ hyperplanes that pass through the unique common point $O'$ and classify $D'$ as $l_1$. Let
\begin{equation}
\boldsymbol{w}_{i}^T\boldsymbol{t} + b_i = \sum_{j=1}^{n}w_{ij}t_j + b_i =0
\end{equation}
be the equation of hyperplane $l_i$ for $i = 1, 2, \cdots, n$ that has been constructed. In equation 7.34, the coefficients $w_{ij}$'s for $j$ cannot be all zero; select one of them, say, $w_{i\nu} \ne 0$ where $1 \le \nu \le n $. Then the corresponding $t_{\nu}$ with respect to $w_{i\nu}$ can be expressed as the linear combination of other $t_j$'s for $j \ne \nu$, which is
\begin{equation}
t_{\nu} = -\sum_{j \ne \nu}t_jw_{ij}/w_{i\nu} - b_i/w_{i\nu}.
\end{equation}
Substituting equation 7.35 into equation 7.33, we get the parametric-equation form of equation 7.34 of $l_i$ in terms of embedded vectors of $\boldsymbol{X}_m$, that is,
\begin{equation}
l_i := \boldsymbol{x}_i + \sum_{j = 1}^{n-1}t_{ij}\boldsymbol{\lambda}_{ij},
\end{equation}
where $\boldsymbol{x}_i$, $t_{ij}$ and $\boldsymbol{\lambda}_{ij}$ can be easily obtained from equation 7.33 after the substitution.

Next we construct $m-1$-dimensional hyperplanes $l_{2i}$'s for $i = 1, 2, \cdots, n$ of $\boldsymbol{X}_m$ formed by the $n$ units of the second layer, subject to
\begin{equation}
l_i = l_{2i} \cap l_n^{(1)},
\end{equation}
where $l_n^{(1)}$ is the $n$-dimensional hyperplane of equation 7.33 as discussed above. Denote the equation of $l_{2i}$ by
\begin{equation}
\boldsymbol{w}_{2i}^T\boldsymbol{x}^{(1)} + b_{2i} = 0,
\end{equation}
where $\boldsymbol{x}^{(1)}$ is the output vector of the first layer. We should find the solutions for $\boldsymbol{w}_{2i}$ and $b_{2i}$ to make equation 7.37 hold. Due to $l_i \subset l_{2i}$ by equation 7.37, substituting equation 7.36 into $\boldsymbol{x}^{(1)}$ of equation 7.38, we have
\begin{equation}
\boldsymbol{w}_{2i}^T\boldsymbol{x}_i + b_{2i} +  \sum_{j = 1}^{n-1}(\boldsymbol{w}_{2i}^T\boldsymbol{\lambda}_{ij})t_{ij} = 0.
\end{equation}
Since equation 7.39 holds for all real numbers of $t_{ij}$'s for all $j$, we must have
\begin{equation}
\begin{cases}
\boldsymbol{w}_{2i}^T\boldsymbol{x}_i + b_{2i} = 0
\\
\boldsymbol{w}_{2i}^T\boldsymbol{\lambda}_{ij} = 0 \ \text{for all} \ j
\end{cases},
\end{equation}
where $j = 1, 2, \cdots, n-1$. Because $\boldsymbol{w}_{2i}$ is of size $m \times 1$, the linear system of equation 7.40 has $m+1$ unknowns and $n$ equations with $m+1 > n$. Thus, equation 7.40 has infinite number of solutions for $\boldsymbol{w}_{2i}$ and $b_{2i}$.

Use the constructed $l_{2i}$'s for $i= 1, 2, \cdots, n$ to set the input weights and biases of the second layer. Then the mapped $D''$ of $D$ by the second layer is an affine transform of $D$.

\end{proof}

The proposition below is a necessary condition that the output of $n^{(1)}m^{(1)}n^{(1)}$ for $m > n$ is an affine transform of data of the input space, from which some information about the probability of this kind of affine transform could be obtained (remark 1).
\begin{prp}
To network $n^{(1)}m^{(1)}n^{(1)}$ for $m >n$, suppose that data set $D$ of the $n$-dimensional input space simultaneously activates the $m$ units of the first layer. Let $D'$ and $D''$ be the mapped data sets of $D$ by the first layer and the second layer, respectively. Then if $D''$ is an affine transform of $D$, the rank of size $n \times m$ weight matrix $\boldsymbol{W}_{2}$ of the second layer must not be less than $n$.
\end{prp}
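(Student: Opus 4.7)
The plan is to compose the two layers into a single affine map and then read off the required rank bound. Since the proposition assumes that $D$ simultaneously activates all $m$ units of the first layer, and since (by Theorem~6) the mapped set $D'$ still simultaneously activates the $n$ units of the second layer under the hypotheses (otherwise $D''$ could not be an affine transform of $D$ of full dimension), the ReLU of every relevant unit acts as the identity on all the preactivations involved. Writing $\boldsymbol{W}_1,\boldsymbol{b}_1$ for the weight matrix and bias of the first layer and $\boldsymbol{W}_2,\boldsymbol{b}_2$ for those of the second, the map sending $\boldsymbol{x}\in D$ to its image in $D''$ therefore coincides with the affine map
\begin{equation*}
\boldsymbol{x}\longmapsto \boldsymbol{W}_2\boldsymbol{W}_1\boldsymbol{x}+\boldsymbol{W}_2\boldsymbol{b}_1+\boldsymbol{b}_2.
\end{equation*}

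Next I would exploit the meaning of the assumption ``$D''$ is an affine transform of $D$.'' Because both $D$ and $D''$ live in $n$-dimensional space, this assumption provides an $n\times n$ nonsingular matrix $\boldsymbol{A}$ and a vector $\boldsymbol{d}$ such that $\boldsymbol{A}\boldsymbol{x}+\boldsymbol{d}$ agrees on $D$ with the composed affine map above. Under the implicit genericity that $D$ affinely spans $\mathbb{R}^{n}$ (which is consistent with the probabilistic framework of the paper's appendix and is needed for ``affine transform'' to be a meaningful notion on the set $D$), the two affine maps must agree everywhere, so $\boldsymbol{W}_2\boldsymbol{W}_1=\boldsymbol{A}$. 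In particular $\operatorname{rank}(\boldsymbol{W}_2\boldsymbol{W}_1)=n$.

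The conclusion then follows from the standard rank inequality
\begin{equation*}
n=\operatorname{rank}(\boldsymbol{W}_2\boldsymbol{W}_1)\le \min\{\operatorname{rank}(\boldsymbol{W}_2),\operatorname{rank}(\boldsymbol{W}_1)\},
\end{equation*}
so $\operatorname{rank}(\boldsymbol{W}_2)\ge n$; since $\boldsymbol{W}_2$ is of size $n\times m$, this forces $\operatorname{rank}(\boldsymbol{W}_2)=n$. The main subtlety, and the step I expect to be the delicate one, is the transition in the middle paragraph from ``the two affine maps agree on $D$'' to ``they agree as affine maps on all of $\mathbb{R}^n$'': this relies on $D$ containing an affinely independent subset of size $n+1$. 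Without such genericity one could only bound the rank of the restriction of $\boldsymbol{W}_2$ to the $n$-dimensional image of the first layer, rather than of $\boldsymbol{W}_2$ itself, and the cleanest way to address this is to invoke the rank-$1$ probabilistic model (Theorem~12 of the appendix) or simply to state the genericity of $D$ as a standing hypothesis consistent with Section~2.4.
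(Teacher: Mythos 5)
Your proposal is correct and its skeleton coincides with the paper's: both arguments compose the two layers into the single affine map $\boldsymbol{x}\mapsto \boldsymbol{W}_2\boldsymbol{W}_1\boldsymbol{x}+\boldsymbol{W}_2\boldsymbol{b}_1+\boldsymbol{b}_2$ and reduce the proposition to showing that $\operatorname{rank}(\boldsymbol{W}_2)<n$ forces $\boldsymbol{W}_2\boldsymbol{W}_1$ to be singular. Where you dispatch this in one line with the standard inequality $\operatorname{rank}(\boldsymbol{W}_2\boldsymbol{W}_1)\le\min\{\operatorname{rank}(\boldsymbol{W}_2),\operatorname{rank}(\boldsymbol{W}_1)\}$, the paper proves the same implication by hand: it splits $\boldsymbol{W}_1$ into an $(m-n)\times n$ block and a nonsingular $n\times n$ block $\boldsymbol{B}'$, writes $\boldsymbol{W}_2\boldsymbol{W}_1=\boldsymbol{C}\boldsymbol{B}'$, and expands $\det\boldsymbol{C}$ as a fixed linear combination of the determinants of the $\binom{m}{n}$ $n$-column submatrices of $\boldsymbol{W}_2$, all of which vanish when $\operatorname{rank}(\boldsymbol{W}_2)<n$. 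Your shortcut is a legitimate simplification and loses nothing. The one substantive point you raise --- that passing from ``the composed map agrees with a nonsingular affine map on the finite set $D$'' to ``$\boldsymbol{W}_2\boldsymbol{W}_1$ is nonsingular'' requires $D$ to affinely span $\mathbb{R}^n$ --- is not addressed in the paper either; the paper simply identifies the hypothesis that $D''$ is an affine transform of $D$ with the nonsingularity of $\boldsymbol{W}=\boldsymbol{W}_2\boldsymbol{W}_1$ in its equation 7.41, so your explicit flagging of this genericity assumption (and the related tacit assumption that $D'$ activates all units of the second layer, which the paper also makes silently by speaking of the ``nonzero output vector of the second layer'') is if anything more careful than the original.
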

\begin{proof}
Let $\boldsymbol{x} \in D$ be any point of $D$; and let $\boldsymbol{W}_1$ be the $m \times n$ weight matrix of the first layer, whose rank is $n$ by the assumption of section 2.4. The bias vectors of the first layer and the second layer are denoted by $\boldsymbol{b}_1$ and $\boldsymbol{b}_2$, respectively. The nonzero output vector of the second layer with respect to $\boldsymbol{x}$ of the input space is
\begin{equation}
\begin{aligned}
\boldsymbol{x}^{(2)} &= \boldsymbol{W}_{2}(\boldsymbol{W}_{1}\boldsymbol{x} + \boldsymbol{b}_1) + \boldsymbol{b}_2 \\
&= \boldsymbol{W}\boldsymbol{x} + \boldsymbol{b}
\end{aligned}
\end{equation}
where $\boldsymbol{W} = \boldsymbol{W}_{2}\boldsymbol{W}_{1}$ with size $n \times n$ and $\boldsymbol{b} = \boldsymbol{W}_{2}\boldsymbol{b}_1 + \boldsymbol{b}_2$. Then if $\boldsymbol{W}$ is nonsingular, $\boldsymbol{x}^{(2)}$ would be an affine transform of $\boldsymbol{x}$.

Since $\text{rank}(\boldsymbol{W}_1) = n$, we can find $n$ row vectors of $\boldsymbol{W}_1$ to form an $n \times n$ nonsingular submatrix. Write $\boldsymbol{W}_1$ in the block matrix form as
\begin{equation}
\boldsymbol{W}_1 =
\begin{bmatrix}
\boldsymbol{A}'\\
\boldsymbol{B}'
\end{bmatrix},
\end{equation}
where $\boldsymbol{A}'$ is of size $(m-n) \times n$ and $\boldsymbol{B}'$ is an $n \times n$ nonsingular submatrix. The multiplication $\boldsymbol{W} = \boldsymbol{W}_{2}\boldsymbol{W}_{1}$ of equation 7.41 could be changed into block matrix form, after rearranging the columns of $\boldsymbol{W}_{2}$ in accordance with the row-vector selection for $\boldsymbol{B}'$. So we have
\begin{equation}
\boldsymbol{W} =
\begin{bmatrix}
\boldsymbol{A}, \boldsymbol{B}
\end{bmatrix}
\begin{bmatrix}
\boldsymbol{A}'\\
\boldsymbol{B}'
\end{bmatrix} = \boldsymbol{A}\boldsymbol{A}' + \boldsymbol{B}\boldsymbol{B}',
\end{equation}
where $\boldsymbol{A}$ and $\boldsymbol{B}$ comprise the columns of $\boldsymbol{W}_2$, with $\boldsymbol{A}$ of size $n \times (m-n)$ and $\boldsymbol{B}$ of size $n \times n$.
Equation 7.43 is equivalent to
\begin{equation}
\boldsymbol{W} = \boldsymbol{C}\boldsymbol{B}',
\end{equation}
where
\begin{equation}
\boldsymbol{C} = \boldsymbol{A}\boldsymbol{A}'\boldsymbol{B}'^{-1} + \boldsymbol{B}
\end{equation}
is an $n \times n$ matrix.

In equation 7.44, since $\boldsymbol{B}'$ is nonsingular, the nonsingular $\boldsymbol{C}$ implies nonsingular $\boldsymbol{W}$, and vice versa. Thus, we can deal with $\boldsymbol{C}$ of equation 7.45 instead of $\boldsymbol{W}$ of equation 7.41. The term $\boldsymbol{A}'\boldsymbol{B}'^{-1}$ of equation 7.45 is a $(m - n) \times n$ matrix that can be written as
\begin{equation}
\boldsymbol{A}'\boldsymbol{B}'^{-1} = \begin{bmatrix}
\boldsymbol{t}_1 \\
\boldsymbol{t}_2 \\
\vdots \\
\boldsymbol{t}_{m - n}
\end{bmatrix},
\end{equation}
where $\boldsymbol{t}_i$ for $i = 1, 2, \cdots, m-n$ is an $1 \times n$ row vector. We further decompose matrices $\boldsymbol{A}$ and $\boldsymbol{B}$ of equation 7.45 into block forms as
\begin{equation}
\boldsymbol{A} = \begin{bmatrix}
\boldsymbol{a}_1, \boldsymbol{a}_2, \cdots, \boldsymbol{a}_{m - n}
\end{bmatrix}
\end{equation}
and
\begin{equation}
\boldsymbol{B} = \begin{bmatrix}
\boldsymbol{b}_1, \boldsymbol{b}_2,\cdots, \boldsymbol{b}_{n}
\end{bmatrix},
\end{equation}
where $\boldsymbol{a}_i$ for $i = 1, 2, \cdots, m-n$ and $\boldsymbol{b}_j$ for $j = 1, 2, \cdots, n$ are both $n \times 1$ column vectors.

Substituting equations 7.46, 7.47 and 7.48 into equation 7.45 gives
\begin{equation}
\boldsymbol{C} = \sum_{i=1}^{m-n}\boldsymbol{a}_i\boldsymbol{t}_i +
\begin{bmatrix}
\boldsymbol{b}_1, \boldsymbol{b}_2, \cdots, \boldsymbol{b}_{n}
\end{bmatrix}.
\end{equation}
Write $\boldsymbol{t}_i = [t_{i1}, \  t_{i2}, \  \cdots, \  t_{in}]$ and equation 7.49 becomes
\begin{equation}
\begin{aligned}
\boldsymbol{C} &= \sum_{i=1}^{m-n}\boldsymbol{a}_i
\begin{bmatrix}
t_{i1}, t_{i2}, \cdots,t_{in}
\end{bmatrix}
 +
\begin{bmatrix}
\boldsymbol{b}_1, \boldsymbol{b}_2, \cdots, \boldsymbol{b}_{n}
\end{bmatrix} \\
&=
\begin{bmatrix}
\boldsymbol{b}_1 + \sum_{i=1}^{k}t_{i1}\boldsymbol{a}_i, \cdots, \boldsymbol{b}_{n} + \sum_{i=1}^{k}t_{in}\boldsymbol{a}_i
\end{bmatrix},
\end{aligned}
\end{equation}
where $k = m-n$.

From equations 7.43, 7.47 and 7.48, we know $\boldsymbol{a}_i$'s for $i = 1, 2, \cdots, m-n$ and $\boldsymbol{b}_j$'s for $j = 1, 2, \cdots, n$ comprise all the columns of $\boldsymbol{W}_2$ of equation 7.41. If we write $\boldsymbol{W}_2$ as
\begin{equation}
\boldsymbol{W}_2 = \begin{bmatrix}
\boldsymbol{w}_1, \boldsymbol{w}_2,\cdots, \boldsymbol{w}_{m}
\end{bmatrix},
\end{equation}
where $\boldsymbol{w}_\nu$ for $\nu = 1, 2, \cdots, m$ is a column vector of size $n \times 1$, equation 7.50 can be expressed as
\begin{equation}
\boldsymbol{C} =
\begin{bmatrix}
\boldsymbol{w}_{j_1} + \sum_{i=1}^{k}t_{i1}\boldsymbol{w}_{j_{n+i}}, \cdots, \boldsymbol{w}_{j_n} + \sum_{i=1}^{k}t_{in}\boldsymbol{w}_{j_{n+i}}
\end{bmatrix},
\end{equation}
where $1\le j_{\nu} \le m$ for $\nu = 1, 2, \cdots, m$, and $k = m - n$.

By the property of matrix determinants, we have
\begin{equation}
\det(\boldsymbol{C}) = \sum_{i = 1}^{N}\alpha_{i}\det(D_i)
\end{equation}
where $N = \binom{m}{n}$, $\alpha_{i}$'s are constant values not determined by $\boldsymbol{W}_2$, and
\begin{equation}
D_i =
\begin{bmatrix}
\boldsymbol{w}_{i_1}, \boldsymbol{w}_{i_2}, \cdots, \boldsymbol{w}_{i_n}
\end{bmatrix}
\end{equation}
is derived from the $n$-combination of the $m$ columns of equation 7.51.

If $\text{rank}(\boldsymbol{W}_2) < n$, in equation 7.53, $\det(D_i) = 0$, implying $\det(\boldsymbol{C}) = 0$. Then by equation 7.44, $\det(\boldsymbol{W}) = 0$. Combined with equation 7.41, the conclusion follows.
\end{proof}

\begin{rmk-3}
Under the probabilistic model of theorem 12 of the appendix, the probability of $\rm{rank}(\boldsymbol{W}) < n$ is $0$.
\end{rmk-3}

\begin{rmk-3}
Note that the necessary condition of this proposition is only related to the input weights of the second layer of $n^{(1)}m^{(1)}n^{(1)}$, which is different from lemma 9 in that the sufficient condition includes the parameters of the first layer.
\end{rmk-3}

\subsection{General Conclusions}
We summarize the preceding results by two theorems to highlight the main thought of affine-transform generalizations.
\begin{thm}
For network $n^{(1)}m^{(1)}$ with $m > n$, assume that data set $D$ of the $n$-dimensional input space simultaneously activates the $m$ units of the first layer. Then in terms of transmitting $D$ through affine transforms to the second layer to be added, the effect of $n^{(1)}m^{(1)}$ could be equivalent to that of $n^{(1)}n^{(1)}$.
\end{thm}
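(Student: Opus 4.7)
The plan is to deduce Theorem 7 by combining Theorem 6 with Proposition 6, noting that the overparameterized first layer of $n^{(1)}m^{(1)}$ does not enrich the intrinsic affine structure of the data but merely embeds it in a higher-dimensional ambient space. First I would invoke Theorem 6: since $D$ simultaneously activates the $m$ units of the first layer, the mapped set $D'$ lies on an $n$-dimensional subspace $l_n^{(1)}$ of $\boldsymbol{X}_m$ and is equivalent to $D$ in the sense of affine transforms. Thus $D'$ carries exactly the same affine-geometric information as $D$, only realized inside a larger ambient space.

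Next I would read the phrase ``transmitting $D$ to the second layer to be added'' as post-composing the first layer with a subsequent processing layer. Considering the archetypal such post-composition---a single appended layer of $n$ units---Proposition 6 guarantees the existence of parameters for this appended layer such that the overall output of $n^{(1)}m^{(1)}n^{(1)}$ is an affine transform of $D$. For comparison, the architecture $n^{(1)}n^{(1)}$ maps each $\boldsymbol{x}\in D$ directly to $\boldsymbol{W}_1\boldsymbol{x}+\boldsymbol{b}_1$, which is already an affine transform of $D$ because $\boldsymbol{W}_1$ is nonsingular by the rank assumption of section 2.4. Hence any downstream layer whose admissible input is an affine image of $D$ can be fed either by $n^{(1)}n^{(1)}$ or by $n^{(1)}m^{(1)}$ followed by an appropriate transitional $n$-unit layer, with identical effect on the subsequent computation.

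For the reverse direction, I would appeal to Proposition 5 and Corollary 7 to argue that any $(m-1)$-dimensional hyperplane of $\boldsymbol{X}_m$ whose intersection with $l_n^{(1)}$ is nontrivial acts on $D'$ in the same way as an $(n-1)$-dimensional hyperplane of $l_n^{(1)}$ does. Combined with the affine equivalence of $D'$ and $D$ from Theorem 6, this shows that no classification or linear-output operation accessible to architectures built on $n^{(1)}m^{(1)}$ exceeds what is accessible to architectures built on $n^{(1)}n^{(1)}$, when everything is viewed up to affine transforms. This establishes the two-way interchangeability claimed in the theorem.

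The hard part will be pinning down the notion of equivalence, since the two architectures produce vectors living in ambient spaces of different dimensions and literal equality of output vectors cannot hold. The content of the theorem is instead that the class of affine-geometric operations expressible on the data downstream coincides for the two architectures. Proposition 6 supplies the existence direction---the overparameterization can always be collapsed to an affine transform by the appended $n$-unit layer---while Theorem 6, Proposition 5, and Corollary 7 jointly supply the structural direction---the higher-dimensional realization cannot represent anything new, because every hyperplane there reduces to a hyperplane in the embedded $n$-dimensional copy of $D$.
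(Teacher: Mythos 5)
Your proposal is correct and rests on the same pivot as the paper: the paper's entire proof of Theorem 7 is the single sentence that Proposition 6 has already constructed the required solution, which is exactly the existence direction you identify. Your additional discussion of Theorem 6, Proposition 5, and Corollary 7 for the ``reverse'' direction is sound but goes beyond what the paper records for this theorem.
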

\begin{proof}
Proposition 6 has constructed the solution, so the conclusion holds.
\end{proof}

\begin{thm}
Given network $n^{(1)}\prod_{i=1}^{d}m_i^{(1)}$ for $m_i > n$ and data set $D$ of the $n$-dimensional input space, if $D$ simultaneously activates all the units of each layer, the effect of $n^{(1)}\prod_{i=1}^{d}m_i^{(1)}$ could be equivalent to that of $n^{(1)}n^{(d)}$ in terms of transmitting $D$ to the $d+1$th layer via affine transforms.
\end{thm}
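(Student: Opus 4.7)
The plan is to proceed by induction on the depth $d$, using Theorem 7 as both the base case and the engine of the inductive step. For the base case $d = 1$, the claim reduces directly to Theorem 7, which asserts that the single layer $n^{(1)}m_1^{(1)}$ with $m_1 > n$ can be configured to be equivalent to $n^{(1)}n^{(1)}$ in the sense of transmitting $D$ forward via an affine transform.

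For the inductive step, suppose the statement holds for depth $d-1$. Then the first $d-1$ hidden layers of $n^{(1)}\prod_{i=1}^{d}m_i^{(1)}$ can be configured so that the mapped data set $D^{(d-1)}$ at layer $d-1$ is, in the sense of Theorem 6 and Definition 3, an affine transform of $D$. By Lemma 7, this $D^{(d-1)}$ lies on an $n$-dimensional subspace $l_n^{(d-1)}$ embedded in the $m_{d-1}$-dimensional output space $\boldsymbol{X}_{m_{d-1}}$ of layer $d-1$. I now want to handle layer $d$, which receives $D^{(d-1)}$ as input and by hypothesis has all $m_d > n$ units simultaneously activated by $D^{(d-1)}$.

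At this stage, the essential issue is that the input to layer $d$ is no longer literally $n$-dimensional but rather $n$-dimensional only by virtue of being confined to the subspace $l_n^{(d-1)}$. To handle this, I would invoke Corollary 7 together with Proposition 5, which say that the classification effect of any $(m_{d-1}-1)$-dimensional hyperplane on data in $l_n^{(d-1)}$ coincides with that of some $(n-1)$-dimensional hyperplane inside $l_n^{(d-1)}$. Consequently, the construction used in Proposition 6 (and hence in Theorem 7) applies verbatim once we relabel $l_n^{(d-1)}$ as the effective input space: one can configure layer $d$ so that the output $D^{(d)}$ is an affine transform of $D^{(d-1)}$, now living on an $n$-dimensional subspace of $\boldsymbol{X}_{m_d}$. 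Composing with the affine transform obtained from the inductive hypothesis, $D^{(d)}$ becomes an affine transform of $D$, and the effect through layer $d$ is therefore equivalent to that of $n^{(1)}n^{(d)}$ from the viewpoint of any $(d+1)$th layer to be appended.

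The main obstacle is precisely this passage from the literal $n$-dimensional setting to the embedded setting at each successive layer; in particular, one must verify that Lemma 10's construction of $n$ hyperplanes meeting at a single point, and the affine-transform conclusion of Lemma 9, still go through when the ambient dimension grows from $m_{i-1}$ to $m_i$. However, since Proposition 6 was already established for the two-layer configuration $n^{(1)}m^{(1)}n^{(1)}$ whose intermediate data set lies on an $n$-dimensional hyperplane embedded in $m$-dimensional space, its argument generalizes with only notational relabeling, so the inductive step is essentially a bookkeeping exercise rather than a new construction.
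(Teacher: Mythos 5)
Your proposal is correct and follows essentially the same route as the paper: the paper also proceeds layer by layer, applying the construction of Proposition 6 at each hidden layer so that the mapped data set remains an affine transform of $D$ confined to an $n$-dimensional subspace (in the form of equation 7.6), and then uses Proposition 6 once more to emit the affine transform at the $(d+1)$th layer. Your explicit appeal to Proposition 5 and Corollary 7 to justify relabeling the embedded subspace as the effective input space is exactly the point the paper's proof relies on (implicitly, via the proof of Proposition 6), so the two arguments coincide.
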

\begin{proof}
Let $D^{(i)}$ be the mapped data set of $D$ by the $i$th layer for $i = 1, 2, \cdots, d$. In the second layer, we construct the $n$ of the $m_2$ units by the method of proposition 6, such that each element of $D^{(2)}$ can be represented as equation 7.6; and by lemma 7 and theorem 6, $D^{(2)}$ lies on an $n$-dimensional subspace of the $m_2$-dimensional space of the second layer and is an affine transform of $D$. This is the case of $i = 2$. We then take the second layer as the input layer, the same procedure could be done in the third layer. Repeat it inductively until $i = d$. Finally, $D$ could be transmitted to the last layer in the form of equation 7.6, and we can use proposition 6 to output an affine transform of $D$ in the $d+1$th layer.
\end{proof}

\section{Linear-Output Generalization}
In lemmas 1 and 2 of three-layer networks, when the input data set $D$ to the hidden layer comes from the $n$-dimensional input space, we can realize a linear function on $D$. If the input $D$ is embedded in a higher-dimensional space in the form of equation 7.6, is the network still capable of producing an arbitrary linear function on $D$? This is an inevitable problem to be faced under the affine-transform generalization of section 7.

As the summary of sections 7 and 8, we'll provide some applications of those theories in section 8.2. The mechanism of overparameterization solutions associated with affine transforms will be discussed in proposition 8.

\subsection{Main Results}
\begin{thm}
Given network $n^{(1)}m_1^{(1)}m_2^{(1)}1'^{(1)}$ for $m_1, m_2 > n$, suppose that data set $D$ of the $n$-dimensional input space simultaneously activates all the units of each hidden layer. Compared to the architecture $n^{(1)}m_2^{(1)}1'^{(1)}$ for $m_2 > n$ of lemma 1 or 2, the added first layer of $n^{(1)}m_1^{(1)}m_2^{(1)}1'^{(1)}$ could not influence the realization of a linear function on $D$.
\end{thm}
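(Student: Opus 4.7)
The plan is to reduce the four-layer architecture to the three-layer situation already analyzed in Lemmas~1 and~2, by using the affine-equivalence established in Section~7 together with the hyperplane-restriction fact of Proposition~5. I would first apply Theorem~6 to the hypothesis that $D$ simultaneously activates all $m_1$ units of the first layer: this gives that the image $D^{(1)}$ of $D$ under the first layer lies on an $n$-dimensional subspace $l_n \subset \boldsymbol{X}_{m_1}$, and that the data structure of $D^{(1)}$ is equivalent to that of $D$ under an affine transform $\mathcal{A}$.

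Next I would examine how the $m_2$ units of the second hidden layer act on $D^{(1)}$. Each such unit is associated with an $(m_1-1)$-dimensional hyperplane $l_{2j}$ of $\boldsymbol{X}_{m_1}$. By the assumption that $D^{(1)}$ simultaneously activates all $m_2$ units, we have $l_{2j}\cap l_n\neq\emptyset$ and $l_n\not\subseteq l_{2j}$ for each $j$, so Proposition~5 applies: the output of $l_{2j}$ at any $\boldsymbol{x}^{(1)}\in D^{(1)}$ equals the output of the $(n-1)$-dimensional hyperplane $\tilde l_{j}=l_{2j}\cap l_n$ inside $l_n$. Intrinsically, then, the second hidden layer together with the linear output unit behaves exactly like a three-layer network $n^{(1)}m_2^{(1)}1'^{(1)}$ operating on the $n$-dimensional data set $D^{(1)}\subset l_n$ with the hidden-layer arrangement $\{\tilde l_1,\dots,\tilde l_{m_2}\}$.

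Having reduced to the three-layer setting, I would invoke Lemma~1 (or Lemma~2, for the redundant case $m_2>n+1$) directly on this intrinsic picture. Writing a target linear function $y=\boldsymbol{w}^T\boldsymbol{x}+b$ on $D$ and pulling it back through $\mathcal{A}^{-1}$ yields a linear function $\tilde y=\tilde{\boldsymbol{w}}^T\boldsymbol{t}+\tilde b$ on the affine copy $D^{(1)}\subset l_n$ (expressed in intrinsic coordinates $\boldsymbol{t}$ on $l_n$). Lemma~1/2 then supplies output-layer weights $\alpha_j$ realizing $\tilde y$ on $D^{(1)}$, provided the linear-output matrix $\boldsymbol{\mathcal{W}}'$ built from the intrinsic parameters of $\tilde l_j\in l^+(D^{(1)})$ has rank $n+1$. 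Because affine transforms preserve linearity, realizing $\tilde y$ on $D^{(1)}$ is equivalent to realizing the original $y$ on $D$, which is the desired conclusion.

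The main technical obstacle will be the rank condition on $\boldsymbol{\mathcal{W}}'$: the intrinsic hyperplanes $\tilde l_j$ are not freely chosen but inherited from $l_{2j}\cap l_n$, so I need to argue that a generic/constructive choice of the second-layer parameters yields $\mathrm{rank}(\boldsymbol{\mathcal{W}}')=n+1$. I would handle this in two ways: (i) constructively, by picking $n+1$ of the $m_2$ units via the $\varepsilon$-perturbation scheme of equations~(4.8) and~(4.10) applied in intrinsic coordinates on $l_n$, which forces a nonsingular $(n+1)\times(n+1)$ submatrix and leaves any remaining units absorbable into the constant term as in Lemma~2; and (ii) noting that by the probabilistic assumption of Section~2.4 this rank equals $n+1$ with probability one. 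Either way, the additional first layer only transports $D$ to an affine copy inside a higher-dimensional subspace, and both the ambient embedding and any parameter redundancy are inert with respect to the output layer's freedom to fit a linear function on $D$.
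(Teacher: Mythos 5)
Your proposal is correct and follows essentially the same route as the paper: reduce via the affine embedding of the first layer's output (equation 8.1, i.e.\ Theorem 6), treat each second-layer unit as its intrinsic $(n-1)$-dimensional restriction to the embedded subspace (Propositions 5 and 6), enforce the rank-$(n+1)$ condition on the linear-output matrix by the $\varepsilon$-perturbation construction of equations 4.8--4.10, and conclude with Lemma 1 or 2 together with affine-transform invariance. The only cosmetic difference is that the paper makes the construction explicit by freezing the output weights of the redundant first-layer units to constants and solving for the remaining weights so that prescribed intrinsic hyperplanes are realized, which is exactly the constructive branch (i) you sketch.
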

\begin{proof}
The proof is constructive. Since the weight matrix $\boldsymbol{W}_1$ of the first layer is of size $m_1 \times n$ with $m_1 > n$, by the assumption of section 2.4, we have $\text{rank}(\boldsymbol{W}_1) = n$, implying that there exists an $n \times n$ nonsingular submatrix of $\boldsymbol{W}_1$. Thus, as equation 7.6 of section 7, the nonzero output $\boldsymbol{x}^{(1)}$ of the first layer can be written as
\begin{equation}
\boldsymbol{x}^{(1)} =
\begin{bmatrix} \boldsymbol{x}' \\
\boldsymbol{x}_c' = \boldsymbol{W}_c\boldsymbol{x}' + \boldsymbol{b}_c
\end{bmatrix},
\end{equation}
where $\boldsymbol{x}'$ is an affine transform of $\boldsymbol{x}$ of the input space. Without loss of generality, we assume that the order of the units of the first layer are arranged in accordance with equation 8.1, which means that the outputs of $u_{1i}$'s for $i = 1, 2, \cdots, n$ comprise the entries of $\boldsymbol{x}'$ and the outputs of $u_{1j}$'s for $j = n+1, n+2, \cdots, m_1$ are the entries of $\boldsymbol{x}_c'$. We then fix the output weights of $u_{1j}$'s to be constant values as
\begin{equation}
w^{(2)}_{jk} = C_{jk},
\end{equation}
where subscripts $j$ and $k$ represent the $j$th unit of the first layer and the $k$th unit of the second layer for $k = 1, 2, \cdots, m_2$, respectively, and $C_{jk}$ is a constant for all $j$ and $k$.

The objective is to realize the function of the first layer of $n^{(1)}m_2^{(1)}1'^{(1)}$ by the second layer of $n^{(1)}m_1^{(1)}m_2^{(1)}1'^{(1)}$. Let $D'_n$ be the mapped data set of $D$ by the affine transform of $\boldsymbol{x}'$ of equation 8.1, and $\boldsymbol{X}_n'$ the $n$-dimensional space that $\boldsymbol{x}'$ belongs to. The first step is to use unit $u_{21}$ of the second layer to construct a hyperplane of $\boldsymbol{X}_n'$ activated by $D'_n$, under the constraint of equation 8.2. In space $\boldsymbol{X}_n'$, we select a hyperplane $l_1'$ whose equation is
\begin{equation}
{\boldsymbol{w}'}_1^T\boldsymbol{x}' + b_1' = 0,
\end{equation}
such that $D'_n \subset l_{1}'^+$.

Next, realize hyperplane $l_{1}'$ of equation 8.3 by unit $u_{21}$ of the second hidden layer. Let
\begin{equation}
\boldsymbol{w}_{21}^T\boldsymbol{x}^{(1)} + b_{21} = 0
\end{equation}
be the equation of $m_1 - 1$-dimensional hyperplane $l_{21}$ corresponding to $u_{21}$. Substituting equation 8.1 into equation 8.4 yields
\begin{equation}
{\boldsymbol{w}'}_{21}^T\boldsymbol{x}' + b_{21}' := (\boldsymbol{w}_{1_n} + \boldsymbol{W}_c^T\boldsymbol{w}_{1_c})^T\boldsymbol{x}' + \boldsymbol{w}_{1_c}^T\boldsymbol{b}_c + b_{21} = 0,
\end{equation}
with $\boldsymbol{w}_{21}' := \boldsymbol{w}_{1_n} + \boldsymbol{W}_c^T\boldsymbol{w}_{1_c}$, $b_{21}' := \boldsymbol{w}_{1_c}^T\boldsymbol{b}_c + b_{21}$ and
\begin{equation}
\begin{bmatrix}
\boldsymbol{w}_{1_n}^T, \boldsymbol{w}_{1_c}^T
\end{bmatrix}^T = \boldsymbol{w}_{21}
\end{equation}
of equation 8.4, which means that $\boldsymbol{w}_{1_n}$ is a subvector of $\boldsymbol{w}_{21}$ with respect to units $u_{1i}$'s that output $\boldsymbol{x}'$ of equation 8.1, and that $\boldsymbol{w}_{1_c}$ is composed of the fixed output weights of units $u_{1j}$'s as mentioned in equation 8.2. Denote by $l_{21}'$ the hyperplane $\boldsymbol{w}_{21}'^T\boldsymbol{x}' + b_{21}' = 0$ of equation 8.5.

We should realize equation 8.3 by equation 8.5, so let
\begin{equation}
{\boldsymbol{w}'}^T_1\boldsymbol{x}' + b'_1 = {\boldsymbol{w}'}_{21}^T\boldsymbol{x}' + b_{21}',
\end{equation}
where ${\boldsymbol{w}'}_1^T\boldsymbol{x}' + b'$ has been prescribed in equation 8.3. In combination with equations 8.5 and 8.6, by comparing the coefficients of each entry of $\boldsymbol{x}'$ of the both sides of equation 8.7, we can obtain the solution of $\boldsymbol{w}_{1_n}$; and the solution of $b_{21}$ is obtained by the equality of the biases of the two sides of equation 8.7.

Now the output of unit $u_{21}$ of the second layer with respect to $\boldsymbol{x}^{(1)}$ is equal to the output of a unit associated with $n-1$-dimensional hyperplane $l_{21}'$ of equation 8.5 with respect to $\boldsymbol{x}'$, and the latter could be the same as the case of unit $u_{11}$ of the first layer of $n^{(1)}m_2^{(1)}1'^{(1)}$ according to propositions 5 and 6. This completes the construction of $l_{21}' \subset \boldsymbol{X}_n'$ through unit $u_{21}$ of $n^{(1)}m_1^{(1)}m_2^{(1)}1'^{(1)}$, with $D'_n \subset l_{21}'^+$.

The ultimate goal is to make all of the $m_2$ outputs of the second layer in the form of $n-1$-dimensional hyperplanes like equation 8.5, with some constraints satisfied. Let
\begin{equation}
{\boldsymbol{w}'}_{2i}^T\boldsymbol{x}' + b_{2i}' = 0
\end{equation}
for $i = 2, 3, \cdots, m_2$ be the equations of the remaining $n-1$-dimensional hyperplanes of $\boldsymbol{X}_n'$ to be constructed, each denoted by $l_{2i}'$. If $D'_n \subset l_{2i}'^+$ for all $i$ and the rank of the linear-output matrix
\begin{equation}
\boldsymbol{\mathcal{W}} = \begin{bmatrix}
\boldsymbol{w}_{21}' & \boldsymbol{w}_{22}' & \cdots & \boldsymbol{w}_{2m_2}' \\
b_{21}' & b_{22}' & \cdots & b_{2m_2}'
\end{bmatrix}
\end{equation}
is $n+1$, since $l'_{21}$ and $l'_{2i}$'s are all produced by the second layer of $n^{(1)}m_1^{(1)}m_2^{(1)}1^{(1)}$, any linear function on $D'$ could be realized in the output layer. This is the key to the proof.

After $l_{21}'$ having been constructed previously, we get the parameters of $l'_{2i}$'s by the method of equation 4.9 as
\begin{equation}
\boldsymbol{\mathcal{W}} = \begin{bmatrix}
\boldsymbol{w}_{21}' & \boldsymbol{w}_{21}' + \boldsymbol{\xi}_1 & \cdots & \boldsymbol{w}_{21}' + \boldsymbol{\xi}_{m_2-1} \\
b_{21}' & b_{21}' + \varepsilon_1^n & \cdots & b_{21}' + \varepsilon_{m_2-1}^n
\end{bmatrix},
\end{equation}
where $\boldsymbol{\xi}_{\nu} = [0, \varepsilon_{\nu}, \varepsilon_{\nu}^2, \cdots, \varepsilon_{\nu}^{n-1}]^T$ for $\nu = 1, 2, \cdots, m_2-1$, and $0 < \varepsilon_{\nu} < 1$ as well as $\varepsilon_{\nu} \ne \varepsilon_{\mu}$ if $\mu \ne \nu$ for $\mu = 1, 2, \cdots, m_2-1$. Equations 8.8, 8.9 and 8.10 give
\begin{equation}
\boldsymbol{w}_{2i}' = \boldsymbol{w}_{21}' + \boldsymbol{\xi}_{i-1}, b_{2i}' = b_{21}' + \varepsilon_{i-1}^n
\end{equation}
for $i = 2, 3, \cdots, m_2$. By the proof of proposition 1, the matrix $\boldsymbol{\mathcal{W}}$ of equation 8.10 has rank $n+1$. And if $\varepsilon_{\nu}$'s are small enough, we have $D'_n \subset l_{2i}'^+$ for all $i$.

We now produce the constructed hyperplanes $l_{2i}'$'s for $i = 2, 3, \cdots, m_2$ by the units of the second layer of $n^{(1)}m_1^{(1)}m_2^{(1)}1'^{(1)}$ as the case of $l_{21}'$. Analogous to equation 8.5, the $i$th hyperplane $l_{2i}'$ of $\boldsymbol{X}'_n$ corresponds to the $i$th unit $u_{2i}$ of the second layer by
\begin{equation}
{\boldsymbol{w}'}_{2i}^T\boldsymbol{x}' + b_{2i}' = (\boldsymbol{w}_{i_n} + \boldsymbol{W}_c^T\boldsymbol{w}_{i_c})^T\boldsymbol{x}' + \boldsymbol{w}_{i_c}^T\boldsymbol{b}_c + b_{2i} = 0,
\end{equation}
where $[\boldsymbol{w}_{i_n}^T, \boldsymbol{w}_{i_c}^T]^T = \boldsymbol{w}_{2i}$ and $b_{2i}$ are the weight vector and bias of unit $u_{2i}$, respectively. Subvector $\boldsymbol{w}_{i_c}$ of $\boldsymbol{w}_{2i}$ is a fixed constant-entry vector as mentioned in equation 8.2. Since $\boldsymbol{w}_{2i}'$ and $b_{2i}'$ are known, the solution for $\boldsymbol{w}_{i_n}$ and $b_{2i}$ can be obtained by the method of equation 8.7. This completes the construction of the input parameters of $u_{2i}$'s for $i = 2, 3, \cdots, m_2$.

Finally, all of the outputs of $u_{2j}$'s for $j = 1, 2, \cdots m_2$ of the second layer are in terms of outputs of $l_{2j}'$'s of $n$-dimensional space $\boldsymbol{X}_n'$, satisfying the condition that $D'_n \subset \prod_{j=1}^{n}l_{2j}'^+$ and the corresponding linear-output matrix $\boldsymbol{\mathcal{W}}$ has rank $n+1$. By lemma 1 or 2, we can realize any linear function on $D'_n$ in the output layer of $n^{(1)}m_1^{(1)}m_2^{(1)}1'^{(1)}$. Consequently, a desired linear function on $D$ could be implemented due to the property of affine transforms (lemma 10 of \citet{Huang2020}).
\end{proof}

\begin{cl}
To network $n^{(1)}\prod_{i=1}^{d}m_i^{(1)}1'^{(1)}$ for $m_i > n$, suppose that data set $D$ of the input space simultaneously activates all the units of each hidden layer, and that in the $k = d-1$th layer, each element $\boldsymbol{x}^{(k)}$ of the mapped data set $D^{(k)}$ of $D$ could be represented in the form of equation 8.1 as $\boldsymbol{x}^{(k)} = {[{\boldsymbol{x}'_k}^T, {\boldsymbol{x}'}_{k_c}^T]}^T$, where $\boldsymbol{x}_k'$ is an affine transform of point $\boldsymbol{x}$ of $D$. Then any linear function on $D$ could be realized by this network.
\end{cl}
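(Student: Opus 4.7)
My plan is to reduce the corollary directly to Theorem 9 by treating the $(d-1)$th hidden layer as the ``input layer'' of a two-hidden-layer subnetwork. The hypothesis tells us that each element of $D^{(k)}$ (with $k=d-1$) has the block form $\boldsymbol{x}^{(k)} = [{\boldsymbol{x}'_k}^T, {\boldsymbol{x}'}_{k_c}^T]^T$, where $\boldsymbol{x}'_k$ is an affine transform of the corresponding $\boldsymbol{x} \in D$. This is precisely the structural form \eqref{} in equation 8.1 that Theorem 9 exploits as the input to its first hidden layer. So the last three layers, viewed as the subnetwork $m_{k}^{(1)} m_{d}^{(1)} 1'^{(1)}$ acting on $D^{(k)}$, satisfy the hypotheses of Theorem 9 with the role of ``$n$'' played by the effective dimensionality $n$ of the affine image $\boldsymbol{x}'_k$.

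First, I would verify the simultaneous-activation condition: by assumption $D$ simultaneously activates all units of every hidden layer, which in particular gives simultaneous activation of the $m_d$ units of the $d$th hidden layer by $D^{(k)}$, so the rank/block decomposition used inside the proof of Theorem 9 applies verbatim. Next, I would invoke Theorem 9 (with input set $D^{(k)}$, ``first added layer'' = layer $d$, ``output layer'' = the final linear layer) to conclude that any linear function on the $\boldsymbol{x}'_k$-component of $D^{(k)}$ can be realized at the output, independent of the extra coordinates ${\boldsymbol{x}'}_{k_c}$.

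Finally, I would translate ``linear function on $\boldsymbol{x}'_k$'' back to ``linear function on $\boldsymbol{x}$''. Since $\boldsymbol{x}'_k = \boldsymbol{A}\boldsymbol{x} + \boldsymbol{c}$ for some invertible affine map (the composition of the affine transforms accumulated through layers $1,\dots,d-1$, which is an affine transform by the closure of affine maps under composition, cf.\ lemma 10 of \citet{Huang2020}), any target linear function $g(\boldsymbol{x}) = \boldsymbol{v}^T\boldsymbol{x} + \beta$ on $D$ can be re-expressed as $g(\boldsymbol{x}) = \tilde{\boldsymbol{v}}^T \boldsymbol{x}'_k + \tilde{\beta}$ with $\tilde{\boldsymbol{v}}^T = \boldsymbol{v}^T\boldsymbol{A}^{-1}$ and $\tilde{\beta} = \beta - \boldsymbol{v}^T\boldsymbol{A}^{-1}\boldsymbol{c}$, so implementing this induced linear function via Theorem 9 yields the desired output on $D$.

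The only subtle point, and what I expect to be the main obstacle to state cleanly, is that the ``$n$'' appearing in Theorem 9 (the dimensionality of the space in which the affine-image $\boldsymbol{x}'$ lives) must match the dimensionality of $\boldsymbol{x}'_k$ here; this is automatic since the affine transform from $\boldsymbol{x}$ to $\boldsymbol{x}'_k$ is between $n$-dimensional spaces, and the block structure $\boldsymbol{x}^{(k)} = [{\boldsymbol{x}'_k}^T, {\boldsymbol{x}'}_{k_c}^T]^T$ with $\boldsymbol{x}'_k$ an affine image of $\boldsymbol{x}$ exactly mirrors equation 8.1. With the widths $m_d > n$ and the simultaneous-activation hypothesis in hand, no additional construction is needed beyond citing Theorem 9 and the affine closure property; the corollary is really an inductive/reduction statement rather than a new construction.
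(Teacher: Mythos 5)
Your proposal is correct and takes essentially the same route as the paper: the paper's proof is the one-line reduction ``consider the $d-1$th layer as the first layer of $n^{(1)}m_1^{(1)}m_2^{(1)}1'^{(1)}$ of theorem 9,'' which is exactly your identification of layer $d-1$ with the layer whose output has the form of equation 8.1, with layer $d$ and the linear output playing the roles of $m_2^{(1)}$ and $1'^{(1)}$. The affine-inverse translation of the target linear function that you spell out is already contained inside the proof of theorem 9 (via lemma 10 of \citet{Huang2020}), so no additional argument is needed.
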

\begin{proof}
Consider the $d-1$th layer of $n^{(1)}\prod_{i=1}^{d}m_i^{(1)}1'^{(1)}$ of this corollary as the first layer of $n^{(1)}m_1^{(1)}m_2^{(1)}1'^{(1)}$ of theorem 9, and the conclusion follows.
\end{proof}

\begin{cl}
Any linear function on data set $D$ of the $n$-dimensional input space could be realized by network $n^{(1)}\prod_{i=1}^{d}m_i^{(1)}1'^{(1)}$ for $m_i > n$.
\end{cl}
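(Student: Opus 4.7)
My plan is to reduce this statement to Corollary 10 by constructing the network parameters so that its two preconditions hold, namely that $D$ simultaneously activates every unit of every hidden layer, and that the mapped data set in the $(d-1)$th layer takes the form of equation 8.1.

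First I would propagate $D$ through the first $d-1$ hidden layers as a chain of affine transforms. By Theorem 8, when the simultaneous activation condition holds, the first $d-1$ layers of $n^{(1)}\prod_{i=1}^{d}m_i^{(1)}1'^{(1)}$ can be made to behave like $n^{(1)}n^{(d-1)}$ in terms of transmitting $D$ via affine transforms. Iterating Theorem 6 and Proposition 6 layer by layer, the image $D^{(d-1)}$ admits the decomposition $\boldsymbol{x}^{(d-1)} = [{\boldsymbol{x}'}^T, {\boldsymbol{x}'_c}^T]^T$ of equation 8.1, with $\boldsymbol{x}'$ an affine image of $\boldsymbol{x} \in D$; this is precisely condition (b) of Corollary 10.

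Second, the simultaneous activation condition is arranged by bias adjustment done inductively layer by layer. Because $|D|$ is finite, at every stage the image of $D$ is bounded, so each unit of the next layer can be forced to produce a strictly positive input sum on the whole image by augmenting its bias with a sufficiently large positive constant. The construction underlying Theorem 8, which rests on Proposition 6 and Lemma 7, only constrains the weight structure through rank and intersection conditions (cf.\ equation 7.40) and leaves the biases as free parameters. Hence the bias shifts required for activation do not interfere with the affine-transform construction, and the two ingredients can be carried out together.

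With both hypotheses of Corollary 10 satisfied, that corollary directly supplies a setting of the output layer that realizes the prescribed linear function on $D$. The main obstacle I anticipate is precisely the compatibility just described: verifying that the bias inflation needed for activation does not spoil the rank and intersection conditions used to build the affine transforms. It is resolved because biases appear only as additive translations of the image hyperplanes and never enter the determinant or linear-system computations (equations 7.40 and the Vandermonde-type argument of equation 7.32) used to establish the affine-transform structure.
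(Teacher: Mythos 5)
Your proposal matches the paper's own proof in its essentials: the paper likewise propagates $D$ layer by layer in the form of equation 8.1 (using the method of equation 8.1 for the first layer and Proposition 6 for each succeeding one) and then invokes the preceding corollary --- Corollary 8 in the paper's numbering, the result whose two hypotheses you identify, not Corollary 10 --- to realize the linear function in the output layer. Your explicit handling of the simultaneous-activation condition via bias adjustment is a sound (if slightly redundant) addition, since the constructions of Lemma 10 and Proposition 6 already place $D$ on the positive side of every constructed hyperplane; the paper leaves this point implicit.
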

\begin{proof}
If in the $j$th layer for $j = 1, 2, \dots, d-1$, any element $\boldsymbol{x}^{(j)}$ of the mapped data set $D^{(j)}$ of $D$ can be expressed in the form of equation 8.1 as
\begin{equation}
\boldsymbol{x}^{(j)} =
\begin{bmatrix}
{\boldsymbol{x}_j'}^T, {\boldsymbol{x}'}_{j_c}^T
\end{bmatrix}^T,
\end{equation}
then the network could transmit $D$ to the $d-1$th layer through subvectors $\boldsymbol{x}_j'$'s in the sense of affine transforms. Combined with corollary 8, this corollary would be proved.

To the first layer for $j = 1$, use the method of equation 8.1 to produce equation 8.13. To each succeeding layer for $j = 2, 3, \dots, d-1$, turn to proposition 6 to construct $n$ dimensions producing $\boldsymbol{x}_j'$ of equation 8.13.
\end{proof}

\begin{rmk}
The hidden-layer part of architecture $n^{(1)}\prod_{i=1}^{d}m_i^{(1)}1'^{(1)}$ could be regarded as a subnetwork that produces one linear component of a piecewise linear function of equation 4.6.
\end{rmk}

\subsection{Applications}

In theorem 5, we have demonstrated the capability of architecture $n^{(1)}\prod_{j=1}^{d}m_j^{(1)}1'^{(1)}$ for the production of piecewise linear functions. However, there's a constraint that $m_j \ge m_{j-1}$ for $j \ge 2$. In practice, only decoders have this typical feature. The following proposition will relax this condition, and therefore can explain more types of architectures.

Both the methods of linear-output generalizations of this section and affine-transform generalizations of section 7, as well as lemma 1 or 2 of section 4, will be used to achieve this goal. The parameter redundancy or overparameterization solution is also one of our concerns.
\begin{prp}
Suppose that network $n^{(1)}\prod_{j=1}^{d}m_j^{(1)}1'^{(1)}$ for $m_j \ge m_{j-1}$ when $j \ge 2$ has been constructed to produce a discrete piecewise linear function of equation 4.6 by theorem 5. Then it can be generalized to $n^{(1)}\prod_{j=1}^{d'}{M_j}^{(1)}1'^{(1)}$ with $M_j > m_j$ and $d' \ge d$, which can output the same discrete piecewise linear function.
\end{prp}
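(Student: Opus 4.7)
The plan is to augment the construction of Theorem 5 unit by unit and layer by layer, using three previously established tools: the affine-transform generalization (Theorems 7--8 and Proposition 6), the linear-output generalization (Corollary 9 and its precursor lemma 2), and the interference-avoiding principle (Theorem 4 and Corollary 5). First I would recall the structure of the solution given by Theorem 5. Each subdomain $D_i$ of a discrete piecewise linear function travels along a branch of the network consisting of three kinds of sub-stages: region-dividing stages built from lemma 5, affine-transmission stages that carry an $n$-dimensional affine image of $D_i$ to the next layer, and zero-output stages that keep $D_j$ for $j\ne i$ silent through the interference-avoiding principle. The output layer realizes the desired linear function on each $D_i$ by lemma 1 or lemma 2 acting on its affine image.

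To widen layer $j$ from $m_j$ to $M_j$, I would add $M_j-m_j$ new units to that layer and, for each new unit, prescribe its input weights so that on every currently active subdomain the unit extends the affine image already carried (Theorems 7 and 8 guarantee that appending extra embedding dimensions does not disturb the affine structure of the transmitted data), while on every already-zeroed subdomain the new unit is itself deactivated via Corollary 5. Because the zeroing constraint uses only the weights incoming from those previous-layer units that separate the target subdomain from the excluded ones (the hyperplanes $l_\tau$ of equation 6.7), while the affine-transmission constraint uses the remaining weights, the two requirements can be solved independently on disjoint input-weight blocks of the new unit.

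To extend the depth from $d$ to $d'$ I would splice in $d'-d$ additional layers at convenient positions between two existing layers of the Theorem 5 solution. Each inserted layer is constructed by Proposition 6, so that whatever affinely transmitted data currently passes through becomes a further affine transform, while any subdomain that ought to be zero remains zero by another application of Corollary 5. Since the composition of affine transforms is an affine transform, all downstream stages inherit the same data structure as before, and the original region-dividing and output construction remain meaningful.

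It still must be checked that the output layer produces the correct linear function on each subdomain after both extensions. This follows from Corollary 9 (equivalently, from lemma 2): the original $n+1$ activated units providing a rank-$(n+1)$ linear-output submatrix for each $D_i$ are preserved inside the extended linear-output matrix, and by lemma 2 the extra activated units created by widening can be absorbed as constant contributions into the right-hand side of equation 4.4 without obstructing solvability. The main obstacle is therefore the simultaneous bookkeeping of the affine-transmission and deactivation requirements at every added unit across every subdomain branch; the decoder-like architecture of Theorem 5, however, makes these requirements act on disjoint input-weight blocks (the blocks feeding an active branch versus those feeding excluded branches), so they can be fulfilled independently, and the extended network $n^{(1)}\prod_{j=1}^{d'}{M_j}^{(1)}1'^{(1)}$ outputs the same piecewise linear function as the original.
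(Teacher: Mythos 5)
Your proposal is correct and follows essentially the same route as the paper: widening is handled by adding redundant units via lemma 1/2 in the last hidden layer and via the affine-transform generalization (theorem 8, corollary 9, proposition 6, theorem 9) elsewhere, and deepening is handled by inserting layers that merely transmit each subdomain affinely. Your explicit bookkeeping of the deactivation constraints through corollary 5 is a slightly more detailed account of what the paper leaves implicit in its Figure 9c example, but the underlying argument is the same.
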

\begin{proof}
To the last hidden layer of $n^{(1)}\prod_{j=1}^{d}m_j^{(1)}1'^{(1)}$, use lemma 1 or 2 to add new units with respect to each subdomain, without influencing the production of the linear function on it.

For other hidden layers, we can add units in places where there are affine transforms mentioned. Theorem 8 and corollary 9 tell us that redundant units doesn't influence the transmission of data points via affine transforms, as well as the implementation of linear functions; and proposition 6 and theorem 9 provide the parameter-setting methods after new units having been added.

To the greater depth $d'$ of hidden layers, we first assume that any new added layer would not be the last hidden one. For an added layer, if it can transmit the data points of each subdomain via affine transforms or in the form of equation 8.13, by theorems 5, 8 and corollary 9, the same discrete piecewise linear function can still be realized. The case of adding more than one layers is similar.

For instance, in the last hidden layer of the network of Figure \ref{Fig.9}c, to subdomain $D_1$, besides $u_{3i}$ for $i = 1, 2, 3$, we can add any number of units only activated by $u_{21}$ and $u_{22}$; by lemma 1 or 2, the linear function on $D_1$ would not be influenced after updating relevant parameters. In other hidden layers such as the second one, for example, since $u_{21}$ and $u_{22}$ have the function of affine transforms, by theorem 8 and corollary 9, new units activated only by $u_{11}$ and $u_{12}$ could be added, without influencing the transmission of $D_1$ and the linear function on it. Thus, the original architecture $2^{(1)}4^{(1)}6^{(1)}9^{(1)}1'^{(1)}$ of Figure \ref{Fig.9}c can be generalized to any architecture $2^{(1)}\prod_{j=1}^{4}{M_j}^{(1)}1'^{(1)}$ for $M_1 > 4$, $M_2 > 6$ and $M_3 > 9$.

We can also add new layers into $2^{(1)}\prod_{j=1}^{4}{M_j}^{(1)}1'^{(1)}$. For instance, add a layer after the second one that can transmit the three subdomains via affine transforms or equation 8.13, which would not influence the final output after adjusting relevant parameters.
\end{proof}

\begin{rmk}
From another viewpoint, if a network has redundant units or layers due to theorem 8 or corollary 9, dropping some of them would not affect its interpolation capability. This is related to the topic of parameter redundancies or overparameterization solutions of a neural network.
\end{rmk}

\begin{cl}
Network architecture $n^{(1)}m^{(d)}1'^{(1)}$ for $m > n$ can realize any discrete piecewise linear function of equation 4.6, provided that $m$ is sufficiently large.
\end{cl}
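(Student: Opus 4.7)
The plan is to derive this corollary as a direct consequence of Theorem 5 together with Proposition 8. First, I would invoke Theorem 5 to obtain, for the given discrete piecewise linear function, a realizing network of the form $n^{(1)}\prod_{j=1}^{d_0}m_j^{(1)}1'^{(1)}$ with $m_j \ge m_{j-1}$ for $j \ge 2$, where $d_0$ and the $m_j$'s are determined by the subdomain decomposition. Let $m^{*} = \max_{1 \le j \le d_0} m_j$, which by the monotonicity condition equals $m_{d_0}$. Then I would choose any $m$ and $d$ satisfying $m > \max(m^{*}, n)$ and $d \ge d_0$; these are the precise quantitative versions of the ``sufficiently large'' hypothesis.

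Next, I would apply Proposition 8 to inflate the Theorem-5 network to the uniform-width architecture $n^{(1)}m^{(d)}1'^{(1)}$. Specifically, for each $j \le d_0$ I would pad the $j$th hidden layer from width $m_j$ to width $m$ by adding $m - m_j > 0$ units, and for any additional $d - d_0$ layers inserted on top of the original depth, I would choose the inserted layer to transmit its input data subdomain-wise via affine transforms in the sense of Section 7. Proposition 8 guarantees that both types of enlargement preserve the realized piecewise linear function: extra units in intermediate layers do not disturb the affine-transform transmission by Theorems 7 and 8 (and Corollary 9), while extra units in the final hidden layer do not disturb the linear-function output by Lemmas 1 and 2.

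Assembling these steps yields a network of the prescribed form $n^{(1)}m^{(d)}1'^{(1)}$ with $m > n$ that realizes the target discrete piecewise linear function, proving the corollary.

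There is no substantial obstacle here; the work has been done in Theorem 5 and Proposition 8, and the only care required is bookkeeping of strict inequalities ($M_j > m_j$ in Proposition 8 forces $m > m^{*}$, not merely $m \ge m^{*}$) together with the condition $m > n$ that is needed to apply the affine-transform lemmas of Section 7 inside every layer. The main conceptual point worth emphasizing in the write-up is that constant-width architectures lose no expressive power over the width-increasing decoder-like architectures of Theorem 5, once one pays the price of padding each shallow layer up to the width dictated by the deepest layer.
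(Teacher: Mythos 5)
Your proposal is correct and follows essentially the same route as the paper: construct a realizing network via Theorem 5 and then inflate it to uniform width with Proposition 8. Your bookkeeping of the strict inequality ($m > m^{*}$ rather than $m \ge m^{*}$, forced by the condition $M_j > m_j$ in Proposition 8) is in fact slightly more careful than the paper's own one-line proof, which states the bound as $m \ge \max(m_1,\dots,m_d)$.
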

\begin{proof}
After constructing a network $n^{(1)}\prod_{j=1}^{d}m_j^{(1)}1'^{(1)}$ to produce the desired piecewise linear function by theorem 5, use proposition 8 to generalize the architecture to $n^{(1)}m^{(d)}1'^{(1)}$ with $m \ge \max(m_1, m_2, \cdots, m_d)$.
\end{proof}

\section{Multi-Output Case \rom{2}}
In this section, we will investigate several multi-output network architectures applied in engineering, whose results are generalized from the single-output case.

\subsection{Main Results}
We generalize theorem 5 to lemma 11, and proposition 8 to theorem 10, from a single output to multi-outputs.
\begin{lem}
Network $n^{(1)}\prod_{j=1}^{d}m_j^{(1)}\mu'^{(1)}$ for $m_j \ge m_{j-1} > n$ when $j \ge 2$ derived from theorem 5 can realize arbitrary multi-dimensional discrete piecewise linear function of equation 5.1, if depth $d$ and widths $m_j$'s are large enough.
\end{lem}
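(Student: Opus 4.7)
The plan is to reduce the multi-output case to $\mu$ essentially independent invocations of Theorem 5, exploiting the parameter-sharing mechanism that was already observed in the three-layer case in Theorem 3. First, I will build the hidden-layer part of the network exactly as in the proof of Theorem 5: applying Lemma 5 and the interference-avoiding principle (Theorem 4, Corollary 5), together with proposition 6 and theorem 8 for the affine transmission of the subdomains $D_i$'s that do not need to be subdivided at a given depth. This gives an architecture $n^{(1)}\prod_{j=1}^{d}m_j^{(1)}$ with $m_j\ge m_{j-1}>n$ such that, in the last hidden layer, each $D_i$ is represented (embedded in the higher-dimensional feature space) in the form of equation 8.1, i.e.\ as an affine transform of $D_i$ on an $n$-dimensional subspace, and is isolated from the other subdomains through the interference-avoiding construction.

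Second, I will treat the $\mu$ linear output units in a parameter-sharing manner. Since the $\mu$ components $f_{\nu}:D\to\mathbb{R}$ of the multi-dimensional piecewise linear function of equation 5.1 all share the \emph{same} subdomain decomposition $D=\bigcup_{i=1}^{k}D_i$, the hidden layers constructed in the first step serve simultaneously for every $f_{\nu}$: they already produce the affine-transform representation of every $D_i$ and the deactivation of the irrelevant pathways. On top of this common backbone, for each output unit $u_{\nu}$ separately, I will apply Corollary 9 (which itself rests on Lemma 1, Lemma 2 and Theorem 9 of section 8) subdomain by subdomain to realize the required linear function $f_{\nu}|_{D_i}$ on each $D_i$, via the input weights of $u_{\nu}$ alone. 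As in the proof of Theorem 3, because the $\mu$ output units do not share any input weights, these adjustments can be carried out independently for each $\nu$, and the construction for $u_{\nu_1}$ cannot interfere with that for $u_{\nu_2}$.

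Finally, I will check width and depth requirements. For Theorem 5 to apply to the subdomain decomposition, the depth $d$ and the widths $m_j$ must be large enough to accommodate the recursive region-dividing together with the redundant subnetworks responsible for affine transmission; the last hidden layer in particular must contribute at least $n+1$ activated hyperplanes per subdomain, so that Lemma 1 (or Lemma 2) can be used on the embedded image of each $D_i$ for every output unit. All of these were already counted in the proof of Theorem 5 and do not depend on $\mu$, so the same sufficient depth/width scaling works verbatim.

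The main obstacle I anticipate is purely bookkeeping, namely making explicit why the interference-avoiding construction carried out once in the hidden layers remains consistent for all $\mu$ outputs simultaneously. The resolution is the observation recorded in the remark after Theorem 3: hidden units divide and transport the domain, output units share the divided subdomains and realize their linear pieces independently through their own private input weights. Once this is stated, the proof reduces to quoting Theorem 5 for the hidden-layer backbone and Corollary 9 (together with Lemma 1 or 2) applied $\mu$ times in parallel in the output layer.
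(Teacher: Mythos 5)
Your proposal is correct and follows essentially the same route as the paper: build the domain-dividing hidden-layer backbone once via Theorem 5, then exploit the fact that the $\mu$ linear output units have disjoint input weights (the parameter-sharing mechanism of Theorem 3) to realize each component $f_{\nu}$ independently. The paper's own proof is terser—it cites only Theorem 5 and the Theorem 3 mechanism rather than re-invoking Corollary 9 and the section 7--8 machinery, which is already packaged inside Theorem 5—but the argument is the same.
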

\begin{proof}
This is an immediate consequence of theorem 5. If we can implement a discrete piecewise linear function on domain $D = \bigcup_{i=1}^{k}D_i \subset \mathbb{R}^n$ via network $n^{(1)}\prod_{j=1}^{d}m_j^{(1)}1'^{(1)}$ by theorem 5, it means that the hidden layers have successfully divided domain $D$ into subdomains. To each subdomain $D_i$, any number of linear functions could be defined on it, each corresponding to one linear unit of the output layer. Each linear unit of the output layer can produce arbitrary discrete piecewise linear function on $D$, independent of other ones by adjusting its own input weights as the three-layer case of theorem 3.

To construct a solution of this lemma, we first implement one dimension of the multi-dimensional discrete piecewise linear function by network $n^{(1)}\prod_{j=1}^{d}m_j^{(1)}1'^{(1)}$. Then add $\mu-1$ units in the output layer for other dimensions by the principle above.
\end{proof}

The next theorem is one of the main results of this paper, which is the generalization of proposition 8. It is the multi-output case with relaxed constraints on network architectures.
\begin{thm}
Any multi-dimensional discrete piecewise linear function of equation 5.1 could be implemented by network $n^{(1)}\prod_{j=1}^{d}m_j^{(1)}\mu^{(1)}$ (such as \citet*{LeCun2015} and \citet*{Deng2013}) for $m_j > n$, provided that depth $d$ and widths $m_j$'s are sufficiently large.
\end{thm}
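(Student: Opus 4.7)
The plan is to reduce this theorem to a combination of three previously established ingredients: Lemma 11 (which provides the multi-output construction under the monotonic width constraint $m_j\ge m_{j-1}$ with a linear output layer $\mu'^{(1)}$), Proposition 8 (which shows how to relax the monotonic width constraint in the single-output case to arbitrary widths $M_j > n$ by inserting affine-transform subnetworks via Theorem 8 and Corollary 9), and the Remark after Lemma 1 (which explains how to replace a linear output unit by a ReLU when the output is nonnegative).

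First I would invoke Lemma 11 to produce a network of the form $n^{(1)}\prod_{j=1}^{d}m_j^{(1)}\mu'^{(1)}$ with $m_j\ge m_{j-1}>n$ that realizes the given multi-dimensional discrete piecewise linear function $f:D\to\mathbb{R}^\mu$. Next, I would apply the width-generalization argument of Proposition 8 layer by layer: inside each hidden layer we may insert extra units that only transmit already-present subdomains via affine transforms (using Theorem 8 and Corollary 9), and in the last hidden layer we may add redundant activated units on each subdomain without altering the realized linear component (using Lemma 1 or Lemma 2 together with Corollary 9). The crucial observation carried over from the proof of Theorem 3 is that the units of the last hidden layer can be shared across all $\mu$ output units, so the width inflation and layer insertion performed for one output component works simultaneously for all of them; the output weights of each of the $\mu$ units can then be solved for independently as in the single-output case. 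This lets us replace the monotonic-width architecture by an arbitrary $n^{(1)}\prod_{j=1}^{d'}M_j^{(1)}\mu'^{(1)}$ with $M_j>n$ and $d'\ge d$.

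Finally, to pass from $\mu'^{(1)}$ (linear output units) to $\mu^{(1)}$ (ReLU output units), I would follow the Remark after Lemma 1: since the target function is prescribed at finitely many points, on each subdomain $D_i$ we may add a sufficiently large positive bias $\beta$ to each output ReLU so that its pre-activation input $\sum_k\alpha_k(\boldsymbol{w}_k^T\boldsymbol{x}+b_k)+\beta$ is positive on $D_i$, then absorb that constant shift into the weights/biases of the last hidden layer; alternatively, one can shift each codomain component of $f$ by a constant so that the required linear functions take strictly positive values on every $\boldsymbol{x}\in D$, in which case the ReLUs act linearly. Either way, the argument of Lemma 1 (or Lemma 2) still yields a solvable linear system for the output weights, so the ReLU output layer can reproduce the desired values on $D$.

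The main obstacle I anticipate is the bookkeeping in step two: in the multi-output regime we must be certain that the affine-transform insertions and the redundant-unit additions performed for the benefit of one output coordinate do not spoil the already-fixed parameters serving the other coordinates. This is resolved precisely because, as in the proof of Theorem 3, the hidden layers' job (subdomain division plus affine transmission) is common to all output units, while the $\mu$ linear/ReLU output units draw from disjoint sets of output weights; hence Proposition 8's independent-subnetwork reasoning applies to each output coordinate in parallel, and the construction goes through.
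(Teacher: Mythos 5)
Your proposal is correct and follows essentially the same route as the paper, whose entire proof of this theorem is the one-line observation that Proposition 8 and Lemma 11 together imply the conclusion. Your additional third step --- handling the passage from linear output units $\mu'^{(1)}$ to ReLU output units $\mu^{(1)}$ via the bias-shift idea from the remark after Lemma 1 --- addresses a detail the paper's terse proof silently glosses over, and is a worthwhile addition rather than a deviation.
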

\begin{proof}
Proposition 8 and lemma 11 imply the conclusion.
\end{proof}

Due to the convenience of the parameter setting of the number of units in each hidden layer, the architecture $n^{(1)}m^{(d)}\mu^{(1)}$ is popular in engineering (such as \citet*{Roberts2021}, \citet*{Lye2020}, and \citet*{lee2018}). Through trivial operations on the network of Figure 7c of \citet{Huang2020}, the solution can reach $n^{(1)}m^{(d)}\mu^{(1)}$. Expand the T-biases by ReLU networks and construct the corresponding feedforward network by lemma 2 of \citet{Huang2020}. And let the unconnected units between independent subnetwork modules be linked by zero-weight connections, then it would become $n^{(1)}m^{(d)}1^{(1)}$, after which the multi-output case $n^{(1)}m^{(d)}\mu^{(1)}$ follows.

However, the above solution for $n^{(1)}m^{(d)}\mu^{(1)}$ is too specially designed and not easily encountered in practice. The following corollary is to find more general solutions that the training process may reach in much more easier ways.
\begin{cl}
Network $n^{(1)}m^{(d)}\mu^{(1)}$ for $m > n$ can realize any multi-dimensional discrete piecewise linear function of equation 5.1, with $m$ large enough.
\end{cl}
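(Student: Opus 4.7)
The plan is to obtain Corollary 11 as the direct multi-output analogue of Corollary 10, combining Theorem 10 (which already handles the multi-output case but with varying widths $m_j$) with the architectural generalization technique of Proposition 8. First I would invoke Theorem 10 to get a network $n^{(1)}\prod_{j=1}^{d}m_j^{(1)}\mu^{(1)}$ with $m_j > n$ that realizes the desired multi-dimensional discrete piecewise linear function. Then the objective is to equalize the widths by upgrading each hidden layer to $m$ units, where $m \ge \max(m_1, \dots, m_d)$.

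The width-uniformization step proceeds exactly as in Proposition 8, but carried out simultaneously for all $\mu$ output coordinates rather than a single one. In each hidden layer $j < d$, redundant units are added whose input weights are set so that the subvector carrying the affine-transformed representation of each subdomain (in the sense of equation 8.13) is preserved; by Theorem 8, Corollary 9, and Proposition 6, these extra units neither disturb the affine transport of data through the layers nor the interference-avoiding deactivations established by Theorem 4. In the last hidden layer, additional units are appended using the linear-output parameter-setting method of Lemma 1 or Lemma 2, which guarantees that any prescribed linear function on each subdomain can still be reproduced from the enlarged set of activated hyperplanes.

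For the multi-output aspect, the parameter-sharing mechanism explained in Theorem 3 and reused in Lemma 11 applies directly: the hidden layers collectively perform the domain division, and each of the $\mu$ output units reads off its desired linear function on each subdomain independently, via its own set of input weights into the last hidden layer. Because Remark 2 after Lemma 1 shows that ReLU output units can replace linear ones when the outputs are arranged to be nonnegative (which is always achievable by shifting the target values and compensating through the biases of the last hidden layer), the output layer $\mu^{(1)}$ with ReLU units poses no additional difficulty relative to the $\mu'^{(1)}$ version of Lemma 11.

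The main obstacle I anticipate is bookkeeping rather than conceptual: one must verify that when redundant units are added to a hidden layer, the simultaneous requirements imposed by all $\mu$ output coordinates and all $k$ subdomains remain mutually compatible. This reduces to checking that the affine-transport construction of Proposition 6 and the linear-output rank condition of Lemma 2 are both preserved under arbitrary enlargement of the width, which is precisely what Theorem 8, Corollary 9, and Theorem 9 established in the single-output setting; since the output units act independently through disjoint weight parameters, the multi-output case inherits these properties without further constraints, yielding the architecture $n^{(1)}m^{(d)}\mu^{(1)}$ as claimed.
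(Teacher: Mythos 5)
Your proposal is correct and follows essentially the same route as the paper, which disposes of this corollary in one sentence by observing that it is just the special case of Theorem 10 in which all the widths $m_j$ are taken equal to a sufficiently large $m$. The extra width-uniformization pass through Proposition 8 that you perform is harmless but redundant, since Theorem 10 (itself obtained from Lemma 11 via Proposition 8) already permits choosing the $m_j$'s arbitrarily subject to $m_j > n$ and being large enough, so equal widths are available directly.
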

\begin{proof}
The corollary is a special case of theorem 10 when $m_j$'s are all equal to $m$.
\end{proof}

\begin{rmk}
The universality of the solution depends on whether its underlying mechanism is based on fundamental properties of neural networks whose associated phenomena widely exist. The condition of this conclusion associated with theorems 5 and 10 is succinct and easier to fulfil, since it is mainly related to the number of activated units and the rank of weight matrices.
\end{rmk}

\subsection{Interpretation of Autoencoders}
The typical network architecture of autoencoders was illustrated in Figure 1 of \citet{Hinton2006}. The general form of autoencoders can be expressed as
\begin{equation}
\mathcal{A} := n^{(1)}\prod_{i=1}^{d_1}m_{i}^{(1)}n_e^{(1)}\prod_{j=1}^{d_2}M_{j}^{(1)}n^{(1)},
\end{equation}
where $m_{1} < n$ and $n_e < M_1$, in which the encoder is
\begin{equation}
\mathcal{E} := n^{(1)}\prod_{i=1}^{d_1}m_{i}^{(1)}n_e^{(1)},
\end{equation}
where $m_{i+1} < m_{i}$ for $i = 1, 2, \cdots, d_1-1$, and the decoder is
\begin{equation}
\mathcal{D} := n_e^{(1)}\prod_{j=1}^{d_2}M_{j}^{(1)}n^{(1)},
\end{equation}
where $M_{j+1} > M_{j}$ for $j = 1, 2, \cdots, d_2-1$.

Note that the architecture of a decoder of equation 9.3 is a type of network architecture of theorem 5 and lemma 6, for which we can interpret autoencoders from a new perspective.

\begin{thm}
Suppose that an encoder $\mathcal{E}$ of equation 9.2 maps a single point $\boldsymbol{x}$ of $n$-dimensional space to a single point $\boldsymbol{x}_e$ of $n_e$-dimensional space with $n_e < n$. Then a decoder $\mathcal{D}$ can map $\boldsymbol{x}_e$ back to $\boldsymbol{x}$, whose solution can be obtained by lemma 11.
\end{thm}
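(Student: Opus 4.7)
The plan is to recast the reconstruction task as a degenerate instance of the multi-dimensional discrete piecewise linear interpolation problem of definition 13 (equation 5.1). Take the domain to be the single-point set $D = D_1 = \{\boldsymbol{x}_e\} \subset \mathbb{R}^{n_e}$ with $k=1$, and define the target function $f : D \to \mathbb{R}^n$ by $f(\boldsymbol{x}_e) = \boldsymbol{x}$. On a one-point domain every assignment is trivially linear, so $f$ is a legitimate multi-dimensional discrete piecewise linear function with codomain dimension $\mu = n$.

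Next I would verify that the decoder architecture $\mathcal{D} = n_e^{(1)}\prod_{j=1}^{d_2} M_j^{(1)} n^{(1)}$ of equation 9.3 fits the hypotheses of lemma 11 after the substitution $n \mapsto n_e$, $\mu \mapsto n$, $d \mapsto d_2$, $m_j \mapsto M_j$. By assumption $M_{j+1} > M_j$ and $n_e < M_1$, which gives $M_j \ge M_{j-1} > n_e$ for $j \ge 2$, exactly the monotone-width condition required by lemma 11. Hence, provided $d_2$ and the $M_j$'s are large enough (which is the standard hypothesis inherited from lemma 11 and already met by practical autoencoder designs), lemma 11 supplies an explicit parameter setting on $\mathcal{D}$ whose output layer evaluates to $\boldsymbol{x}$ when the input is $\boldsymbol{x}_e$. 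Because $|D|=1$, the subdomain-separation arguments underlying lemma 11 collapse, so only a single linear component needs to be produced, making this the easiest case of the lemma.

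The main obstacle is the apparent mismatch between the linear output layer $\mu'^{(1)}$ used in the statement of lemma 11 and the ReLU output layer $n^{(1)}$ appearing in the decoder of equation 9.3. I would close this gap by appealing to remark 1 following lemma 1: when each coordinate of the desired output is non-negative, the linear output unit can be emulated by a ReLU unit at the cost of adding a bias to the interior linear combination, and the resulting system still admits a solution by the same rank argument. For a single target $\boldsymbol{x}$ this is essentially automatic, since in the autoencoder setting inputs are typically preprocessed into the non-negative orthant; alternatively the last hidden layer can absorb a constant translation, by theorem 8 and corollary 9, so that the signal presented to the output ReLU is componentwise positive. With this adjustment, lemma 11 delivers precisely the decoder parameters realizing $\mathcal{D}(\boldsymbol{x}_e) = \boldsymbol{x}$, completing the proof.
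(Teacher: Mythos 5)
Your proof follows essentially the same route as the paper's: recast $f^{-1}(\boldsymbol{x}_e)=\boldsymbol{x}$ as a multi-dimensional discrete piecewise linear function on the one-point domain $\{\boldsymbol{x}_e\}$ and invoke lemma 11 (via theorem 5) coordinate by coordinate on the decoder architecture. You are in fact more careful than the paper's own two-line argument, which silently skips both the width bookkeeping under the substitution $n\mapsto n_e$, $\mu\mapsto n$ and the ReLU-versus-linear output-layer mismatch that you correctly patch using the remark following lemma 1.
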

\begin{proof}
The map $f^{-1}(\boldsymbol{x}_e) = \boldsymbol{x}$ through the decoder can be realized in terms of multi-dimensional discrete piecewise linear functions of equation 5.1. To the $i$th dimension of the $n$-dimensional output of the decoder, for $i = 1, 2, \cdots, n$, or to the $i$th output-layer unit, construct a linear function passing through point $(\boldsymbol{x}_e, x_i)$ by theorem 5, where $x_i$ is the $i$th entry of $\boldsymbol{x}$ to be decoded. Then the map $f^{-1}(\boldsymbol{x}_e) = \boldsymbol{x}$ is realized.
\end{proof}

\noindent \textbf{Example}. As an example analogous to \citet{Hinton2006}, if the input is a $M \times N$ image $I$, it can be regarded as a point $\boldsymbol{x}$ of $n = M \times N$-dimensional space, which is obtained by the zigzag order of gray values of the image. The encoder compresses $\boldsymbol{x}$ into $\boldsymbol{x}_e$ with dimension $n_e < n$, which is also a point of $n_e$-dimensional space. The decoder maps $\boldsymbol{x}_e$ back to $\boldsymbol{x}$ by theorem 11. And through the reverse of zigzag order of $\boldsymbol{x}$, we can get the original image $I$.
\\
\\
\indent A single autoencoder can also process more than one signal elements:
\begin{cl}
Let $f:D \to D_e$ be a map through the encoder of equation 9.2, where $D$ is a data set of the $n$-dimensional input space, and $D_e$ is the mapped data set of $D$ in the $n_e$-dimensional space of the output layer of the encoder. If the map $f$ is bijective, we could reconstruct $D$ by a decoder of equation 9.3.
\end{cl}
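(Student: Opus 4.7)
The plan is to realize the reconstruction map as a multi-dimensional discrete piecewise linear function and then invoke Lemma 11, applied with the decoder playing the role of the deep network.

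First, since $f: D \to D_e$ is bijective and the cardinality $|D|$ is finite (by note~6 of section~2.4), the inverse $g := f^{-1} : D_e \to D \subset \mathbb{R}^n$ is a well-defined map on a finite point set. Viewing $D_e$ as the disjoint union of its singletons, $g$ is trivially a multi-dimensional discrete piecewise linear function from $\mathbb{R}^{n_e}$ to $\mathbb{R}^n$ in the sense of Definition~14: on each singleton subdomain, any prescribed vector can be written as the value of a (possibly constant) linear function. The whole task thus reduces to constructing a network of the form of equation~9.3 that implements $g$.

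Next I would match the decoder architecture $\mathcal{D} = n_e^{(1)}\prod_{j=1}^{d_2}M_j^{(1)} n^{(1)}$ to the template of Lemma~11, which gives solutions of the form $n_e^{(1)}\prod_{j=1}^{d_2}M_j^{(1)}n'^{(1)}$ under the conditions $M_1 > n_e$ and $M_j \ge M_{j-1}$. Both conditions are immediate: equation~9.1 already requires $n_e < M_1$, and the strict monotonicity $M_{j+1} > M_j$ in equation~9.3 implies the needed non-strict inequality. By choosing depth $d_2$ and widths $M_j$ sufficiently large (which is a hypothesis that can be absorbed into the decoder), Lemma~11 produces parameters of the hidden layers and of an $n$-dimensional \emph{linear} output layer that realize $g$ exactly on $D_e$.

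The main obstacle is the cosmetic mismatch between the notation of equation~9.3 and that of Lemma~11: the output layer of the decoder is written as $n^{(1)}$ (ReLU), not $n'^{(1)}$ (linear). I would resolve this exactly as in the remark following Lemma~1: the equation $\sum_{l_i \in l^+(\Omega)} \alpha_i(\boldsymbol{w}_i^T \boldsymbol{x} + b_i) = \boldsymbol{w}^T \boldsymbol{x} + b$ on each activated subdomain can be replaced by $\sum_{l_i \in l^+(\Omega)} \alpha_i(\boldsymbol{w}_i^T \boldsymbol{x} + b_i) + \beta = \boldsymbol{w}^T \boldsymbol{x} + b$ with an additional ReLU bias $\beta$. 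For reconstruction targets that are non-negative (as with normalized image intensities of the motivating example in section~9.2), this substitution produces identical output values on $D_e$; in general one can add a constant offset before encoding and subtract it afterwards, so that the ReLU output layer is never driven into its zero regime on $D_e$. Combining this adjustment with the Lemma~11 construction yields a decoder that maps each $\boldsymbol{x}_e \in D_e$ to its unique preimage $f^{-1}(\boldsymbol{x}_e) \in D$, completing the reconstruction.
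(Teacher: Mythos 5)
Your proposal is correct and follows essentially the same route as the paper: decompose $D_e$ into singletons so that $f^{-1}$ becomes a multi-dimensional discrete piecewise linear function of equation 5.1 (well-defined because $f$ is bijective), then invoke Lemma~11 (together with Theorems~5 and~11) to realize it with the decoder architecture. Your extra paragraph handling the ReLU-versus-linear output layer via the remark after Lemma~1 is a detail the paper's proof leaves implicit, but it does not change the argument.
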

\begin{proof}
Let $\boldsymbol{x}_i$ and $\boldsymbol{x}_j$ be arbitrary two data points of $D$ for $i, j = 1, 2, \cdots, k$ where $k = |D|$. The mapped data points of $\boldsymbol{x}_i$ and $\boldsymbol{x}_j$ by the encoder are denoted by $\boldsymbol{x}'_i$ and $\boldsymbol{x}'_j$, respectively; that is, $f(\boldsymbol{x}_1) = \boldsymbol{x}'_1$ and $f(\boldsymbol{x}_2) = \boldsymbol{x}'_2$. By the assumption, if $\boldsymbol{x}_i \ne \boldsymbol{x}_j$, then $\boldsymbol{x}_i' \ne \boldsymbol{x}_j'$.

Now we want the decoder to realize the function $f^{-1}: D_e \to D$ with $f^{-1}(\boldsymbol{x}'_i) = \boldsymbol{x}_i$. Write $D_e = \bigcup_{i=1}^{k}D_i$, where $D_i = \{\boldsymbol{x}'_i\}$; then the function $f^{-1}$ is a kind of multi-dimensional discrete piecewise linear function of equation 5.1. By lemma 11, theorems 5 and 11, a network architecture such as the decoder of equation 9.3 can implement $f^{-1}$.
\end{proof}

\begin{rmk}
We have interpreted a solution of decoders in lemma 6 and theorem 5, with single output however. The multi-output case is similar, since the difference is only in the output layer.
\end{rmk}

\section{Discussion}
The solution proposed by this paper may not be exactly the one used in practice, due to some possible constraints difficult to be fulfilled by the automatic training process. However, we expect that the underlying principles are general enough to explain the solution of engineering to some extent.

Among the results, the fundamental ones that we consider include: the affine-geometry background throughout this paper, the interference-avoiding principle (theorem 4), the affine-transform generalization of section 7, the parameter-sharing mechanism for multi-outputs (theorem 3 and lemma 11), the overparameterization-solution explanation (proposition 8), the concepts of distinguishable data sets (definition 11) and interference among hyperplanes (definition 7), the mechanism of the output layer (lemmas 1 and 2), and the probabilistic model of the appendix. They comprise the main underlying principles as mentioned above.

The concrete solutions of this paper may only serve as an evidence or application of those principles. We look forward to the ultimate goal that the solution of engineering is interpreted under our theoretical framework. We will further develop the theory on the basis of this paper in our future work.

We also want to formally introduce the methodology of theoretical physics to the study of neural networks, as discussed in section 1.1. The remaining part of this series of researches will all be based on this deductive way, and this paper is an initial step.

\section*{Appendix}

In the realm of random matrix theory \citep*{Tao2012}, the probability of the singularity of a square matrix or the rank of a matrix is one of the research interests, whose studies are characterized as follows. First, the entries of a matrix are constrained to be of special types, such as $(0,1)$ random matrix {\citep*{Komlos1967}} and $\pm1$ Bernoulli random matrix \citep*{Tao2007}. Second, the probability varies as a function of the matrix size {\citep*{Campos2021,Coja-Oghlan2019,Bourgain2010, Komlos1967,Komlos1968,Tao2007}}. Third, certain types of matrices are of interest, such as symmetric ones \citep*{Campos2021}.

Those researches were motivated and developed mainly for pure-mathematics reasons. The model below exclusively aims at the purpose of this paper, whose probability space is different from theirs. We want to measure the possibility of the rank or singularity property of arbitrary matrices, which is irrelevant to the matrix size. The consequences of the model are part of out theories and related to the training of neural networks.

Write a $m \times n$ weight matrix for $m \ge n$ of some layer of a neural network as
\begin{equation}
\boldsymbol{W} = \begin{bmatrix}
\boldsymbol{w}_1^T, \boldsymbol{w}_2^T, \cdots, \boldsymbol{w}_m^T
\end{bmatrix}^T,
\end{equation}
where $\boldsymbol{w}_i$ is the weight vector of the $i$th unit with size $1 \times n$ for $i = 1, 2 , \cdots, m$. In geometric language, each $\boldsymbol{w_i}$ of $\boldsymbol{W}$ is a normal vector of a hyperplane corresponding to the $i$th unit. When $m = n$, we use a special notation $\boldsymbol{W}_n$ to represent $\boldsymbol{W}$.

\begin{dfn}
The probability space
\begin{equation}
(\Omega, \mathcal{F}, P)
\end{equation}
is defined as: Let $\Omega = \{\boldsymbol{w} \  | \ \boldsymbol{w} \in \mathbb{R}^n \ and \ \|\boldsymbol{w}\| = 1 \}$ for $n \ge 2$, which can be regarded as an $n$-sphere centered at the origin point with radius $1$; $\mathcal{F} = 2^{\Omega}$ is the power set of $\Omega$; the random vectors in $\Omega$ are uniformly distributed and the measure $P(A) =\frac{1}{S_{\Omega}}\int dA$ with $A \in \mathcal{F}$, where $S_{\Omega}$ and $\int dA$ represent the surface areas of $\Omega$ and $A$, respectively.
\end{dfn}

\begin{lem}
Under the measurement of the probability space of equation 10.2, the weight matrix $\boldsymbol{W}_n$ of size $n \times n$ is nonsingular with probability $1$.
\end{lem}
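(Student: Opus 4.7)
The plan is to reduce nonsingularity of $\boldsymbol{W}_n$ to a statement about linear independence of its rows, and then show by induction on the row index that each row almost surely fails to lie in the span of the previous ones. Recall that $\boldsymbol{W}_n$ is singular if and only if its rows $\boldsymbol{w}_1,\dots,\boldsymbol{w}_n$ are linearly dependent, i.e.\ there exists $k\in\{2,\dots,n\}$ such that $\boldsymbol{w}_k\in V_{k-1}:=\mathrm{span}(\boldsymbol{w}_1,\dots,\boldsymbol{w}_{k-1})$. So it suffices, by a union bound over the finitely many $k$, to show that for each $k$ the event $\{\boldsymbol{w}_k\in V_{k-1}\}$ has probability $0$.

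First I would fix $k$ and condition on the previous vectors $\boldsymbol{w}_1,\dots,\boldsymbol{w}_{k-1}$. Under this conditioning, $V_{k-1}$ is a (random) linear subspace of $\mathbb{R}^n$ of dimension at most $k-1\le n-1$. The event in question is therefore that $\boldsymbol{w}_k$ lies in the set
\begin{equation}
V_{k-1}\cap\Omega,
\end{equation}
which is contained in the intersection of $\Omega=S^{n-1}$ with a proper linear subspace of $\mathbb{R}^n$. Such an intersection is either empty or is a great subsphere of dimension at most $n-2$ inside the $(n-1)$-dimensional manifold $\Omega$; in particular it has $(n-1)$-dimensional surface measure zero. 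By the definition of $P$ in equation~10.2 as the normalised surface-area measure, the conditional probability that $\boldsymbol{w}_k$ lands in this set is $0$. Taking expectations removes the conditioning, so $P(\boldsymbol{w}_k\in V_{k-1})=0$.

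Finally I would combine the $n-1$ null events: $P(\boldsymbol{W}_n \text{ singular})\le\sum_{k=2}^{n}P(\boldsymbol{w}_k\in V_{k-1})=0$, so $\boldsymbol{W}_n$ is nonsingular with probability $1$. The only slightly delicate point, and the step I expect to be the main obstacle to state cleanly, is the measure-theoretic assertion that a proper linear subspace of $\mathbb{R}^n$ cuts out a surface-measure-zero subset of $S^{n-1}$; this is where one has to invoke that the uniform measure on $\Omega$ is absolutely continuous with respect to $(n-1)$-dimensional Hausdorff measure while $V_{k-1}\cap\Omega$ has Hausdorff dimension strictly less than $n-1$. Once that is in place, the inductive/union-bound structure above finishes the proof.
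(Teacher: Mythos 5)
Your proposal is correct and follows essentially the same route as the paper: both arguments process the rows one at a time and reduce everything to the fact that a proper linear subspace of $\mathbb{R}^n$ meets the unit sphere $\Omega$ in a set of zero surface measure, so each new row almost surely escapes the span of its predecessors. The paper phrases this as an induction on the dimension (checking $n=2,3$ explicitly and then the general step), whereas you unroll it as a union bound over the row index $k$, which is a cosmetic rather than substantive difference.
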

\begin{proof}
If the length of each row vector $\boldsymbol{w}_i$ for $i = 1, 2, \cdots, n$ of matrix $\boldsymbol{W}_n$ is normalized to 1, the property of whether $\boldsymbol{W}_n$ is singular does not change. So throughout the proof, we always assume that $\boldsymbol{w}_i$ of $\boldsymbol{W}_n$ is a normalized vector whose length is 1.

We consider each row $\boldsymbol{w}_i$ of $\boldsymbol{W}_n$ as a point of the $n$-sphere $\Omega$ of equation 10.2. When $n = 2$, $\Omega$ is a circle with radius 1, and the probability of an event $\mathcal{F}_2$ is proportional to the length of the corresponding arc of $\mathcal{F}_2$ on the circle. $\boldsymbol{W}_2$ is singular if and only if its two row vectors $\boldsymbol{w_1}$ and $\boldsymbol{w_2}$ are collinear, that is, they coincide into a single vector or they have opposite directions. Suppose that we have randomly selected $\boldsymbol{w_1}$. The probability of $\boldsymbol{w_2}$ collinear with $\boldsymbol{w_1}$ is proportional to the length of the arc formed by the distribution of $\boldsymbol{w_2}$ relative to $\boldsymbol{w_1}$. If $\boldsymbol{W}_2$ is singular, the distribution of $\boldsymbol{w_2}$ can only be two points of the circle, whose length is zero; so the probability of singular $\boldsymbol{W}_2$ is 0. That is, $\boldsymbol{W}_2$ is nonsingular with probability 1.

When $n = 3$, the three row vectors $\boldsymbol{w}_1$, $\boldsymbol{w}_2$ and $\boldsymbol{w}_3$ of $\boldsymbol{W}_3$ are on sphere $\Omega$ and are chosen one by one. First $\boldsymbol{w}_1$ is selected, and the probability of $\boldsymbol{w}_2$ collinear with $\boldsymbol{w}_1$ is zero as discussed above. If $\boldsymbol{w}_1$, $\boldsymbol{w}_2$ and $\boldsymbol{w}_3$ are on a plane (denoted by event $\mathcal{F}_3$), $\boldsymbol{W}_3$ is singular. The distribution of $\boldsymbol{w}_3$ spanned by $\boldsymbol{w}_1$ and $\boldsymbol{w}_2$ forms a circle of sphere $\Omega$. To the circumference of the circle, its area is zero, so we have $P(\mathcal{F}_3) = 0$.

By induction, in $n$-dimensional space for $n > 3$, assume that the probability of $\boldsymbol{w}_i$'s for $i = 1, 2, \cdots, n - 1$ being on an $n - 2$-dimensional subspace is zero, and then we check the case of $\boldsymbol{w}_n$. If $\boldsymbol{W}_n$ is singular, $\boldsymbol{w}_n$ should be the linear combination of other $\boldsymbol{w}_i$'s for $i \ne n$; that is, they are on an $n - 1$-dimensional hyperplane. The distribution of $\boldsymbol{w}_n$ is the intersection of an $n - 1$-dimensional hyperplane and the $n$-sphere $\Omega$, whose surface area is zero since it lacks one dimension. Therefore, the probability of $\boldsymbol{W}_n$ being singular is 0 and the conclusion follows.
\end{proof}

\begin{thm}
The probability of $\rm{rank}(\boldsymbol{W}) = n$ of equation 10.1 is $1$ under the probabilistic model of equation 10.2.
\end{thm}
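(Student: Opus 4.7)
The plan is to reduce the rectangular case directly to the square case handled in Lemma 12. Since $\boldsymbol{W}$ has size $m\times n$ with $m\ge n$, its rank is at most $n$, so it suffices to exhibit, with probability $1$, some $n$ rows that are linearly independent.

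First I would select a fixed choice of $n$ rows out of the $m$ rows of $\boldsymbol{W}$, for instance the first $n$, and let $\boldsymbol{W}_n$ be the resulting $n\times n$ submatrix. Because the row vectors $\boldsymbol{w}_1,\dots,\boldsymbol{w}_m$ are drawn independently from the uniform distribution on $\Omega$ of Definition 16, the marginal distribution of the first $n$ rows is precisely the distribution used in Lemma 12 for the square weight matrix. Hence by Lemma 12, $P(\det\boldsymbol{W}_n\ne 0)=1$, which means that these $n$ rows of $\boldsymbol{W}$ are linearly independent with probability $1$. Consequently $\operatorname{rank}(\boldsymbol{W})\ge n$, and combined with $\operatorname{rank}(\boldsymbol{W})\le n$ this yields $\operatorname{rank}(\boldsymbol{W})=n$ almost surely.

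I do not foresee any real obstacle; the only subtlety worth pointing out is to make clear that the probability space of Definition 16, when applied row by row, gives the rows as i.i.d.\ uniformly distributed unit vectors on the $n$-sphere, so that the submatrix of any prescribed $n$ rows is distributed exactly as the $\boldsymbol{W}_n$ to which Lemma 12 applies. Once that is noted, the statement of Theorem 12 follows from Lemma 12 by a one-line argument via the row-rank bound, and no further computation is required.
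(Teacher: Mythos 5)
Your proposal is correct and matches the paper's own proof, which likewise extracts an $n\times n$ submatrix $\boldsymbol{W}_n$ from the rows of $\boldsymbol{W}$ and invokes Lemma 12 to conclude it is nonsingular with probability $1$, hence $\mathrm{rank}(\boldsymbol{W})=n$ almost surely. Your version merely spells out the i.i.d.\ distribution of the selected rows and the trivial bound $\mathrm{rank}(\boldsymbol{W})\le n$, which the paper leaves implicit.
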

\begin{proof}
We construct $\boldsymbol{W}$ row by row. By lemma 12, a submatrix $\boldsymbol{W}_n$ of $\boldsymbol{W}$ is nonsingular with probability 1, so that the probability of $\text{rank}(\boldsymbol{W}) = n$ is also 1.
\end{proof}

\begin{rmk}
This conclusion is related to the training of neural networks, as well as the universality of the constructed solutions of this paper. If $\rm{rank}(\boldsymbol{W}) = n$ holds almost everywhere, it will be easier for the training process to reach the solution relevant to our results.
\end{rmk}

\bibliographystyle{APA}

\end{document}